\documentclass[final,12pt]{colt2026}

\usepackage{graphicx}
\usepackage{xcolor}
\usepackage{enumerate}
\usepackage{color}
\definecolor{darkblue}{rgb}{0.0,0.0,0.2}
\definecolor{gray}{gray}{0.5}
\definecolor{lightred}{rgb}{1,0.6,0.6}
\definecolor{darkgreen}{rgb}{0,0.5,0}

\newcommand{\Comments}{0}
\let\todo\undefined 
\usepackage[colorinlistoftodos,textsize=tiny]{todonotes}
\newcommand{\mynote}[2]{\ifnum\Comments=1\textcolor{#1}{#2}\fi}
\newcommand{\mytodo}[2]{\ifnum\Comments=1
  \todo[linecolor=#1!80!black,backgroundcolor=#1,bordercolor=#1!80!black]{#2}\fi}

\ifnum\Comments=1               
\else
\fi

\newif\ifspacehack
\usepackage{natbib}
\usepackage{hyperref}
\hypersetup{
    colorlinks = blue,
    breaklinks,
    linkcolor = blue,
    citecolor = blue,
    urlcolor  = blue,
}
\usepackage{url} 
\usepackage{graphicx}
\usepackage{mathtools}
\usepackage{footnote}
\usepackage{float}
\usepackage{xspace}
\usepackage{multirow}
\usepackage{xcolor}
\usepackage{wrapfig}
\usepackage{framed}
\usepackage{bbm}
\usepackage[most]{tcolorbox}

\usepackage{footnote}
\usepackage{nicefrac}
\usepackage{makecell}
\usepackage{amssymb}
\usepackage{bm}
\makesavenoteenv{tabular}
\makesavenoteenv{table}

\renewcommand{\tilde}{\widetilde}
\renewcommand{\hat}{\widehat}

\def \R {\mathbb{R}}

\newcommand{\calL}{{\mathcal{L}}}

\newcommand{\calS}{{\mathcal{S}}}
\newcommand{\calT}{{\mathcal{T}}}

\newcommand{\Reg}{{\mathrm{Reg}}}

\newcommand{\E}{{\mathbb{E}}}

\newcommand{\pbar}{\bar{p}}

\newcommand{\inner}[1]{ \left\langle {#1} \right\rangle }

\newcommand{\diag}{\textrm{diag}}

\newcommand{\order}{\mathcal{O}}

\newcommand{\ones}{\boldsymbol{1}}

\newcommand{\Var}{\mathrm{Var}}

\DeclareMathOperator*{\argmin}{argmin}
\DeclareMathOperator*{\argmax}{argmax}

\newcommand{\field}[1]{\mathbb{#1}}

\newcommand{\fR}{\field{R}}

\newcommand{\fB}{\field{B}}

\newcommand{\otil}{\ensuremath{\tilde{O}}}

\newcommand{\rbr}[1]{\left(#1\right)}
\newcommand{\sbr}[1]{\left[#1\right]}
\newcommand{\cbr}[1]{\left\{#1\right\}}

\usepackage{lipsum,booktabs}
\usepackage{amsmath,mathrsfs,amssymb,amsfonts,bm,enumitem}
\usepackage{rotating}
\usepackage{pdflscape}
\usepackage{hyperref,url}
\hypersetup{
    colorlinks,
    breaklinks,
    linkcolor = blue,
    citecolor = blue,
    urlcolor  = blue,
}
\allowdisplaybreaks
\usepackage{appendix}
\usepackage{multirow,makecell}

\renewcommand{\tilde}{\widetilde}
\renewcommand{\hat}{\widehat}

\def \A {\mathcal{A}}

\def \E {\mathbb{E}}

\def \L {\mathcal{L}}

\def \O {\mathcal{O}}

\def \R {\mathbb{R}}

\def \Pr {\mathsf{Pr}}

\usepackage{mathtools}

\newtcolorbox{promptbox}[1][]{
  colback=blue!5!white, colframe=blue!75!black,
  fonttitle=\bfseries, title=Prompt,
  left=2mm, right=2mm, top=2mm, bottom=2mm,
  boxrule=0.5mm,  
  coltitle=black, 
  colbacktitle=blue!15!white, 
  breakable,      
  #1
}

\usepackage{graphicx,color}

\definecolor{wine_red}{RGB}{228,48,64}
\definecolor{DSgray}{cmyk}{0,1,0,0}

\usepackage{prettyref}
\newcommand{\pref}[1]{\prettyref{#1}}

\newcommand{\savehyperref}[2]{\texorpdfstring{\hyperref[#1]{#2}}{#2}}
\newrefformat{eq}{\savehyperref{#1}{Eq. \textup{(\ref*{#1})}}}
\newrefformat{eqn}{\savehyperref{#1}{Eq.~(\ref*{#1})}}
\newrefformat{lem}{\savehyperref{#1}{Lemma~\ref*{#1}}}
\newrefformat{event}{\savehyperref{#1}{Event~\ref*{#1}}}
\newrefformat{def}{\savehyperref{#1}{Definition~\ref*{#1}}}
\newrefformat{line}{\savehyperref{#1}{Line~\ref*{#1}}}
\newrefformat{thm}{\savehyperref{#1}{Theorem~\ref*{#1}}}
\newrefformat{tab}{\savehyperref{#1}{Table~\ref*{#1}}}
\newrefformat{corr}{\savehyperref{#1}{Corollary~\ref*{#1}}}
\newrefformat{cor}{\savehyperref{#1}{Corollary~\ref*{#1}}}
\newrefformat{sec}{\savehyperref{#1}{Section~\ref*{#1}}}
\newrefformat{app}{\savehyperref{#1}{Appendix~\ref*{#1}}}
\newrefformat{assum}{\savehyperref{#1}{Assumption~\ref*{#1}}}
\newrefformat{asm}{\savehyperref{#1}{Assumption~\ref*{#1}}}
\newrefformat{asp}{\savehyperref{#1}{Assumption~\ref*{#1}}}
\newrefformat{ex}{\savehyperref{#1}{Example~\ref*{#1}}}
\newrefformat{fig}{\savehyperref{#1}{Figure~\ref*{#1}}}
\newrefformat{alg}{\savehyperref{#1}{Algorithm~\ref*{#1}}}
\newrefformat{rem}{\savehyperref{#1}{Remark~\ref*{#1}}}
\newrefformat{conj}{\savehyperref{#1}{Conjecture~\ref*{#1}}}
\newrefformat{prop}{\savehyperref{#1}{Proposition~\ref*{#1}}}
\newrefformat{proto}{\savehyperref{#1}{Protocol~\ref*{#1}}}
\newrefformat{prob}{\savehyperref{#1}{Problem~\ref*{#1}}}
\newrefformat{claim}{\savehyperref{#1}{Claim~\ref*{#1}}}
\newrefformat{que}{\savehyperref{#1}{Question~\ref*{#1}}}
\newrefformat{op}{\savehyperref{#1}{Open Problem~\ref*{#1}}}
\newrefformat{fn}{\savehyperref{#1}{Footnote~\ref*{#1}}}

\def \epsilon {\varepsilon}

\newcommand{\ellzo}{\ell^{\text{0-1}}}
\newcommand{\ellsq}{\ell^{\text{sq}}}
\newcommand{\hatellzo}{\hat{\ell}^{\text{0-1}}}

\newcommand{\symdiff}{\mathbin{\Delta}}
\newcommand{\rot}[2]{R(#1,#2)}
\newcommand{\loss}{\ell}

\DeclareMathOperator{\FTPL}{FTPL}
\DeclareMathOperator{\BTPL}{BTPL}

\newcommand{\lossvec}{\boldsymbol{\ell}} 

\title[Toward Simultaneously Optimal Regret in U-Calibration]{Toward Simultaneously Optimal Regret in U-Calibration}
\usepackage{times}
\usepackage{algorithm}
\usepackage{algpseudocode}

\coltauthor{
 \Name{Rafael Frongillo} \Email{raf@colorado.edu}\\
 \addr University of Colorado Boulder
 \AND
 \Name{Haipeng Luo} \Email{haipengl@usc.edu}\\
 \addr University of Southern California and Google Research
 \AND
 \Name{Nishant A. Mehta} \Email{nmehta@uvic.ca}\\
 \addr University of Victoria
 \AND
 \Name{Jon Schneider} \Email{jschnei@google.com}\\
 \addr Google Research
}

\begin{document}

\maketitle

\begin{abstract}
U-calibration studies online forecasting algorithms whose predictions can be consumed by any unknown downstream agent, guaranteeing sublinear regret simultaneously for all proper loss functions. Existing U-calibration algorithms achieve worst-case optimal $O(\sqrt{T})$ regret for every bounded proper loss, but they fail to adapt to easier losses: as we show, even for smooth losses such as squared loss, they incur $\Omega(\sqrt{T})$ regret instead of the optimal $O(\log T)$ regret.

In this work, we show that this limitation is not inherent. Specifically, we design a single forecast algorithm that simultaneously achieves $\tilde O(\sqrt{T})$ regret for every bounded proper loss and $O(\log T)$ regret for every bounded smooth proper loss. More generally, our algorithm also attains logarithmic regret for losses that are smooth relative to the log-barrier, which include several non-Lipschitz examples.
Our approach is based on a novel variant of Follow-the-Perturbed-Leader (FTPL) in which perturbations are applied directly in the prediction space using \emph{self-concordant noise}. 
The resulting analysis also departs substantially from prior FTPL analyses due to the complex nature of this noise and may be of independent interest.
\end{abstract}

\begin{keywords}
  U-calibration, proper loss function, no-regret, Follow The Perturbed Leader. 
\end{keywords}

\section{Introduction}

In recent years, a growing literature demonstrates the feasibility of ``prediction as a service'' in online learning. In this paradigm, a single algorithm publishes probabilistic predictions with the intent that they will be consumed by an unknown downstream decision maker (who in turn must select an action whose utility depends on the outcome of the event being predicted). The goal of the algorithm is that their predictions should be \emph{trustworthy}: the predictions should be sufficiently accurate so that any downstream decision maker should willingly trust these predictions over any simpler baselines that they may have access to. In particular, these downstream decision makers should always have low regret compared to following any static (``base rate'') prediction. 

More formally, consider an algorithm, that in every round $t = 1, 2, \dots, T$ must produce a probabilistic prediction $P_t \in \Delta_{K}$ of a random outcome $y_t \in \{e_1, \ldots, e_K\}$ taking one of $K$ values. It can be shown that the utility of any downstream agent corresponds to some proper loss 
function $\ell(p, y): \Delta_{K} \times \{e_1, \ldots, e_K\} \rightarrow \mathbb{R}$ (see \pref{sec:preliminaries} for formal definition), and the regret of this downstream agent is given by
\[
\Reg_\ell = \sum_{t=1}^T \ell(P_t, y_t) - \min_{p^\star\in \Delta_K}\sum_{t=1}^T \ell(p^\star, y_t).
\]

The goal of U-Calibration is to produce one sequence of online predictions that guarantees sublinear $\Reg_{\ell}$ for \emph{every} (bounded) proper loss $\ell$. \cite{kleinberg2023u-calibration} showed that this is in fact possible, providing an algorithm guaranteeing $\Reg_{\ell} = O(K\sqrt{T})$, with \cite{luo2024optimal} later improving the dependence on $K$ to the worst-case optimal $O(\sqrt{KT})$. Taken together, these results show the promise of a single prediction service that could obviate the need for each decision maker to run their own learning algorithm specifically tailored to their particular decision problem.

However, these previous results elide a crucial detail: not all downstream agents necessarily must incur $\Omega(\sqrt T)$ regret. For example, for the quadratic loss $\ell(p, y) = (p-y)^2$, it is known that $\Reg_{\ell} = O(\log T)$ is achievable by simply predicting the historical average prediction so far. Such an agent may therefore be unsatisfied with the downstream guarantees of the existing $U$-calibration algorithms, and opt instead to run their own learning algorithm. This naturally leads to the following question: \emph{does there exist an online forecasting algorithm which achieves (asymptotically in $T$) the optimal regret rate for every proper loss $\ell$}?

\subsection{Our Contributions}

We give a positive answer to this question for a wide class of loss functions for which $O(\log T)$ regret rates are known. Specifically, we present an efficient online prediction algorithm (\pref{alg:multiclass}) which achieves the following guarantees.

\begin{theorem}\label{thm:multiclass-new-main} [Restatement of \pref{thm:multiclass-new}]
For any bounded proper loss $\ell$, \pref{alg:multiclass} guarantees $\Reg_\ell = \otil\rbr{K^{5/4}\sqrt{T}}$. Simultaneously, for any $\beta$-smooth and bounded proper loss $\ell$, \pref{alg:multiclass} guarantees $\Reg_\ell = O(\beta\log T + \beta \sqrt{K} \log K)$.
\end{theorem}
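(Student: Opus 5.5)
The plan is to analyze \pref{alg:multiclass} as a Follow-the-Perturbed-Leader scheme that perturbs directly in prediction space. At round $t$ it outputs $P_t = \E[\hat p_t]$, where $\hat p_t = \argmin_{p\in\Delta_K} \sum_{s<t}\ell(p,y_s) + \langle \xi, p\rangle$. For a proper loss, $\hat p_t$ is a randomized version of the empirical frequency $n_t/(t-1)$, and the noise $\xi$ is drawn from a self-concordant distribution adapted to the log-barrier on $\Delta_K$. By properness, the minimizer of any proper loss on a perturbed count vector does not depend on $\ell$. So the prediction rule is common to all losses, and each claimed bound reduces to a loss-specific analysis of one fixed random sequence.

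\textbf{Bounded proper losses.} I would start from the standard Be-The-Perturbed-Leader decomposition,
\[
\Reg_\ell \le \E\Big[\sum_t \big(\ell(\hat p_t,y_t)-\ell(\hat p_{t+1},y_t)\big)\Big] + \E[\text{perturbation range}].
\]
I would then use the Schur-convexity representation of proper losses, $\ell(p,y) = -G(p)-\langle\nabla G(p), y-p\rangle$ with $G$ concave, bounded, and an extreme point (V-shaped) decomposition. This lets me reduce to ``threshold/ranking'' losses, as in \cite{luo2024optimal}, for which the stability term equals a probability that the perturbed leader changes. That probability is bounded by the density ratio of the noise between counts $n$ and $n+e_{y_t}$. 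The self-concordant noise gives a ratio of about $1+O(\sqrt{K}/\sqrt{t})$ times $\mathrm{polylog}$. Summing over $t$ and adding the perturbation scale then yields $\otil(K^{5/4}\sqrt T)$; I expect the $K^{5/4}$ to come from balancing a $K^{1/4}$ scale against a $\sqrt{K}$ density-ratio factor.

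\textbf{$\beta$-smooth losses.} I would write
\[
\Reg_\ell = \sum_t \big(\ell(P_t,y_t)-\ell(\bar p_t,y_t)\big) + \big[\text{regret of follow-the-leader } \bar p_t = n_t/(t-1)\big].
\]
The follow-the-leader term is $O(\beta\log T)$ by the standard smooth-proper-loss argument: the stability $\|\bar p_t-\bar p_{t+1}\|\lesssim 1/t$ combined with strong-convexity of the Bregman divergence of $-G$ relative to $\beta$. For the first term I would use a second-order expansion of $\ell(\cdot,y)$ around $\bar p_t$, averaged over $y\sim$ the true outcome and exploiting properness. Because $P_t$ is the mean of the perturbed leader, the first-order term becomes $\E[\hat p_t]-\bar p_t$, a bias. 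I would then show that the self-concordant noise is designed so that this bias is $O(1/t)$ in a log-barrier-weighted norm, and that $\E\|\hat p_t-\bar p_t\|^2_{\nabla^2 G} \lesssim \beta/t$. Summing gives $\beta\log T$, and the transient rounds with $t\lesssim\sqrt{K}\log K$, where the perturbation dominates, contribute $\beta\sqrt K\log K$.

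\textbf{Main obstacle.} The hardest step is controlling the bias and variance of $\hat p_t$ under the self-concordant noise near the simplex boundary, where $\nabla^2 G$ can blow up for log-barrier-smooth losses. I would try to exploit the fact that the noise density is a power of the log-barrier's exponential, so that the perturbed leader follows an explicit Dirichlet-like law, and compute its moments directly.
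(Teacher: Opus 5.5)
Your proposal analyzes a different algorithm from \pref{alg:multiclass}, and its key steps do not go through for the actual one. In \pref{alg:multiclass} the prediction is the \emph{random} point $P_t=\bar p_{t-1}+\diag(\bar p_{t-1})S_t$, which is uniform on the ellipsoid $E(\bar p_{t-1})=\{x\in\Delta_K:\sum_i(x_i/\bar p_{t-1,i}-1)^2\le\sigma^2\}$. It is not a mean $\E[\hat p_t]$. A deterministic forecaster cannot get sublinear regret even for 0-1 loss, because the adversary can force a mistake every round. There is also no Dirichlet-like law. Read in count space, the perturbation is $(t-1)Z_t$, which grows with $t$ rather than being a fixed $\xi$. (Adding $\langle\xi,p\rangle$ to the objective is also not the same as perturbing the counts; only the latter has an $\ell$-independent minimizer.)

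Both terms of your decomposition break as a result.

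\emph{Stability.} A density-ratio argument cannot work, because the uniform laws on $E(\bar p_{t-1})$ and $E(\bar p_t)$ have different supports. The paper instead bounds the stability term by $2\sum_t\|F_{t+1}-F_t\|_{\mathrm{TV}}$ for any $[-1,1]$-valued loss, so no V-shaped decomposition is needed. It controls each TV distance through the explicit volume formula for $E(p)$ and the containment of $E(p)\symdiff E(p')$ in the thin shell $\{\sigma^2-Q_{\sup}\le Q_p\le\sigma^2+Q_{\sup}\}$. This gives $O\bigl(K^{3/2}/(\sigma t)+K/(\sigma c_{t-1,i_t})\bigr)$ per round. The point you miss is that the observed coordinate moves by relative amount $1/c_{t-1,i_t}$, not $1/t$. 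One therefore discards the $O(K^2/\sigma)$ rounds with small counts and uses $\sum_t 1/c_{t-1,i_t}=O(K\log T)$, giving $\otil(K^2/\sigma)$.

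\emph{BTPL term.} Because the noise is incremental, you do not get a ``perturbation range'' term but $2\sum_t\|tZ_{t+1}-(t-1)Z_t\|_1$, from Be-the-Leader on perturbed outcomes. With independent draws each summand is of order $t$ times the noise size, so the bound is quadratic in $T$. The paper needs the rotation coupling $S_{t+1}=R(\bar p_{t-1},\bar p_t)S_t$ to get $\|Z_{t+1}-Z_t\|_1=O(\sqrt K\sigma/t)$, hence $O(T\sqrt K\sigma)$.

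The rate $K^{5/4}\sqrt T$ then comes from balancing $K^2/\sigma$ against $T\sqrt K\sigma$ at $\sigma=K^{3/4}/\sqrt T$. Your ``$1+O(\sqrt{K/t})$ density ratio'' and ``$K^{1/4}$ versus $\sqrt K$'' heuristics are not derived from anything and do not produce this rate.

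For the smooth part, the mechanism you describe is also wrong. There is no bias: $\E[Z_t]=0$ exactly, so the first-order term vanishes. $\beta$-smoothness then leaves a per-round cost $\frac{\beta}{2}\E\|\diag(\bar p_{t-1})S_t\|_2^2\le\frac{\beta}{2}\E\|S_t\|_\infty^2=O(\beta\sigma^2\log K/K)$, using subgaussianity of a uniform vector on a $(K-1)$-dimensional ball. This cost is \emph{constant} in $t$, so your claim $\E\|\hat p_t-\bar p_t\|^2\lesssim\beta/t$ is false here. Summing it over all $T$ rounds with $\sigma^2T=K^{3/2}$ is exactly where the $O(\beta\sqrt K\log K)$ term comes from, not from transient early rounds. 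Adding the $O(\beta\log T)$ Follow-the-Leader bound of \citet{luo2024optimal} finishes the proof, since smoothness on the simplex implies Lipschitzness. The boundary blow-up you name as the main obstacle does not arise for $\beta$-smooth losses. Even for log-barrier-smooth losses, the multiplicative form of the noise gives $\bar p_{t-1,i}/\xi_i\le 1/(1-\sigma)$ immediately.
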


Here, a $\beta$-smooth proper loss is a loss that is $\beta$-smooth in its first argument, i.e., that satisfies $\nabla^2_p \ell(p, y) \preceq \beta I$. 
In fact, we additionally show (in \pref{prop:multiclass_relative_smooth}) that \pref{alg:multiclass} attains similar logarithmic regret guarantees for a wider class of loss functions that are 
$\beta$-smooth relative to log barrier (see \pref{sec:preliminaries} for a definition).

In contrast to \pref{thm:multiclass-new-main}, all previously existing algorithms for U-calibration (i.e., those by \cite{kleinberg2023u-calibration} and \cite{luo2024optimal}) incur $\Omega(\sqrt{T})$ regret for smooth losses, even in the first $O(\sqrt{T})$ rounds (\pref{thm:lower_bound}).
Many of these existing algorithms are variants of Follow-The-Perturbed-Leader, which can be interpreted as adding a number of fictional ``perturbation'' rounds at the beginning of the game, and then predicting the historical average from then on. Unfortunately, in order to attain $O(\sqrt{T})$ regret bounds for any bounded loss, one needs to add at least $\Omega(\sqrt{T})$ perturbation rounds at the beginning, which irrevocably ruins the regret guarantee for smooth losses.

Our Algorithm~\ref{alg:multiclass} is also a variant of the Follow-The-Perturbed-Leader algorithm (FTPL), but with the following adaptations:

\begin{itemize}
    \item First, we sidestep the issue mentioned above by applying the perturbation not to the sequence of outcomes, but to the predictions themselves. This can be interpreted as \emph{incremental} noise in outcome space---the magnitude of the noise increases as the game continues, instead of being fixed throughout. This requires a fundamentally different analysis of FTPL than in the previous works in U-calibration; in particular, we directly analyze the stability of an appropriate coupling of the perturbation sequence, as in \cite{kalai2005efficient}.

    \item Second, adding most common forms of noise directly to the predictions $P_t$ has the unfortunate side-effect of potentially causing the predictions to leave the probability simplex (e.g., predicting negative probabilities for some outcomes). For some losses, this issue is easily addressed by projecting the perturbed prediction back to the probability simplex, but it is unclear whether a single projection could weakly improve regret for all proper losses simultaneously.

    Instead, we address this issue by introducing a new form of noise for our perturbations that we term \emph{self-concordant noise} (inspired by the properties of self-concordant barriers) which is guaranteed to never leave the probability simplex. In particular, we choose $p_t$ uniformly from an ellipsoid contained in the probability simplex centered at the empirical average prediction. The prediction-dependence of this noise makes it more complex to directly analyze, and this analysis is the key technical novelty of this paper.  
\end{itemize}

\subsection{Related Work}

\paragraph{Omniprediction and Online Omniprediction} Strongly related to U-calibration is the concept of \emph{omniprediction} \citep{gopalan2022omnipredictors} which is the goal of producing predictions that simultaneously minimize a large class of losses. Several recent works have focused on the goal of providing \emph{online} omniprediction guarantees \citep{okoroafor2025near, garg2024oracle, bechavod2025online}, similar in spirit to the $O(\sqrt{KT})$ online U-calibration guarantees mentioned above. The direct analogue of this paper would involve constructing an online omniprediction algorithm with better regret rates for more amenable losses.

\paragraph{Downstream Swap Regret} Several recent papers have focused on obtaining the stronger guarantee of minimizing \emph{swap regret} for all downstream agents \citep{roth2024forecasting, hu2024calibration, lu2025sample} -- in particular, \cite{hu2024calibration} show it is possible to get $\tilde{O}(\sqrt{T})$ swap regret for all downstream agents in the binary outcome setting. \cite{luo2025simultaneous} show it is possible to get an improved bound of $\tilde{O}(T^{1/3})$ swap regret simultaneously for all proper losses with a smooth univariate form  $p \mapsto \E_{Y \sim p} [ \ell(p, Y) ]$.

We discuss some additional related work in \pref{app:more-related-work}.

\section{Setting and Preliminaries}
\label{sec:preliminaries}

We consider the following fundamental problem of sequential probabilistic predictions.
For each time $t=1, \ldots, T$, a forecaster chooses a potentially random probability distribution $P_t \in \Delta_K$, where $\Delta_K$ is the probability simplex over $K$ possible outcomes.
Simultaneously, an adversary decides an outcome $y_t \in \{e_1, \ldots, e_K\}$ where $e_i$ is the $i$-th standard basis vector in $\fR^{K}$.
At the end of time $t$, outcome $y_t$ is revealed to the forecaster.
For simplicity, we assume that the adversary is oblivious, that is, $y_t$ is independent of the forecaster's previous predictions $P_1, \ldots, P_{t-1}$.
Equivalently, we can think of the adversary picking the outcome sequence $y_1, \ldots, y_T$ ahead of time, knowing the forecaster's algorithm but not their randomness.\footnote{All our results directly generalize to an adaptive adversary (by drawing fresh noise in each time, according to~\citealp[Lemma~12]{hutter05adaptive}); see also \pref{fn:adaptive_adversary} for more details.}

For a loss function $\ell: \Delta_K \times\{e_1, \ldots, e_K\} \rightarrow [-1,1]$, the forecaster's expected regret with respect to $\ell$ is defined as
\[
\Reg_\ell = \E\left[\sum_{t=1}^T \ell(P_t, y_t)\right] - \inf_{p^\star\in \Delta_K}\sum_{t=1}^T \ell(p^\star, y_t),
\]
where the expectation is over the internal randomness of the forecaster.
In words, regret compares the forecaster's total loss to that of the best fixed prediction in hindsight.

\paragraph{Proper losses}
Throughout the paper, we focus on losses that are proper, meaning that for every distribution $p \in \Delta_K$, the prediction $p$ minimizes the expected loss when the outcome is actually drawn from $p$; that is, $p \in \argmin_{p' \in \Delta_K} \E_{Y \sim p}[\ell(p', Y)]$.
Note that in this case, by definition, the optimal prediction $p^\star$ in the regret definition is simply $\pbar_T$, where we use $\pbar_t = \frac{1}{t}\sum_{\tau=1}^t y_\tau$ to denote the empirical average of the outcomes up to time $t$.

We let $\calL$ denote the set of all such bounded proper losses.
\citet{kleinberg2023u-calibration} and \citet{luo2024optimal} show that a single algorithm can make predictions with guarantee $\Reg_\ell=\order(\sqrt{KT})$ simultaneously for all proper losses $\ell\in\calL$, which is worst-case optimal.
However, \cite{luo2024optimal} also identify a large subclass of proper losses where $\order(\log T)$ regret is possible.
Our goal is to design algorithms that ensures not only $\Reg_\ell = \order(\sqrt{KT})$ for all $\ell\in\calL$, but also at the same time a better regret bound for a large subclass of these losses.

\paragraph{Smooth proper losses}
In particular, we consider $\calS_\beta \subseteq \calL$, the subclass of all $\beta$-smooth 
proper losses with loss range $[-1, 1]$.
Here, a loss $\ell$ is $\beta$-smooth (in its first argument) if for any $p, q\in \Delta_K$ and $y$, we have $\ell(q, y) - \ell(p,y) \leq \inner{\nabla_p\ell(p,y), q-p} + \frac{\beta}{2}\|q-p\|_2^2$ (where $\nabla_p\ell(p,y)$ is the gradient of $\ell$ as a function of $p$).
For example, the squared loss $\ell(p,y) = \frac{1}{2}\|p-y\|_2^2$ is $1$-smooth (and proper).
Since our domain is bounded, any $\ell \in \calS_\beta$ also has to be $O(\beta)$-Lipschitz, which means that, according to~\citet{luo2024optimal}, a simple Follow-the-Leader (FTL) strategy (that is, predict $P_t = \pbar_{t-1}$) enjoys $O(\beta\log T)$ regret.
On the other hand, it is well known that there exist proper losses where FTL suffers linear regret; see  e.g.~\citet[Theorem~6]{luo2024optimal}.

In fact, our results hold beyond this particular smooth class.
To illustrate the idea, we additionally consider the class of proper losses that are $\beta$-smooth \textit{relative to log-barrier}, that is, 
\[
\nabla^2_p \ell(p, y) \preceq \beta \cdot \nabla^2 \left(-\sum_{i=1}^K \log p_i\right) = \beta\cdot\diag\left(\frac{1}{p_1^2}, \ldots, \frac{1}{p_K^2}\right)
\] 
for any $p$ and $y$, where $\diag(a_1, \ldots, a_K)$ denotes the $K$ by $K$ diagonal matrix with diagonal values $a_1, \ldots, a_K$ and $A \preceq B$ means $B-A$ is a positive-semidefinite matrix.
We use $\calS_\beta^{\log}$ to denote this class and note the relation $\calS_\beta \subset \calS_\beta^{\log} \subset \calL$.
As an example, the following loss considered by \citet{luo2024optimal} is neither Lipschitz nor smooth, but it belongs to $\calS_\beta^{\log}$ for some value of $\beta$:
\begin{equation}\label{eq:log_smooth_ex}
\ell(p,y) = C\left((\alpha-1)\sum_{i=1}^K p_i^\alpha - \alpha \sum_{i=1}^K p_i^{\alpha-1}y_i\right),    
\end{equation}
where $\alpha \in (1,2)$ and $C>0$ is a rescaling constant so that $\ell(p,y)\in[-1,1]$ for all $p$ and $y$.

\section{Warm-up with binary outcomes}
\label{sec:warm-up}

Let us begin with the simpler setting of sequential binary prediction, where an algorithm submits a random variable $P_t\in[0,1]$ in each round, after which an adversary chooses an outcome $y_t \in \{0,1\}$.
Perhaps the two most natural losses to consider are 0-1 loss and squared loss.
In the binary setting, squared loss can be written more simply as $\ellsq(p,y) = (p-y)^2$.

For 0-1 loss, it is instructive to begin with the more familiar action setting.
For a set of actions $\A$, one can define a loss $\hat\ell : \A \times \{0,1\} \to \R$.
For example, when $\A = \{0,1\}$, 0-1 loss is $\hatellzo(a,y) = \ones\{a \neq y\}$.
We can easily convert any such loss to a proper loss by encoding a Bayes action $a_p \in \argmin_{a\in\A} \E_{Y\sim p} [ \hat\ell(a,Y) ]$ for each $p \in \Delta_K$, and defining $\ell(p,y) = \hat\ell(a_p,y)$.
For 0-1 loss, that gives us the proper loss
$\ellzo(p,y) = \hatellzo(\ones\{p > 1/2\},y) = \ones\{\ones\{p > 1/2\} \neq y\}$.

Every ``V-shaped'' loss~\citep{kleinberg2023u-calibration} can also be expressed concisely via a binary action $a \in \{0,1\}$.
For $\gamma \in [0,1]$, let $\hat\ell_\gamma$ be the cost-sensitive loss
\[
  \hat\ell_\gamma(0,0)=\hat\ell_{\gamma}(1,1)=0,\quad \hat\ell_{\gamma}(0,1)=1,\quad \hat\ell_{\gamma}(1,0) = \gamma~.
\]
Then $\ell_\gamma(p,y) = \hat\ell_\gamma(\ones\{p > p_\gamma\},y)$ is the corresponding proper loss, where $p_\gamma = \gamma/(1+\gamma) \leq 1/2$.
(The $p_\gamma \geq 1/2$ case follows symmetrically, where now $\hat\ell_\gamma(0,1)=\gamma$ and $\hat\ell_\gamma(1,0) = 1$.)

The main question we address in this section is the following:
Does there exist an algorithm to choose $P_t$ so that $\Reg_{\ellsq} = O(\log T)$ and $\Reg_{\ellzo} = O(\sqrt T)$ or even $\Reg_{\ell_\gamma} = O(\sqrt T)$ for all $\gamma \in [0,1]$?
In what follows, 
we first show that previous U-Calibration algorithm does not suffice, and then give a simple algorithm that does achieve both bounds simultaneously.
In fact, the same algorithm satisfies $\Reg_\ell = O(\sqrt T)$
for all $\ell\in\L$.

\subsection[$\Omega(\sqrt{T})$ lower bound for existing U-Calibration algorithms]{$\boldsymbol{\Omega(\sqrt{T})}$ lower bound for existing U-Calibration algorithms}

Let us first see why the \textsc{ForecastHedge} algorithm of \cite{kleinberg2023u-calibration} does not suffice.
In particular, we will show that it can suffer $\Theta(\sqrt T)$ expected regret under squared loss, essentially because the variance of the predictions $P_t$ is too high.

\begin{theorem}\label{thm:lower_bound}
For all sufficiently large $T$, there exists an adversarial sequence of (binary) outcomes where \textsc{ForecastHedge} algorithm \citep{kleinberg2023u-calibration} incurs $\Omega(\sqrt{T})$ expected regret with respect to the 
squared loss $\ellsq(p, y) = (p-y)^2$.

\end{theorem}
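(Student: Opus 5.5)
The plan is to lower bound regret by the total variance of the randomized forecasts, and then show that \textsc{ForecastHedge} has constant variance for the first $\Theta(\sqrt T)$ rounds on a balanced outcome sequence.

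\textbf{Step 1: bias--variance split.} For any outcome sequence, write $m_t=\E[P_t]$. Since the adversary is oblivious, $m_t$ is a deterministic number, and
\[
\E\big[(P_t-y_t)^2\big]=(m_t-y_t)^2+\Var(P_t).
\]
The deterministic sequence $m_1,\dots,m_T$ is just some sequence of predictions, and $\pbar_T$ minimizes $p\mapsto\sum_t (p-y_t)^2$. Hence
\[
\sum_t (m_t-y_t)^2\ \ge\ \min_p \sum_t (p-y_t)^2,
\]
which gives $\Reg_{\ellsq}\ \ge\ \sum_{t=1}^T \Var(P_t)$. It therefore suffices to exhibit a sequence on which $\sum_t \Var(P_t)=\Omega(\sqrt T)$. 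This step uses nothing about the algorithm.

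\textbf{Step 2: distribution of $P_t$ under \textsc{ForecastHedge}.} In the binary case, the algorithm of \cite{kleinberg2023u-calibration} chooses $P_t$ so that, for every V-shaped loss $\ell_\gamma$ simultaneously, the induced action $\ones\{P_t>p_\gamma\}$ has the same law as the action of Hedge with learning rate $\eta=\Theta(1/\sqrt T)$ run on $\hat\ell_\gamma$. Concretely, for every threshold $q\in[0,1]$, $\Pr[P_t>q]$ equals the Hedge weight on action $1$. Let $n_0,n_1$ be the outcome counts before round $t$. The cumulative losses of the two actions of $\hat\ell_\gamma$ are then proportional to $n_0q$ versus $n_1(1-q)$, so
\[
\Pr[P_t>q]=\sigma\big(\eta\,c(q)\,(n_1(1-q)-n_0 q)\big)
\]
for the logistic function $\sigma$ and a bounded normalization $c(q)$. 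I would recheck the exact normalization against their paper, but only the following features matter: the argument of $\sigma$ is $\eta$ times a quantity of size at most $O(t)$, and it has the sign of $\pbar_{t-1}-q$.

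\textbf{Step 3: the hard sequence.} Take alternating outcomes $y_t=\ones\{t \text{ even}\}$, so $|n_1-n_0|\le 1$. For $t\le c\sqrt T$ with a small constant $c$, the argument of $\sigma$ is at most $\eta\cdot O(t)=O(c)$ in absolute value for every $q$. Hence $\Pr[P_t>3/4]\ge 1/4$ and $\Pr[P_t<1/4]\ge 1/4$, say. Therefore $P_t$ puts constant mass on two points at distance at least $1/2$, which forces $\Var(P_t)\ge\Omega(1)$. Summing over the first $c\sqrt T$ rounds gives $\sum_t\Var(P_t)=\Omega(\sqrt T)$, and Step 1 finishes the proof. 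This also matches the introduction's claim that the damage occurs already in the first $O(\sqrt T)$ rounds. As a sanity check on the direction of the bound, the later rounds contribute too: the spread of $P_t$ is about $1/(\eta t)$, giving $\sum_{t>\sqrt T}T/t^2=\Theta(\sqrt T)$. These rounds are not needed for the bound.

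\textbf{Main obstacle.} The delicate step is Step 2: pinning down precisely how \textsc{ForecastHedge} couples the Hedge distributions across thresholds into a single random $P_t$, including its learning rate and any initialization or perturbation. I also need to confirm that for small $t$ none of these details concentrates $P_t$. If the algorithm is instead presented in FTPL form (fictitious perturbation rounds of magnitude $\Theta(\sqrt T)$), I would redo Step 3 in that language. With perturbation counts $X_0,X_1$ of scale $\sqrt T$, the forecast is $P_t=(n_1+X_1)/(t-1+X_0+X_1)$. For $t\ll\sqrt T$ this ratio is dominated by $X_1/(X_0+X_1)$, which has constant variance by anti-concentration of the noise, so the same $\Omega(\sqrt T)$ variance sum follows.
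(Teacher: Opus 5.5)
\textbf{Gap in Step 1.} You justify $\Reg_{\ellsq}\ge\sum_t\Var(P_t)$ by saying that the mean sequence $m_1,\dots,m_T$ is ``just some sequence of predictions'' and so must lose at least as much as the best fixed prediction. That is false in general: time-varying predictions, even online ones, can beat every constant. For example, with $y=(0,1)$, the online choice $m_1=0$, $m_2=1-y_1=1$ has loss $0<1/2$. The correct identity is $\Reg_{\ellsq}=\Reg_{\ellsq}(\{m_t\}_t)+\sum_t\Var(P_t)$ (\pref{lem:squared-loss-decomp}). The first term can be negative, so you need an argument specific to your sequence and to \textsc{ForecastHedge}.

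For your alternating sequence the inequality does hold, but you have to prove it. Write $m_t=\int_0^1\Pr[P_t>q]\,dq$. When $\pbar_{t-1}=1/2$ (odd $t$, where $y_t=0$), the symmetry $q\mapsto 1-q$ gives $m_t=1/2$. When $t$ is even, $\pbar_{t-1}<1/2$, and $m_t$ is increasing in $\pbar_{t-1}$, so $m_t\le 1/2$ while $y_t=1$. Hence every term satisfies $(m_t-y_t)^2\ge 1/4$, and $p=1/2$ attains total loss $T/4$, so the mean sequence has nonnegative regret.

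\textbf{Cleaner fix.} The paper avoids this issue by taking $y_t\equiv 0$. Then the comparator loss is $0$, and $\Reg_{\ellsq}=\sum_t\E[P_t^2]\ge\sum_t\Var(P_t)$ holds trivially. Your Step 3 never actually uses the alternating structure. The bound $|\eta(t-1)(\pbar_{t-1}-q)|\le\eta(t-1)=O(c)$ holds for every outcome sequence, so your constant-variance argument for $t\le c\sqrt T$ carries over verbatim to $y_t\equiv 0$.

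The paper's own proof is slightly more direct. It bounds $\E[P_t^2]\ge\Pr[P_t=1]$ and reads off $\Pr[P_t=1]=1-S\bigl(\tfrac{t-1}{\sqrt T}\bigr)\ge 1/(1+e^2)$ for $2\le t\le\lfloor\sqrt T\rfloor+1$, using the explicit CDF $\Pr[P_t\le p]=S\bigl(\tfrac{t-1}{\sqrt T}(p-\pbar_{t-1})\bigr)$ with $S(x)=1/(1+e^{-2x})$. This CDF is exactly your Step 2 with $\eta=2/\sqrt T$. Apart from the Step 1 justification, your Steps 2--3 are correct.
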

\begin{proof}
  Consider the constant sequence $y_t = 0$.
  For this sequence, the best fixed prediction in hindsight is simply $p^{*} = 0$, which incurs a total loss of $0$.
  Thus, we have
  \begin{align*}
    \Reg_\ellsq = \sum_{t=1}^T \E [ \ellsq(P_t,0) ] = \sum_{t=1}^T \E[P_t^2] \geq \sum_{t=1}^T \Pr[P_t = 1]~.
  \end{align*}
  Recall that \textsc{ForecastHedge} samples $P_t$ according to a distribution with cumulative distribution function
  \[
    \Pr[P_t \le p] = S\!\left(\frac{t-1}{\sqrt{T}}(p - \bar{p}_{t-1})\right),
    \quad p \in [0,1),
  \]
  where $S(x) = \frac{1}{1 + e^{-2x}}$.
  For all $2 \leq t \leq \lfloor \sqrt{T}\rfloor+1$ we have $\bar p_{t-1}=0$ and
  \[
    \Pr[P_t = 1]
    = 1 - S\left(\frac{t-1}{\sqrt{T}}\right)
    \geq 1 - S(1)
    = \frac{1}{1+e^2}~.
  \]
  Thus \quad
  $\displaystyle
    \Reg_\ellsq
    \geq \sum_{t=1}^T \Pr[P_t = 1]
    \geq \sum_{t=2}^{\lfloor \sqrt{T}\rfloor+1} \Pr[P_t = 1]
    \geq \lfloor \sqrt{T}\rfloor \cdot \frac{1}{1+e^2}
    = \Omega(\sqrt{T})~.
  $
\end{proof}
Similarly, one can show that the \textsc{ForecastFTPL} algorithm of ~\cite{kleinberg2023u-calibration} (for the multiclass setting) as well as the algorithm of~\citet{luo2024optimal} both also suffer the same issue (details omitted).

\subsection{Achieving simultaneous optimal regret}

As FTL does achieve $O(\log T)$ expected regret for squared loss, a natural approach is therefore to choose $P_t = \bar p_{t-1} + Z_t$, where $\pbar_{t}=\frac{1}{t}\sum_{s=1}^t y_s$ and the $Z_t$'s are independent zero-mean random variables with sufficiently low variance $\sigma^2_t$.
The following lemma shows that keeping the total variance to $O(\log T)$ suffices to maintain the $O(\log T)$ regret against squared loss.

\begin{lemma}\label{lem:squared-loss-decomp}
  Define $\mu_t := \E [ P_t ]$, $\sigma^2_t := \Var(P_t)$. 
  Then $\E [ (P_t-y_t)^2 ] = (\mu_t-y_t)^2 + \sigma^2_t$, and thus 
   
  $
    \Reg_{\ellsq}
    = \Reg_{\ellsq}(\{\mu_t\}_t) + \sum_{t=1}^T \sigma^2_t~,
    $
    where $\Reg_{\ellsq}(\{\mu_t\}_t) = \sum_{t=1}^T (\mu_t-y_t)^2 - \sum_{t=1}^T(\pbar_T-y_t)^2$.
\end{lemma}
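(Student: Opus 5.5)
The proof is a bias--variance decomposition applied round by round, followed by summation. The one subtlety is the conditioning. Because the adversary is oblivious, each $y_t$ is a fixed number, not a random variable depending on the forecaster's randomness. So $\mu_t = \E[P_t]$ and $\sigma_t^2 = \Var(P_t)$ are deterministic quantities determined by the algorithm and the fixed sequence $y_1,\ldots,y_{t-1}$. In the general randomized setting one could condition on the history, but under obliviousness no conditioning is needed.

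First, fix $t$ and write $P_t - y_t = (P_t - \mu_t) + (\mu_t - y_t)$. Expanding the square gives three terms:
\[
\E[(P_t-y_t)^2] = \E[(P_t-\mu_t)^2] + 2(\mu_t-y_t)\,\E[P_t-\mu_t] + (\mu_t-y_t)^2 .
\]
Here $(\mu_t - y_t)$ is deterministic and can be pulled out of the expectation. The cross term vanishes because $\E[P_t - \mu_t] = 0$, and the first term equals $\sigma_t^2$ by definition. This gives the first identity, $\E[(P_t-y_t)^2] = (\mu_t-y_t)^2 + \sigma_t^2$.

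Second, sum over $t$ using linearity of expectation. For the comparator term, the squared loss is proper, so the infimum over fixed predictions $p^\star\in[0,1]$ is attained at $\pbar_T$. Concretely, $\sum_t (p-y_t)^2$ is a convex quadratic in $p$, minimized at the mean $\pbar_T\in[0,1]$. Therefore
\[
\Reg_{\ellsq} = \sum_{t=1}^T(\mu_t-y_t)^2 + \sum_{t=1}^T\sigma_t^2 - \sum_{t=1}^T(\pbar_T-y_t)^2 .
\]
Regrouping the first and third sums as $\Reg_{\ellsq}(\{\mu_t\}_t)$ yields the stated decomposition.

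No step is a real obstacle. The only points needing care are that the comparator term does not involve the forecaster's randomness, and that obliviousness makes $y_t$ independent of $P_t$. That independence is what makes $\mu_t - y_t$ deterministic and kills the cross term. If one wanted an adaptive adversary, the same argument would go through conditionally on the history, with $\mu_t$ and $\sigma_t^2$ replaced by conditional mean and variance.
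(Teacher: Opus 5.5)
Your proof is correct and follows the same route as the paper: add and subtract $\mu_t$, expand the square, kill the cross term via $\E[P_t-\mu_t]=0$, and sum over $t$ with the comparator fixed at $\pbar_T$. Your expansion also keeps the factor $2$ on the cross term; the paper's display omits it, which is harmless since that term is zero.
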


As we will see, the key to keeping the variance low while also achieving $O(\sqrt T)$ expected regret for all proper losses is to use a time-varying learning rate such as $\eta_t = 1/\sqrt t$.
To motivate this choice, consider running FTPL on 0-1 loss, where the perturbations happen in action space.
That is, letting $c_{t,y} := |\{s \leq t : y_s = y\}|$ be the counts of each outcome $y\in\{0,1\}$ so far, FTPL takes
\begin{align}
  \label{eq:ftpl}
  a_t = \argmax_{a\in\{0,1\}} \; c_{t-1,a} + \frac 1 \eta_t W_a~,
\end{align}
where $W_0,W_1$ are i.i.d.\ copies of some noise random variable $W$.
Normalize $W$ so that $\Var(W) = 1$ for simplicity.
For example, taking $W$ to have a Gumbel distribution gives the well-known Hedge algorithm, which for the choice $\eta_t = 1/\sqrt t$ achieves $O(\sqrt T)$ regret with respect to the best fixed action in hindsight \citep[Section 2.3]{cesabianchi2006prediction}.

It turns out that we can exactly implement this algorithm by taking $P_t = \bar p_{t-1} + Z_t$ where $Z_t = (1/2\eta_t)(W_1 - W_0)/(t-1)$, ignoring for the moment that $Z_t$ may cause $P_t$ to leave the interval $[0,1]$.
Setting $\eta_t = 1 / \sqrt t$, we have $\sigma^2_t := \Var(Z_t) = \frac t {(t-1)^2}$, giving the desired total variance of $\sum_{t=1}^T \sigma^2_t = \sum_{t=1}^T \tfrac t {(t-1)^2} = \Theta(\log T)$.
This choice of $P_t$ exactly implements FTPL for 0-1 loss from \pref{eq:ftpl}:
\begin{align*}
  P_t \leq 1/2
  &\iff \bar p_{t-1} + \frac{1}{2\eta_t}\frac{W_1 - W_0}{t-1} \leq 1/2
  \\
  &\iff \frac{c_{t-1,1}}{t-1} + \frac{1}{2\eta_t}\frac{W_1 - W_0}{t-1} \leq 1/2
  \\
  &\iff 2 c_{t-1,1} + \frac{1}{\eta_t}(W_1 - W_0) \leq c_{t-1,0} + c_{t-1,1}
  \\
  &\iff c_{t-1,1} + \frac{1}{\eta_t} W_1 \leq c_{t-1,0} + \frac 1 \eta_t W_0~.
\end{align*}

In fact, this reduction to the binary-action FTPL algorithm extends to all V-shaped losses $\ell_\gamma$ defined above.
To see this, note that
using the notation above, the cumulative losses under $\hat\ell_\gamma$ of action $a\in\{0,1\}$ are
\begin{align*}
  L_{t,0}
  &:= \sum_{s=1}^t \hat\ell_\gamma(0,y_t)
    = \sum_{s=1}^t \ones\{y_t=1\}
    = c_{t,1}~,
  \\
  L_{t,1}
  &:= \sum_{s=1}^t \hat\ell_\gamma(1,y_t)
    = \sum_{s=1}^t \gamma \ones\{y_t=0\}
    = \gamma c_{t,0}~,
\end{align*}
and FTPL chooses
\begin{align}
  \label{eq:ftpl-V-shaped}
  a_t = \argmax_{a\in\{0,1\}} \; L_{t-1,a} + \frac 1 {\eta_t^{(\gamma)}} W_a~.
\end{align}
Just as with 0-1 loss, we have
\begin{align*}
  P_t \leq p_\gamma
  &\iff L_{t-1,0} + \frac{1+\gamma}{2\eta_t} W_0 \leq L_{t-1,1} + \frac{1+\gamma}{2\eta_t} W_1~.
\end{align*}
Thus, V-shaped losses using these choices of $P_t$ are exactly running FTPL with learning rate $\eta_t^{(\gamma)} = \tfrac 2 {1+\gamma} \eta_t$.
The choice $\gamma=1$ recovers 0-1 loss above with $\eta_t^{(\gamma)} = \eta_t$.

Finally, let us contend with the fact that we must require $P_t \in [0,1]$. 
More precisely, we will take $\tilde P_t = \bar p_{t-1} + Z_t$ and $P_t = \mathrm{clip}_{[0,1]}(\tilde P_t)$, where $\mathrm{clip}_{[0,1]}(x) = \min\{ \max\{x, 0\}, 1 \}$.
For the binary action losses, one easily checks that moving from $\tilde P_t$ to $P_t = \mathrm{clip}_{[0,1]}(\tilde P_t)$ does not change either decision.
For squared loss, one similarly observes that clipping 
cannot increase the expected regret.
This is because the comparator term is unchanged, and the loss suffered by the algorithm can only decrease: for all $\tilde p \in \R$, we have $(\mathrm{clip}_{[0,1]}(\tilde p) - y)^2 \leq (\tilde p-y)^2$ for all $y\in\{0,1\}$.
We therefore still have $O(\log T)$ overall expected regret for squared loss by
\pref{lem:squared-loss-decomp}.

In summary, in the binary setting, we have given an algorithm which achieves $O(\log T)$ expected regret for squared loss and $O(\sqrt T)$ expected regret for all V-shaped losses.
Specifically, we take $P_t = \mathrm{clip}_{[0,1]}(\bar p_{t-1} + Z_t)$ where $Z_t = (1/2\eta_t)(W_1 - W_0)/(t-1)$, $\eta_t = 1 / \sqrt t$, and $W_a$ are i.i.d.\ Gumbel random variables with variance 1.
Additionally, since \citet{kleinberg2023u-calibration} show that $\sup_{\ell\in\calL} \Reg_\ell \leq 2\sup_{\gamma \in [0,1]} \Reg_{\ell_\gamma}$, 
this algorithm also achieves $O(\sqrt T)$ regret for any $\ell \in \L$.

\section{Multiclass setting}
\label{sec:multiclass}

Similar to the case of binary prediction, our algorithm for the multi-class setting adds noise in the prediction space $\Delta_K$. A natural approach to extend our algorithm for the binary setting would be to try to perturb individual coordinates using independent Gumbel noise (or, e.g., Gaussian noise); however, it is unclear in general how to deal with perturbed probability vectors that no longer belong to the simplex. 
For example, consider $K \geq 3$ and a perturbed probability vector $p$ where all coordinates are in $[0, 1]$ but $p$ is not in the simplex.  What is the ``right'' projection onto the simplex? 
\reversemarginpar 
In order to assess the goodness of a projection, one would first need to extend the first argument of the loss function beyond the simplex. 
If the loss could be suitably extended, one would still need 
a single projection (for all losses) that never increases the loss.

Because of these challenges, we instead take care to keep the prediction within $\Delta_K$, via a novel (to our knowledge) perturbation which we dub \emph{self-concordant noise} (see \pref{app:more-related-work} for more discussion on how this idea is connected to self-concordant barriers).

We use $\fB^K_2$ to denote the $K$-dimensional $\ell_2$ unit ball and $\sigma\fB^K_2$ for some $\sigma > 0$ to denote $\fB^K_2$ scaled by $\sigma$. 
To perturb a given probability vector $p$ with self-concordant noise (at scale $\sigma$), we:
\begin{enumerate}
\item Draw 
$S$ uniformly at random from $\sigma \fB^K_2$ intersected with the subspace orthogonal to $p$.
\item Coordinate-wise scale $S$ by $p$, giving self-concordant noise $Z = \diag(p) S$.
\end{enumerate}
It is easy to verify that for any $p$ in the simplex $\Delta_K$ (including the boundary), $p + Z$ also belongs to the simplex, as long as $\sigma \leq 1$. \pref{alg:multiclass} formally presents our algorithm that uses such self-concordant noise and sets $\sigma$ to be of order $1/\sqrt{T}$.
Our main result is the following simultaneous regret guarantees for \pref{alg:multiclass}.\footnote{We emphasize again that our results hold even under an adaptive adversary, even though our analysis assumes an oblivious adversary. This is because the loss of the algorithm for round $t$ depends solely on $y_1, \ldots, y_t$ and $S_t$, so the extra knowledge of $S_1, \ldots, S_{t-1}$ (which are independent of $S_t$ given $y_1, \ldots, y_t$) does not make an adaptive adversary any more powerful than an oblivious adversary.\label{fn:adaptive_adversary}}

\SetAlgoNoEnd 

\begin{algorithm}[t]
   \caption{Simultaneously Optimal Multiclass U-Calibration}
   \label{alg:multiclass}
   {\bfseries Initialize:} $\sigma = K^{3/4} / \sqrt{T}$, $\pbar_{0}$ is the uniform distribution.
   
   \For{$t=1, \ldots, T$}{

   Uniformly at random sample $S_t$ from $\sigma\fB_2^K \cap \{s \in \fR^K: \inner{s, \pbar_{t-1}} = 0\}$.

   Predict $P_t = \pbar_{t-1} + Z_t$ where $Z_t = \diag(\pbar_{t-1})S_t$.

   Observe label $y_t$ and update the empirical average $\pbar_{t} = \frac{1}{t}\sum_{\tau=1}^{t} y_\tau$.

   }
\end{algorithm}

\begin{theorem}\label{thm:multiclass-new}
\pref{alg:multiclass} achieves $\Reg_\ell = \otil\rbr{K^{5/4}\sqrt{T}}$ for any bounded proper loss $\ell \in \calL$ and simultaneously $\Reg_\ell = O(\beta\log T + \beta \sqrt{K} \log K)$ for any smooth and bounded proper loss $\ell\in\calS_\beta$.
\end{theorem}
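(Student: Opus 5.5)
The proof splits into the two claims, and I would handle the smooth class first because it follows almost directly from the construction.

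\emph{Smooth losses.} Fix $\ell\in\calS_\beta$. The law of $S_t$, conditionally on the past, is symmetric about the origin, so $\E[Z_t\mid \pbar_{t-1}]=0$. By the smoothness inequality applied at $p=\pbar_{t-1}$, $q=P_t$, we get $\E[\ell(P_t,y_t)] \le \ell(\pbar_{t-1},y_t)+\frac{\beta}{2}\E\|Z_t\|_2^2$. Summing over $t$ leaves two terms.
\begin{itemize}
\item The FTL regret of the predictions $\pbar_{t-1}$, which is $O(\beta\log T)$ by the result of \citet{luo2024optimal} quoted in \pref{sec:preliminaries}. The uniform initialization $\pbar_0$ costs only $O(1)$.
\item The variance term. $S_t$ is uniform in a $(K-1)$-dimensional ball of radius $\sigma$, so $\E[S_{t,i}^2]\le \sigma^2/K$ up to constants. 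Hence $\E\|Z_t\|_2^2=\sum_i \pbar_{t-1,i}^2\E[S_{t,i}^2]\lesssim \sigma^2/K$. Summing, $\frac{\beta}{2}\sum_t \E\|Z_t\|^2 \lesssim \beta T\sigma^2/K=\beta\sqrt K$, which is within the claimed $O(\beta\sqrt K\log K)$.
\end{itemize}
This is the multiclass analogue of \pref{lem:squared-loss-decomp}.

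\emph{General bounded proper losses.} I would follow the FTPL stability analysis of \citet{kalai2005efficient}, but with perturbations living in prediction space. Write $F_t(q)=\sum_{s\le t}\ell(q,y_s)$. By properness $F_t$ is minimized at $\pbar_t$. Let $P'_t=\pbar_t+\diag(\pbar_t)S'_t$ denote the ``be-the-leader'' prediction, which uses the next round's mean and a fresh noise $S'_t$ of the same type. The regret then splits into two pieces.

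\emph{(a) Stability term.} This is $\sum_t \E[\ell(P_t,y_t)-\ell(P'_t,y_t)]$. Since $\ell\in[-1,1]$, each summand is at most $2\,\mathrm{TV}(P_t,P'_t)$. Both laws are uniform on ellipsoids $\pbar+\sigma\diag(\pbar)(\fB^K_2\cap\pbar^\perp)$, so I would bound this total variation through the map $x\mapsto\diag(\pbar_{t-1})^{-1}(x-\pbar_{t-1})/\sigma$. Under this map both sets become near-unit balls, and two effects remain.
\begin{itemize}
\item The center shifts by $\diag(\pbar_{t-1})^{-1}(\pbar_t-\pbar_{t-1})/\sigma$.
\item The shape changes by the ratios $\pbar_{t,i}/\pbar_{t-1,i}=1+O(1/(t\,\pbar_{t-1,i}))$.
\end{itemize}
The standard bound for uniform measures on $K$-dimensional balls gives TV of order $K\cdot(\text{relative shift})$ for the center and $\sum_i|\log(\pbar_{t,i}/\pbar_{t-1,i})|$ for the volume. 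The only coordinate that grows is $y_t$'s, and the others shrink by the uniform factor $(t-1)/t$. So the relevant quantity is $\frac{1}{t\,\sigma\,\pbar_{t-1,y_t}}$. Summed over the rounds in which outcome $i$ occurs, $\sum 1/(t\,\pbar_{t-1,i})=\sum 1/c_{t-1,i}$, which is $O(\log T)$ per class. The ``self-concordant'' scaling by $\pbar$ is exactly what makes this harmonic sum appear. The result is a stability term $\tilde O(K^{c}/\sigma)$ for a small constant $c$.

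\emph{(b) Be-the-leader term with a time-varying perturbation.} Via the telescoping $F_t(P'_t)-F_{t-1}(P'_t)$, this reduces to bounding the perturbation costs $\E[F_t(P'_t)]-F_t(\pbar_t)$ and their increments. I expect this to be the main obstacle, because a generic bounded proper loss need not be smooth, so $F_t$ can jump near $\pbar_t$.

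My first attempt would use the Savage representation $\ell(q,y)=G(q)+\inner{\nabla G(q),y-q}$ with $G$ concave. This gives $F_t(q)-F_t(\pbar_t)=t\cdot D_{-G}(\pbar_t,q)$, a Bregman divergence. I would then control $\E[D_{-G}(\pbar_t,\pbar_t+\diag(\pbar_t)S)]$ uniformly over bounded proper losses, either by averaging over directions of $S$ or by reducing to one-dimensional V-shaped components as in \pref{sec:warm-up}. The target is a bound of order $\sigma\sqrt K$ per unit of $t$, which would make the whole term $O(\sigma T\sqrt K)$.

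\emph{Balancing.} Terms (a) and (b) are then balanced by the choice $\sigma=K^{3/4}/\sqrt T$ in \pref{alg:multiclass}. With that $\sigma$, (b) is $O(\sigma T\sqrt K)=O(K^{5/4}\sqrt T)$, and (a) should be $\tilde O(K^{5/4}\sqrt T)$ as well, giving $\Reg_\ell=\tilde O(K^{5/4}\sqrt T)$. The exact powers of $K$ in (a) and (b) are where I am least certain.
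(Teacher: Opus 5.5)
Your smooth-loss argument is correct, and slightly sharper than the paper's: $\E[S_tS_t^\top]=\frac{\sigma^2}{K+1}\bigl(I-\pbar_{t-1}\pbar_{t-1}^\top/\|\pbar_{t-1}\|_2^2\bigr)$ gives $\E\|Z_t\|_2^2\le\sigma^2/(K+1)$ directly, with no $\log K$ from $\E\|S_t\|_\infty^2$. Part (a) is essentially the paper's route: total variation between the uniform laws on $E(\pbar_{t-1})$ and $E(\pbar_t)$, then a harmonic sum in $c_{t-1,i_t}$. You would still need three details. First, the shape change, not just the volume change; the paper handles this with the shell $\{\sigma^2-Q_{\sup}\le Q_p\le\sigma^2+Q_{\sup}\}$. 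Second, the center shift of size $\sqrt K/(\sigma t)$ from the coordinates $i\neq i_t$, which is not a harmless rescaling. Third, the trivial bound $1$ on the $O(K^2/\sigma)$ rounds where $c_{t-1,i_t}$ is small, including $c_{t-1,i_t}=0$.

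The genuine gap is in (b). Your telescoping gives exactly
\[
\sum_{t=1}^T \ell(P'_t,y_t)-F_T(\pbar_T)=\bigl[F_T(P'_T)-F_T(\pbar_T)\bigr]+\sum_{t=1}^{T-1}\bigl[F_t(P'_t)-F_t(P'_{t+1})\bigr],
\]
and a bound on the level $\E[D_{-G}(\pbar_t,P'_t)]$ controls only the first bracket. That level is in fact trivial. By properness, for $q=p+\diag(p)s$ with $\|s\|_2\le\sigma$,
$D_{-G}(p,q)=\inner{\lossvec_q-\lossvec_p,p}\le\inner{\lossvec_q-\lossvec_p,p-q}\le 2\|p-q\|_1\le 2\sigma$.

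All the difficulty sits in the increments $F_t(P'_t)-F_t(P'_{t+1})$, which compare two different perturbed points under the same cumulative loss. For a non-smooth loss, such as one induced by a two-action problem whose decision boundary passes near $\pbar_t$, one increment can be of order $t\sigma$ (up to $K$-dependent factors). Bounding increments through levels then gives only $\sum_t 2t\sigma=O(T^2\sigma)$, which is superlinear at $\sigma=K^{3/4}/\sqrt T$. So the claim that order $\sigma\sqrt K$ per unit of $t$ makes the whole term $O(\sigma T\sqrt K)$ does not follow. The V-shaped fallback does not rescue this either, since that reduction is specific to binary outcomes.

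The paper supplies two ideas your plan lacks.

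\emph{Be-the-Leader on perturbed outcomes.} In the paper's notation $P'_t=P_{t+1}=\pbar_t+Z_{t+1}$. Since $\ell(p,\cdot)$ is linear in the outcome and $\pbar_t+Z_{t+1}\in\Delta_K$, properness gives $P_{t+1}\in\argmin_p\inner{\lossvec_p,\sum_{s\le t}(y_s+\nu_s-\nu_{s-1})}$ with $\nu_s=sZ_{s+1}$. So BTPL is exactly Be-the-Leader on perturbed outcomes, and its regret is at most $2\sum_t\|tZ_{t+1}-(t-1)Z_t\|_1$ (\pref{lem:BTPL-direct}).

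\emph{A coupling of consecutive noises.} The bound above is useless for independently drawn noise, where each summand grows linearly in $t$. The paper takes $S_{t+1}=\rot{\pbar_{t-1}}{\pbar_t}S_t$, the geodesic rotation carrying $\pbar_{t-1}^\perp$ onto $\pbar_t^\perp$, so the marginals are unchanged. This yields $\|S_{t+1}-S_t\|_2\le\sigma\bigl\|\pbar_t/\|\pbar_t\|_2-\pbar_{t-1}/\|\pbar_{t-1}\|_2\bigr\|_2=O(\sqrt K\sigma/t)$. Hence $\|Z_{t+1}-Z_t\|_1=O(\sqrt K\sigma/t)$, and the BTPL regret is $O(T\sqrt K\sigma)$.
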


\paragraph{Analysis for $\boldsymbol{O(\log T)}$ regret}
We start by proving the guarantee for smooth losses and restate the result below with an intermediate bound that depends on an arbitrary $\sigma \leq 1$.
The proof directly generalizes the idea from \pref{sec:warm-up} and argues that our algorithm is not too far away from FTL.

\begin{proposition}\label{prop:multiclass_smooth}
For any $\beta>0$ and any loss function $\ell\in \calS_\beta$, \pref{alg:multiclass} achieves 
$\Reg_\ell \leq O(\beta\log T + \beta\sigma^2 T(\log K)/K) = O(\beta\log T + \beta \sqrt{K} \log K)$.  
\end{proposition}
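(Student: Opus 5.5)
Our plan is to compare \pref{alg:multiclass} directly with FTL, i.e.\ the deterministic prediction $\pbar_{t-1}$, and to charge the perturbation to the smoothness of $\ell$. We split
\[
\Reg_\ell = \sum_{t=1}^T \E\sbr{\ell(P_t,y_t) - \ell(\pbar_{t-1},y_t)} + \rbr{\sum_{t=1}^T \ell(\pbar_{t-1},y_t) - \sum_{t=1}^T \ell(\pbar_T,y_t)}.
\]
The second bracket is the regret of FTL. Since $\ell\in\calS_\beta$ is bounded and $\beta$-smooth on the bounded domain $\Delta_K$, it is $O(\beta)$-Lipschitz. The result of \citet{luo2024optimal} quoted in \pref{sec:preliminaries} then bounds this term by $O(\beta\log T)$. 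The first round, with the uniform $\pbar_0$, costs at most $2$, which is absorbed.

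For the first bracket we use smoothness at $p=\pbar_{t-1}$, $q=P_t$:
\[
\ell(P_t,y_t)-\ell(\pbar_{t-1},y_t)\le \inner{\nabla_p\ell(\pbar_{t-1},y_t),Z_t}+\tfrac{\beta}{2}\|Z_t\|_2^2.
\]
Conditioned on $y_1,\dots,y_t$ (oblivious adversary), $S_t$ is uniform on a centered ball in a subspace, so it is symmetric and $\E[S_t]=0$. Hence $\E[Z_t]=\diag(\pbar_{t-1})\E[S_t]=0$ and the linear term vanishes. This is the multiclass analogue of \pref{lem:squared-loss-decomp}. It remains to bound $\frac{\beta}{2}\sum_t \E\|Z_t\|_2^2 = \frac{\beta}{2}\sum_t \E\sum_i \pbar_{t-1,i}^2 S_{t,i}^2$.

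The key step is controlling $\E[S_{t,i}^2]$. For $S$ uniform in $\sigma\fB_2^K$ intersected with a $(K-1)$-dimensional subspace $V=\{s:\inner{s,\pbar_{t-1}}=0\}$, we have $\E[SS^\top]=c\,\Pi_V$ with $c=\sigma^2/(K+1)$, the second moment of a uniform point in a $(K-1)$-ball divided by $K-1$. Since $(\Pi_V)_{ii}\le 1$, this gives $\E\|Z_t\|_2^2 \le \frac{\sigma^2}{K+1}\sum_i \pbar_{t-1,i}^2\le \frac{\sigma^2}{K+1}$, using $\sum_i \pbar_{t-1,i}^2\le 1$.

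Summing over $t$ gives $O(\beta\sigma^2 T/K)$, which is even better than the claimed $O(\beta\sigma^2T(\log K)/K)$. We therefore expect no real obstacle: the only delicate point is the isotropy computation restricted to the subspace, and any crude bound losing a $\log K$ factor suffices. Finally, plugging in $\sigma=K^{3/4}/\sqrt T$ gives $\beta\sigma^2T/K=\beta\sqrt K$, which yields the stated $O(\beta\log T+\beta\sqrt K\log K)$.
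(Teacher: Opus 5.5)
Your proof is correct. It shares the paper's skeleton: you split off the FTL regret, which is $O(\beta\log T)$ by \citet{luo2024optimal}, apply $\beta$-smoothness at $\pbar_{t-1}$, and remove the linear term using $\E[Z_t]=0$.

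The difference is in how you bound the quadratic term. The paper first uses the pointwise bound $\|\diag(\pbar_{t-1})S_t\|_2^2\le\|S_t\|_\infty^2$. It then controls $\E\|S_t\|_\infty^2\le 8\sigma^2\log K/(K-1)$ in \pref{lem:subgaussian-sphere}, via a subgaussian tail bound and a union bound over the $K$ coordinates; the union bound is the source of the $\log K$. You instead use that $\pbar_{t-1}$ is fixed before $S_t$ is drawn, so linearity gives $\E\|Z_t\|_2^2=\sum_i \pbar_{t-1,i}^2\,\E[S_{t,i}^2]$. The exact isotropic covariance $\E[S_tS_t^\top]=\frac{\sigma^2}{K+1}\Pi_V$ then yields $\E\|Z_t\|_2^2\le \sigma^2/(K+1)$.

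Your route is more elementary and sharper. It gives $\Reg_\ell=O(\beta\log T+\beta\sigma^2T/K)=O(\beta\log T+\beta\sqrt K)$, which removes the $\log K$ from the proposition and from the smooth-loss part of \pref{thm:multiclass-new}. The paper's pointwise $\|S_t\|_\infty^2$ bound has one advantage: it would still hold if the weights were correlated with $S_t$. That generality is not needed here.
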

\begin{proof}
First, note that since $\ell$ is defined over $\Delta_K$, a space with $O(1)$ diameter in $\ell_2$ norm, the fact that it is $\beta$-smooth also implies that it is $O(\beta)$-Lipschitz.
Therefore, according to~\citet{luo2024optimal}, the regret of the FTL strategy (that is, predict $\pbar_{t-1}$ at time $t$) is $O(\beta\log T)$.
It thus remains to analyze the difference between the loss of \pref{alg:multiclass} and that of FTL, i.e., $\E\sbr{\sum_{t=1}^T \ell(P_t, y_t) - \ell(\pbar_{t-1}, y_t)}$.
To do so, we plug in the definition of $P_t=\pbar_{t-1}+Z_t$ and use the smoothness property:
\begin{align*}
\E_{Z_t}\sbr{\ell(P_t, y_t) - \ell(\pbar_{t-1}, y_t)}
&\leq \E_{Z_t}\sbr{\inner{\nabla\ell_p(\pbar_{t-1}, y_t), Z_t}} + \frac{1}{2}\beta \E_{Z_t}\sbr{\|Z_t\|_2^2} \\
&=  \frac{1}{2}\beta \E_{S_t}\sbr{\|\diag(\pbar_{t-1})S_t\|_2^2} && (\E_{Z_t}\sbr{Z_t} = 0) \\
&\leq \frac{1}{2}\beta \E_{S_t}\sbr{\|S_t\|_\infty^2} .
\end{align*}
\pref{lem:subgaussian-sphere} in \pref{app:beta-smooth} establishes that $S_t$ is a subgaussian random vector and gives the bound
$
\E_{S_t}\sbr{\|S_t\|_\infty^2} 
\leq \frac{8 \log K}{K-1}$. 
Hence, we have
\begin{align*}
\E_{Z_t}\sbr{\ell(P_t, y_t) - \ell(\pbar_{t-1}, y_t)}
\leq \frac{8 \beta \sigma^2 \log K}{K-1} 
= O\rbr{\frac{\beta \sigma^2 \log K}{K}} .
\end{align*}
Summing over $t$ finishes the proof.
\end{proof}

In fact, by examining the proof more carefully, one can see that the statement that our algorithm is close to FTL can be further extended to losses that are not necessarily smooth but are smooth relative to log-barrier, as show in the following proposition.

\begin{proposition}\label{prop:multiclass_relative_smooth}
For any $\beta>0$ and any loss function $\ell\in \calS_\beta^{\log}$, the total loss of \pref{alg:multiclass} compared to that of the Follow-The-Leader strategy is bounded as 
$\E\sbr{\sum_{t=1}^T \ell(P_t, y_t) - \ell(\pbar_{t-1}, y_t)}= O(\beta \sigma^2 T) = O(\beta K^{3/2})$. 
\end{proposition}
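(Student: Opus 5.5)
We follow the proof of \pref{prop:multiclass_smooth}, but replace the Euclidean quadratic term by the log-barrier local norm, which the self-concordant noise controls exactly. Fix $t$ and write $p=\pbar_{t-1}$, $Z_t=\diag(p)S_t$. Condition on $y_1,\ldots,y_{t-1}$; the conditional expectation of the loss gap is then an expectation over $S_t$ alone.

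First, Taylor's theorem with integral remainder along the segment from $p$ to $p+Z_t$ gives
\[
\ell(p+Z_t,y_t)-\ell(p,y_t)=\inner{\nabla_p\ell(p,y_t),Z_t}+\int_0^1(1-s)\,Z_t^\top\nabla_p^2\ell(p+sZ_t,y_t)Z_t\,ds .
\]
The segment stays in $\Delta_K$ because $p+sZ_t=p+\diag(p)(sS_t)$ and $\|sS_t\|_2\le\sigma\le 1$. Coordinates with $p_i=0$ have $(Z_t)_i=0$, so they never move. One should restrict to the face where $p$ lives, or interpret the Hessian condition on the relative interior, for this step to be legitimate; I expect this to be a minor technicality. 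The linear term vanishes in expectation, since $\E[S_t]=0$ by symmetry of the uniform distribution on the centered ball slice.

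Second, we apply relative smoothness at the intermediate point $q=p+sZ_t$:
\[
Z_t^\top\nabla^2_p\ell(q,y_t)Z_t\le\beta\sum_i\frac{(Z_t)_i^2}{q_i^2}=\beta\sum_i\frac{p_i^2 (S_t)_i^2}{p_i^2(1+s(S_t)_i)^2}.
\]
The key point is that $q_i=p_i(1+s(S_t)_i)$, so the $p_i$'s cancel exactly. This cancellation is why the noise is scaled by $\diag(p)$, and it is the self-concordance-like property that makes the argument work.

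The main obstacle is the denominator $(1+s(S_t)_i)^2$, which could be small if $(S_t)_i$ were close to $-1$. Here $|(S_t)_i|\le\|S_t\|_2\le\sigma$, and for $T\ge 4K^{3/2}$ we have $\sigma\le 1/2$, so $(1+s(S_t)_i)^{-2}\le 4$. For small $T\le 4K^{3/2}$ the claimed bound $O(\beta K^{3/2})$ does not follow from this argument, since it needs $T\lesssim\beta K^{3/2}$ and boundedness only gives $O(T)$. In that regime one would need either to assume $\beta\gtrsim 1$ or to handle $\sigma$ near $1$ via the $\int_0^1(1-s)/(1-s\sigma)^2ds$ term, which is logarithmic in $1/(1-\sigma)$ and can blow up. I would therefore state and use the assumption $\sigma\le 1/2$, or equivalently $T\gtrsim K^{3/2}$, and flag this as the one place needing care.

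With $\sigma\le 1/2$, the conditional gap is at most $\tfrac12\cdot 4\beta\,\E\|S_t\|_2^2\le 2\beta\sigma^2$. Summing over $t$ gives $O(\beta\sigma^2T)=O(\beta K^{3/2})$. The bound uses only $\E\|S_t\|_2^2$, not the finer $\|S_t\|_\infty$ estimate from \pref{lem:subgaussian-sphere}, which is why it is a factor $K$ weaker than \pref{prop:multiclass_smooth}.
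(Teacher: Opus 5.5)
Your proposal is correct and follows the paper's proof essentially step for step. Both arguments use a second-order Taylor expansion around $\pbar_{t-1}$, drop the linear term because $\E[S_t]=0$, and apply relative smoothness at the intermediate point $q$, where $\pbar_{t-1,i}/q_i = 1/(1+sS_{t,i})$ cancels exactly and $|S_{t,i}|\le\sigma$ bounds the ratio by a constant. The only differences are cosmetic: you use the integral remainder rather than the mean-value form, and you state $\sigma\le 1/2$ explicitly, whereas the paper's step $1/(1-\sigma)=O(1)$ tacitly makes the same assumption you flag.
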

\begin{proof}
We plug in the definition of $P_t=\pbar_{t-1}+Z_t$ and apply second-order Taylor expansion:
\begin{align*}
&\E_{Z_t}\sbr{\ell(P_t, y_t) - \ell(\pbar_{t-1}, y_t)} \\
&= \E_{Z_t}\sbr{\inner{\nabla_p\ell(\pbar_{t-1}, y_t), Z_t}} + \frac{1}{2}\E_{Z_t}\sbr{Z_t^\top \nabla^2 \ell_p(\xi, y) Z_t} \tag{for some $\xi$ between $P_t$ and $\pbar_{t-1}$}\\
&\leq  \frac{1}{2}\beta \E_{Z_t}\sbr{Z_t^\top \diag\rbr{\frac{1}{\xi_1^2}, \ldots, \frac{1}{\xi_K^2}} Z_t^\top} \tag{$\E_{Z_t}\sbr{Z_t} = 0$ and relative smoothness} \\
&\leq  \frac{1}{2}\beta \E_{S_t}\sbr{S_t^\top \diag\rbr{\frac{\pbar_{t-1,1}^2}{\xi_1^2}, \ldots, \frac{\pbar_{t-1,K}^2}{\xi_K^2}} S_t^\top} \tag{$Z_t = \diag(\pbar_{t-1})S_t$}. 
\end{align*}
Now, for each $i$, note that
\[
\frac{\pbar_{t-1,i}}{\xi_i} \leq 
\frac{\pbar_{t-1,i}}{\min\cbr{P_{t,i},\pbar_{t-1,i}}}
=\frac{\pbar_{t-1,i}}{\min\cbr{\pbar_{t-1,i}(1+S_{t,i}),\pbar_{t-1,i}}}
\leq \frac{1}{1-\sigma} = O(1).
\]
Therefore, continuing from the earlier derivation,
we have
\[
\E_{Z_t}\sbr{\ell(P_t, y_t) - \ell(\pbar_{t-1}, y_t)}
= O(\beta \E_{S_t}\sbr{\|S_t\|_2^2}) = O(\beta\sigma^2).
\]
Summing over $t$ finishes the proof.
\end{proof}

As mentioned, \citet{luo2024optimal} show that Follow-The-Leader achieves $O(\log T)$ regret for Lipschitz proper losses, which implies that \pref{alg:multiclass} achieves $O(\log T)$ regret for all losses that are Lipschitz and smooth relative to log-barrier.
Moreover, \citet{luo2024optimal} also identify another broad class of proper losses that are not necessarily Lipschitz and for which Follow-The-Leader still achieves $O(\log T)$ regret.
One such example is \pref{eq:log_smooth_ex}, which, as mentioned, belongs to $\calS_\beta^{\log}$ for some $\beta$,
meaning that our \pref{alg:multiclass} also achieves $O(\log T)$ regret in this case.

\paragraph{Analysis for $\boldsymbol{O(\sqrt{T})}$ regret}
We next 
prove the $O(\sqrt{T})$ regret bound of~\pref{thm:multiclass-new-main}, again starting with a restatement of the result that includes an intermediate bound in terms of an arbitrary $\sigma \leq 1/2$.

\begin{proposition}\label{prop:multiclass_proper-new}
For any loss function $\ell\in \calL$, \pref{alg:multiclass} achieves 
\begin{align*}
\textstyle 
\Reg_\ell 
= \otil\rbr{ \frac{K^{2}}{\sigma} + T \sqrt{K} \sigma }
= \otil\rbr{ K^{5/4}\sqrt{T} } .
\end{align*}
\end{proposition}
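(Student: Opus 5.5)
We follow the FTPL stability template of \citet{kalai2005efficient}: a ``be-the-perturbed-leader'' term plus a per-round stability term. The first step is to reduce to a structured class of losses. Every bounded proper loss is (up to an additive term depending only on $y$) induced by a finite or limiting action set via Bayes actions. Regret against a proper loss then equals the regret of the induced action rule $p\mapsto a_p$ against the best fixed action. For such losses, $\ell(p,y)$ depends on $p$ only through which cell of a polyhedral partition of $\Delta_K$ the point $p$ lies in. We therefore aim to bound, uniformly over a finite action set $\A$ with losses in $[-1,1]$, the regret of the randomized rule $a_t = a_{P_t}$. We expect the $\tilde O$ to absorb a discretization of the comparator/actions (e.g.\ a $1/T$-net), contributing only $\log$ factors.

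\textbf{Step 1 (be-the-leader with a prediction-space perturbation).} We couple the noise across rounds by writing $P_t = \pbar_{t-1} + \diag(\pbar_{t-1}) S_t$ with $S_t = \sigma U_t$, where $U_t$ is a uniform point of the unit ball in the hyperplane orthogonal to $\pbar_{t-1}$. Consider the hypothetical prediction $Q_t := \pbar_t + \diag(\pbar_t) S'_t$, which uses the current outcome. Note that $t\,\pbar_t$ plus a perturbation is a ``leader'' for the cumulative loss of the fixed action, since the Bayes action of $\pbar_t$ minimizes $\sum_{s\le t}\hat\ell(a,y_s)$. The perturbation $t\,\diag(\pbar_t)S'_t$ then acts as incremental outcome-space noise of magnitude about $\sigma t$. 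The standard BTL induction, with time-varying perturbation, bounds $\sum_t \E[\ell(Q_t,y_t)] - \min_{p}\sum_t\ell(p,y_t)$ by the total perturbation cost. We expect this to be $\tilde O(T\sqrt K\sigma)$: the perturbation in count space has size $\sigma t$ in a norm that, evaluated against loss vectors in $[-1,1]^K$ and summed over rounds, gives $O(\sigma T\sqrt K)$ after using $\|\diag(\pbar)u\|_1\le\|u\|_2$-type bounds. This produces the term $T\sqrt K\sigma$.

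\textbf{Step 2 (stability).} We bound $\E[\ell(P_t,y_t)]-\E[\ell(Q_t,y_t)]$ by $2\,\mathrm{TV}(\mathcal D(P_t),\mathcal D(Q_t))$. Both are uniform distributions on ellipsoids $E_{t-1}=\pbar_{t-1}+\diag(\pbar_{t-1})(\sigma\fB\cap \pbar_{t-1}^\perp)$ and $E_t$, lying in the affine hull of the simplex. Passing from $\pbar_{t-1}$ to $\pbar_t$ shifts the center by $(y_t-\pbar_{t-1})/t$ and rescales each axis by a factor $\pbar_{t,i}/\pbar_{t-1,i}\in[1-1/t,\,1+1/(t\pbar_{t-1,i})]$. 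The total variation between two uniform measures on convex bodies is bounded by their relative volume difference. The shift contributes about $(\text{dimension})\times$ the center displacement measured in the ellipsoid's own norm, i.e.\ $\|\diag(\pbar_{t-1})^{-1}(y_t-\pbar_{t-1})\|/(t\sigma)$. The rescaling contributes about $\sum_i|\log(\pbar_{t,i}/\pbar_{t-1,i})|$.

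\textbf{The main obstacle.} The shift term blows up when the coordinate $\pbar_{t-1,i}$ with $y_t=e_i$ is tiny, since then $1/\pbar_{t-1,i}\approx t/c_{t-1,i}$. We would handle this by an amortization argument: a coordinate $i$ can be ``rare'' only until its count grows, and $\sum_t 1/c_{t-1,i}$ over rounds with $y_t=e_i$ is $O(\log T)$. Per coordinate, this yields a total of $\tilde O(K\cdot K/\sigma)$. One factor of $K$ comes from the dimension in the volume ratio, and one from summing over coordinates. This gives the $K^2/\sigma$ term, with rounds where $t\sigma\lesssim K$ bounded trivially. Fiddly points include the rotation of the hyperplane $\pbar^\perp$ itself, which should be handled by noting that it is just the image of $\mathbf 1^\perp$ under $\diag(\pbar)$, and the boundary case $\pbar_{t-1,i}=0$, where the ellipsoid is degenerate and the first appearance of label $i$ costs $O(1)$ at most $K$ times. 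Finally, balancing the two terms with $\sigma=K^{3/4}/\sqrt T$ gives $\tilde O(K^{5/4}\sqrt T)$.
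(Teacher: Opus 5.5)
Your overall decomposition matches the paper's: a stability term bounded by total variation, plus a Be-the-Leader bound for the one-step-lookahead predictions. However, Step~1 has a genuine gap.

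The BTL argument gives, pathwise, $\sum_t \bigl(\ell(P_{t+1},y_t)-\ell(\pbar_T,y_t)\bigr)\le 2\sum_t\|\nu_t-\nu_{t-1}\|_1$ with $\nu_t=tZ_{t+1}$. Here $\nu_t-\nu_{t-1}=Z_{t+1}+(t-1)(Z_{t+1}-Z_t)$, so what must be controlled is the size of these \emph{increments}, not of $\nu_t$. Your heuristic that a perturbation of size $\sigma t$, ``summed over rounds'', gives $O(\sigma T\sqrt K)$ conflates the two.

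The Kalai--Vempala device of a single noise vector scaled by $t$ makes the increments telescope, as in the paper's Gaussian appendix. It is unavailable here, because $Z_t$ must lie in $\diag(\pbar_{t-1})\bigl(\sigma\fB_2^K\cap\pbar_{t-1}^\perp\bigr)$, which moves every round. With independent draws, or with the coupling you announce but never specify, $(t-1)\|Z_{t+1}-Z_t\|_1$ typically grows linearly in $t$, and the bound becomes quadratic in $T$.

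The missing idea is an explicit marginal-preserving coupling under which $\|S_{t+1}-S_t\|_2=O(\sqrt K\sigma/t)$. Such a coupling is legitimate because, for an oblivious adversary, $\E[\sum_t\ell(P_{t+1},y_t)]$ depends only on the marginals. The paper sets $S_{t+1}=\rot{\pbar_{t-1}}{\pbar_t}S_t$, the geodesic rotation taking $q_{t-1}$ to $q_t$, where $q_t=\pbar_t/\|\pbar_t\|_2$.
\begin{itemize}
\item This rotation maps $\sigma\fB_2^K\cap\pbar_{t-1}^\perp$ onto $\sigma\fB_2^K\cap\pbar_t^\perp$ and preserves uniformity.
\item It gives $\|S_{t+1}-S_t\|_2\le\sigma\|q_t-q_{t-1}\|_2=O(\sqrt K\sigma/t)$, using $\|\pbar_{t-1}\|_2\ge 1/\sqrt K$.
\item With $D_t=\diag(\pbar_t)$, one also has $\|(D_t-D_{t-1})S_{t+1}\|_1\le\|S_{t+1}\|_1/t\le\sqrt K\sigma/t$.
\end{itemize}
Together these yield $\|\nu_t-\nu_{t-1}\|_1=O(\sqrt K\sigma)$. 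That, not an inequality of the form $\|\diag(\pbar)u\|_1\le\|u\|_2$, is where the $\sqrt K$ comes from.

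Your ``fiddly point'' does not fill this hole. First, $\pbar^\perp$ equals $\diag(\pbar)^{-1}(\ones^\perp)$, not $\diag(\pbar)(\ones^\perp)$. Second, a non-isometric linear map does not carry the uniform law on the Euclidean ball of one hyperplane to the uniform law on the Euclidean ball of another, so it gives no valid coupling.

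There are two smaller points.

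First, the opening reduction to finite action sets is unneeded and, as stated, wrong. A general $\ell\in\calL$ (e.g.\ squared loss) is not piecewise constant on a polyhedral partition, and a $1/T$-net discretization is not justified for non-Lipschitz bounded proper losses. Just drop it. Writing $\ell(p,y)=\inner{\lossvec_p,y}$, properness gives $P_{t+1}\in\argmin_{p\in\Delta_K}\inner{\lossvec_p,\,t\pbar_t+tZ_{t+1}}$ for every proper loss, and the TV step uses only boundedness.

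Second, in Step~2 the TV distance between uniform laws is governed by the relative volume of the symmetric difference (plus the relative volume difference), not by the volume difference alone. With that correction, your translate-then-rescale estimate is a legitimate variant of the paper's single shell argument via $Q_{\sup}$.
\begin{itemize}
\item For the translation: take the centered ellipsoid $C$ and $\varepsilon=\|h\|_C\le 1$. Then $(1-\varepsilon)C\subseteq C\cap(C+h)$ and $C\cup(C+h)\subseteq(1+\varepsilon)C$.
\item For the rescaling: comparing the two quadratic forms gives an analogous sandwich.
\end{itemize}
When $K\varepsilon\lesssim 1$ this reproduces the paper's per-round bound $O\bigl(K^{3/2}/(\sigma t)+K/(\sigma c_{t-1,i_t})\bigr)$. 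Otherwise the trivial bound $1$ is of the same order. So your amortization over counts, with first appearances charged $1$, goes through as in the paper.
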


Unlike the $O(\log T)$ results, due to the complex nature of our self-concordant noise, 
the proof for \pref{prop:multiclass_proper-new} no longer follows the same idea of \pref{sec:warm-up} that establishes equivalence to existing no-regret algorithms in the action space.
Instead, we propose to directly analyze our algorithm via the typical FTPL analysis,
which decomposes the regret into two terms: the first term is the loss/regret difference between FTPL and Be-the-Perturbed-Leader (BTPL), also known as the stability term,
and the second term is the regret of BTPL.
While the decomposition is standard,
bounding each of these two terms requires significantly new ideas.

Specifically,
the following lemma bounds the stability term. Establishing this lemma is a major undertaking, so we sketch a proof in \pref{sec:stability}. A full proof can be found in  \pref{app:stability-new}.

\begin{lemma}[Stability Term] \label{lem:ellipsoid-stability-new}
For any loss function $\loss \in \calL$, under \pref{alg:multiclass}, it holds that
\begin{align*}
\textstyle 
\E\sbr{\sum_{t=1}^T \bigl( \loss(P_t, y_t) - \loss(P_{t+1}, y_t) \bigr)} 
= \otil\left( \frac{K^{2}}{\sigma} \right) .
\end{align*}
\end{lemma}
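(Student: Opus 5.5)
The plan is to reduce stability to a total-variation bound between consecutive prediction distributions, and then control that distance geometrically. Since $\loss$ takes values in $[-1,1]$ and $y_t$ is fixed (oblivious adversary), we have
\[
\E\sbr{\loss(P_t,y_t)-\loss(P_{t+1},y_t)} \le 2\, d_{\mathrm{TV}}\bigl(\mathcal{D}(\pbar_{t-1}),\mathcal{D}(\pbar_t)\bigr),
\]
where $\mathcal{D}(p)$ is the law of $p+\diag(p)S$. The first observation is that $\inner{S,p}=0$ means exactly $\sum_i Z_i=0$. So every $P_t$ lies on the common affine plane $H=\{x:\sum_i x_i=1\}$. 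On the face of $H$ spanned by the coordinates in $\mathrm{supp}(p)$, $\mathcal{D}(p)$ is the uniform distribution on the ellipsoid
\[
E(p)=\{p+\diag(p)s:\ s\perp p,\ \|s\|_2\le\sigma\}.
\]
I will therefore work directly with uniform measures on two ellipsoids in the same affine space, rather than attempting a coupling through $S$.

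\textbf{Step 1 (support changes).} If $y_t=e_i$ with $\pbar_{t-1,i}=0$, the support grows and the dimension of $E$ jumps. I bound the TV by $1$ in such rounds; this happens at most $K$ times, contributing $O(K)$.

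\textbf{Step 2 (reparametrize the update).} Otherwise $\mathrm{supp}(\pbar_t)=\mathrm{supp}(\pbar_{t-1})$. Write $c_{t,i}$ for the count of outcome $i$ and $D=\diag(\pbar_{t-1})$. The update is $\pbar_t=\pbar_{t-1}+(y_t-\pbar_{t-1})/t$, and coordinatewise
\[
\pbar_{t,j}=\tfrac{t-1}{t}\,\pbar_{t-1,j}\ \text{ for } j\neq i,\qquad
\pbar_{t,i}=\tfrac{t-1}{t}\cdot\tfrac{c_{t-1,i}+1}{c_{t-1,i}}\,\pbar_{t-1,i}.
\]
So $\diag(\pbar_t)=D\,(I+\Lambda)$ with $\Lambda$ diagonal, $\Lambda_{jj}=O(1/t)$ for $j\ne i$ and $\Lambda_{ii}=O(1/c_{t-1,i})$. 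Similarly, in the rescaled coordinates $u=D^{-1}(x-\pbar_{t-1})$, the center shift $D^{-1}(\pbar_t-\pbar_{t-1})$ has $i$-th entry $O(1/c_{t-1,i})$ and the others $O(1/t)$, so its norm is $\delta_t=O(1/c_{t-1,i}+\sqrt{K}/t)$. Applying the (TV-invariant) affine map $x\mapsto D^{-1}(x-\pbar_{t-1})$, $E(\pbar_{t-1})$ becomes a $\sigma$-ball $B$ in the hyperplane $\pbar_{t-1}^\perp$. The set $E(\pbar_t)$ becomes a nearly identical ellipsoid $B'$: a slight linear distortion $(I+\Lambda)$ of a ball in the slightly rotated hyperplane $\pbar_t^\perp$, translated by $\delta_t$. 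Both sets, however, live in the same hyperplane $D^{-1}H_0=\{u:\inner{u,\pbar_{t-1}}=0\}$ (here $H_0$ is the direction space of $H$).

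\textbf{Step 3 (TV between the two uniform measures).} I use
\[
d_{\mathrm{TV}}\le 1-\frac{\mathrm{vol}(B\cap B')}{\max(\mathrm{vol}\,B,\mathrm{vol}\,B')}.
\]
I then split the discrepancy into three contributions.
\begin{enumerate}
\item Translation by $\delta$ of a $(K-1)$-dimensional ball of radius $\sigma$ costs $O(\sqrt{K}\,\delta/\sigma)$.
\item Radial distortion by factors $1+\Lambda_{jj}$ costs $O(\sum_j|\Lambda_{jj}|)$, via the volume ratio and the containment $B'\supseteq(1-\max_j|\Lambda_{jj}|)B$ up to the shift. That is $O(1/c_{t-1,i}+K/t)$, and with more care on the anisotropic boundary layer $O(\sqrt{K}/(\sigma c_{t-1,i})+K/(\sigma t))$.
\item The distortion also moves the boundary by up to $\sigma\max_j|\Lambda_{jj}|$. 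A boundary displacement $\epsilon$ costs $O(K\epsilon/\sigma)$, giving $O(K/c_{t-1,i}+K/t)$.
\end{enumerate}
Every per-round bound is capped at $1$.

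\textbf{Step 4 (summation).} I sum using $\sum_t 1/c_{t-1,y_t}\le K(1+\log T)$, since each outcome's count increments through $1,2,\ldots$, and $\sum_t 1/t=O(\log T)$. This gives
\[
O\!\left(\frac{K^{3/2}\log T}{\sigma}+\frac{K^{2}\log T}{\sigma}+K\right)=\otil(K^2/\sigma),
\]
using $\sigma\le1$ to absorb the $\sigma$-free terms.

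\textbf{Main obstacle.} I expect Step 3 to be the hardest part. The ellipsoid is stretched by $D$, so the rescaled picture involves both an anisotropic distortion and the fact that the constraint hyperplane changes from $\pbar_{t-1}^\perp$ to $\pbar_t^\perp$. My first attempt is to write $B'$ as the image of $B$ under an explicit affine map $A$ of the plane $D^{-1}H_0$ onto itself, satisfying $\|A-I\|_{\mathrm{op}}=O(1/c_{t-1,i}+1/t)$ and $|\log\det A|=O(1/c_{t-1,i}+K/t)$. I would then bound $B\symdiff B'$ by the shell $\{\sigma(1-\epsilon)\le\|u\|\le\sigma(1+\epsilon)\}$ shifted by $\delta$, whose relative volume is $O(K\epsilon+\sqrt{K}\delta/\sigma)$. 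Pinning down the dimension factors here, so that they total $K^2/\sigma$ and not worse, is where the real work lies.
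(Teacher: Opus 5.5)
Your proposal is correct, and its core geometric step takes a genuinely different route from the paper. Both arguments reduce to $2\sum_t\|F_{t+1}-F_t\|_{\mathrm{TV}}$ and pay $O(1)$ for the rounds where no geometric bound is applied.

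The paper then compares $E(\pbar_{t-1})$ and $E(\pbar_t)$ in the original coordinates. It uses a closed-form volume formula for the volume ratio, and it traps $E(p)\symdiff E(p')$ in the shell $\{\sigma^2-Q_{\sup}\le Q_p\le \sigma^2+Q_{\sup}\}$, where $Q_{\sup}$ is a uniform bound on $|Q_{p'}-Q_p|$. That single estimate treats the center shift as a worst-case boundary displacement of size about $\sigma\|\delta\|_2$, so it costs $K\|\delta\|_2/\sigma$. It also needs the burn-in conditions $t-1\ge 54K^{3/2}/\sigma$ and $c_{t-1,i_t}\ge 57K/\sigma$ to keep $(1+Q_{\sup}/\sigma^2)^{K/2}=O(1)$. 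The resulting per-round bound is $O(K^{3/2}/(\sigma t)+K/(\sigma c_{t-1,i_t}))$.

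Your rescaled picture splits the update into two pieces. The translation inside $\pbar_{t-1}^\perp$ is charged by the sharp ball-shift estimate $O(\sqrt{K}\|\delta\|_2/\sigma)$. The near-identity linear part is charged by sandwiching plus a determinant bound. Capping at $1$ replaces the burn-in. This gains $\sqrt{K}$ on the $1/\sigma$ terms: your own per-round bounds sum to $\otil(K^{3/2}/\sigma+K^2)$, which is stronger than the lemma needs, before you loosen it to $\otil(K^2/\sigma)$.

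What you pay is handling the tilt from $\pbar_t^\perp$ to $\pbar_{t-1}^\perp$ yourself. The easiest way is to skip constructing $A$ and note the following.
\begin{itemize}
\item $B'-\delta=\{v\in\pbar_{t-1}^\perp:\|(I+\Lambda)^{-1}v\|_2\le\sigma\}$ is a central section of the axis-aligned ellipsoid with semi-axes $\sigma(1+\Lambda_{jj})$.
\item Since $\Lambda_{jj}=-1/t$ for $j\neq i_t$ and $0\le\Lambda_{i_ti_t}<1/c_{t-1,i_t}$, this gives $(1-1/t)B\subseteq B'-\delta\subseteq(1+1/c_{t-1,i_t})B$ directly. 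The asymmetric inner containment also removes the $K/c_{t-1,i_t}$ term of your item~3.
\item The volume ratio is $\mathrm{vol}(B')/\mathrm{vol}(B)=\prod_{j}(1+\Lambda_{jj})\cdot\|\pbar_{t-1}\|_2/\|\pbar_t\|_2$.
\end{itemize}
Your ``with more care'' remark in item~2 is off, since the linear distortion carries no $1/\sigma$, but it is harmless.

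One slip needs fixing. At $t=1$, $\pbar_0$ is uniform while $\pbar_1=y_1$, so the support shrinks and Step~2's claim that the support is unchanged fails. If you phrase Step~1 as ``$c_{t-1,i_t}=0$'', it catches $t=1$ together with every first occurrence, which is still at most $K$ rounds.
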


This bound matches our intuition that the larger the variance of the noise, the more stable the algorithm is.
On the other hand, the next lemma bounds the regret of BTPL (that is, an imaginary algorithm that plays $P_{t+1}$ at time $t$); the bound is increasing linearly in the noise level $\sigma$.
Combining both lemmas proves \pref{prop:multiclass_proper-new}.

\begin{lemma}[Regret of BTPL]\label{lem:BTPL_regret}
For any loss function $\ell\in \calL$, \pref{alg:multiclass} ensures that 
\begin{align*}
\textstyle 
\sum_{t=1}^T \bigl( \loss(P_{t+1}, y_t) - \loss(\pbar_{T}, y_t) \bigr)
= O\rbr{T \sqrt{K} \sigma}.
\end{align*}
\end{lemma}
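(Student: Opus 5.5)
The plan is a Be-the-Leader argument driven by properness, with the perturbation error controlled through the multiplicative (self-concordant) structure of the noise. No probability is used: the bound is claimed for every realization of $S_2,\ldots,S_{T+1}$.

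\textbf{Setup.} Shift the loss by $+1$, so that $\ell\in[0,2]$; this does not change regret differences. Write $\ell(p)\in[0,2]^K$ for the vector with entries $\ell(p,e_i)$, so that $\ell(p,y)=\inner{\ell(p),y}$. Let $Y_t=\sum_{s\le t}y_s=t\,\pbar_t$ and $F_t(p)=\inner{\ell(p),Y_t}$, and let $H(q)=\min_p\inner{\ell(p),q}=\inner{\ell(q),q}$ be the (concave) Bayes risk. By properness, $F_t$ is minimized at $\pbar_t$, with $F_t(\pbar_t)=tH(\pbar_t)$.

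\textbf{Step 1 (multiplicative closeness).} Since $P_{t+1}=\pbar_t+\diag(\pbar_t)S_{t+1}$ and $\|S_{t+1}\|_\infty\le\sigma\le 1/2$, we have coordinatewise $(1-\sigma)\pbar_t\le P_{t+1}\le(1+\sigma)\pbar_t$. Because $\ell\ge 0$ after the shift, for every $q$ this gives $\inner{\ell(q),\pbar_t}\le\frac{1}{1-\sigma}\inner{\ell(q),P_{t+1}}$. Applying this at $q=P_{t+1}$ and then using properness at $P_{t+1}$ yields
\[
\inner{\ell(P_{t+1}),\pbar_t}\le \tfrac{1}{1-\sigma}H(P_{t+1})\le\tfrac{1+\sigma}{1-\sigma}H(\pbar_t).
\]
So $P_{t+1}$ is a $(1+O(\sigma))$-approximate leader for $F_t$.

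\textbf{Step 2 (approximate Be-the-Leader telescoping).} Write $\ell(P_{t+1},y_t)=F_t(P_{t+1})-F_{t-1}(P_{t+1})$. Lower bound $F_{t-1}(P_{t+1})\ge F_{t-1}(\pbar_{t-1})$ by properness, and upper bound $F_t(P_{t+1})$ using Step 1. Summing over $t$ telescopes to
\[
F_T(\pbar_T)+\sum_t\bigl[F_t(P_{t+1})-F_t(\pbar_t)\bigr],
\]
and $F_T(\pbar_T)$ is exactly the comparator term.

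\textbf{Main obstacle.} The error terms $F_t(P_{t+1})-F_t(\pbar_t)=t\bigl(\inner{\ell(P_{t+1}),\pbar_t}-H(\pbar_t)\bigr)$ carry a factor $t$. Step 1 alone only makes each of them $O(t\sigma)$, for a total of $O(\sigma T^2)$, which is too weak; the target requires $O(\sqrt K\sigma)$ per round.

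My first attempt at fixing this is to avoid comparing to the fixed leader $\pbar_t$. Instead I would telescope against a perturbed potential $\Psi_t=\inner{\ell(P_{t+1}),\,t P_{t+1}}=tH(P_{t+1})$, which is the exact minimum of the perturbed objective $F_t(p)+\inner{\ell(p),tZ_{t+1}}$. I would then exploit two facts about the noise:
\begin{itemize}
\item it has zero total mass, $\inner{\ones,Z_{t+1}}=\inner{\pbar_t,S_{t+1}}=0$, so $\ell$ may be re-centered by any constant before pairing it with $Z_{t+1}$;
\item it is relative to $\pbar_t$, which is what Step 1 uses.
\end{itemize}
The hope is that these reduce the per-round discrepancy to something like $\sum_i\pbar_{t,i}|S_{t+1,i}|\,\cdot|\ell_i(P_{t+1})-\ell_i(\pbar_t)|$ rather than a $t$-scaled quantity, and then $\|S_{t+1}\|_1\le\sqrt K\|S_{t+1}\|_2\le\sqrt K\sigma$ gives the $O(\sqrt K\sigma)$ per round needed for the claimed $O(T\sqrt K\sigma)$.

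If that route fails, the fallback is to write every $\ell\in\calL$ as a mixture of action-induced losses $\ell(p,y)=\hat\ell(a_p,y)$. For each such loss I would carry out the per-round comparison of $a_{P_{t+1}}$ with $a_{\pbar_t}$ directly, using the same coordinatewise sandwich $(1\pm\sigma)\pbar_t$.
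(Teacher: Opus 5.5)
Your Steps 1 and 2 are correct, but the gap is more basic than the factor $t$ you flag. The bound you set out to prove, for every realization of independently drawn $S_2,\ldots,S_{T+1}$ as in \pref{alg:multiclass}, is false. So neither the re-centering idea nor the action-by-action fallback can close it.

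Here is a counterexample. Take $K=2$, the proper threshold loss $\ell(p,e_1)=\ones\{p_2>1/2\}$, $\ell(p,e_2)=\ones\{p_2\le 1/2\}$, and outcomes alternating $e_1,e_2,e_1,\ldots$ with $T$ even. Then $|\pbar_{t,2}-1/2|\le 1/(2t)$. An admissible $S_{t+1}\perp\pbar_t$ with $\|S_{t+1}\|_2\le\sigma$ can shift $P_{t+1,2}$ by roughly $\sigma/(2\sqrt2)$ in either direction. Hence for every $t\gtrsim 1/\sigma$ some realization puts $P_{t+1}$ on the wrong side of $1/2$ relative to $y_t$. The BTPL loss is then $T-O(1/\sigma)$, while $\pbar_T=(1/2,1/2)$ suffers only $T/2$. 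The left-hand side is therefore $\Omega(T)$, far above $T\sqrt K\sigma=O(K^{5/4}\sqrt T)$. In this example your Step 2 error terms $F_t(P_{t+1})-F_t(\pbar_t)$ equal exactly $1$ in those odd rounds, so the factor $t$ is genuinely there.

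The missing idea is a coupling of the noise across rounds. The adversary is oblivious and $\E[\ell(P_{t+1},y_t)]$ depends only on the marginal law of $P_{t+1}$. So the independent draws may be replaced by any sequence with the same marginals. The paper draws $S_1$ once and sets $S_{t+1}=R(\pbar_{t-1},\pbar_t)S_t$. This is the geodesic rotation taking $\pbar_{t-1}/\|\pbar_{t-1}\|_2$ to $\pbar_t/\|\pbar_t\|_2$, and it maps the uniform law on the old disc to the uniform law on the new one.

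Your potential $\Psi_t=tH(P_{t+1})$ is exactly the paper's Be-the-Leader argument on the perturbed outcomes $y_s+sZ_{s+1}-(s-1)Z_s$. It gives regret at most $2\sum_t\|tZ_{t+1}-(t-1)Z_t\|_1\le 2\sum_t\bigl(\|Z_{t+1}\|_1+(t-1)\|Z_{t+1}-Z_t\|_1\bigr)$.

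What makes this $O(\sqrt K\sigma)$ per round is not the zero-mass or multiplicative structure of the noise; it is the coupling. Under the coupling, $\|S_{t+1}-S_t\|_2\le\sigma\bigl\|\frac{\pbar_t}{\|\pbar_t\|_2}-\frac{\pbar_{t-1}}{\|\pbar_{t-1}\|_2}\bigr\|_2=O(\sqrt K\sigma/t)$. Also $\|(\diag(\pbar_t)-\diag(\pbar_{t-1}))S_{t+1}\|_1\le\sqrt K\sigma/t$. Together these give $\|Z_{t+1}-Z_t\|_1=O(\sqrt K\sigma/t)$, and with $\|Z_{t+1}\|_1\le\sigma$ this yields $O(T\sqrt K\sigma)$. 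The lemma's ``for every realization'' refers to this coupled sequence, not to independent noise.
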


Note that this result holds for all realization of the noise, instead of in expectation only.
We sketch the proof below and defer the full proof to \pref{app:btpl}. \\

\begin{proof}(Sketch, of \pref{lem:BTPL_regret})
First, we show that by affinely extending the loss function in its second argument, 
BTPL can be viewed as Be-The-Leader (BTL) using a sequence of perturbed outcomes $y_t' = y_t + tZ_{t+1}-(t-1)Z_t$.
Applying the standard BTL lemma (e.g., \citealt[Lemma~3.1]{cesabianchi2006prediction}) then shows that the regret of BTPL is of order $\sum_{t=1}^T \|t Z_{t+1} - (t-1) Z_t\|_1$.

The main work is to bound $\|Z_{t+1} - Z_t\|_1$, the difficult part of which is bounding $\|S_{t+1} - S_t\|_2$. Recall that for any $t$, the random variable $S_t$ is uniformly distributed on the intersection $\sigma \fB_2^K$ with the subspace 
$\{s \in \fR^K \colon \langle s, \pbar_{t-1} \rangle = 0\}$. 
Taking the standard Riemannian metric on the unit sphere in $\R^K$, 
there is a unique rotation matrix $R_t$ that sends $\bar{p}_{t-1} / \|\bar{p}_{t-1}\|$ to $\bar{p}_t / \|\bar{p}_t\|$ by traveling along the geodesic between these vectors. 
We observe that $R_t S_t$ and $S_{t+1}$ have the same law. Thus, as we assume an oblivious adversary, we can view the sequence $S_1, S_2, S_3, \ldots$ as being initialized at some $S_1$, with each successive iterate obtained via $S_{t+1} = R_t S_t$. 
This view of the sequence $\{S_t\}_t$ provides a suitable coupling which, after some basic manipulations, gives us good control on $\|S_{t+1} - S_t\|_2$ and allows us to show that $\|Z_{t+1} - Z_t\|_1 = O\rbr{\sqrt{K} \sigma / t}$.
\end{proof}

\section{Analysis of the stability term} \label{sec:stability}

In this section, we sketch our novel analysis for our bound on the stability term (\pref{lem:ellipsoid-stability-new}). 
Our starting point is the following fact (shown in \pref{lem:tv-stability} in \pref{app:stability-new}): letting $F_t$ be the law of random variable $P_t$ and $\|\cdot\|_{\mathrm{TV}}$ be the total variation distance, $\E\sbr{\sum_{t=1}^T \ell(P_t, y_t) - \ell(P_{t+1}, y_t)}$ is at most 
$ 2\sum_{t=1}^T  \|F_{t+1} - F_t\|_{\mathrm{TV}}$.

Now, for any $p$ in the relative interior of $\Delta_K$, define the ellipsoid 
\begin{align*}
\textstyle 
E(p) = 
\left\{ x \in \Delta_K : 
        \sum_{i=1}^K \left( \frac{x_i}{p_i} - 1 \right)^2 \leq \sigma^2
\right\} .
\end{align*}
It is not hard to verify that $P_t$ is drawn from the uniform distribution over $E(\bar{p}_{t-1})$ (when $\bar{p}_{t-1}$ is in the relative interior). 
Therefore, if we can suitably bound the total variation (TV) distance between the uniform distribution on $E(\bar{p}_{t-1})$ and $E(\bar{p}_t)$ for ``most'' rounds (where ``most'' is $T - O(\sqrt{T})$
with respect to $T$), then the stability term will be suitably controlled. The next result bounds this TV distance. 
Let $c_{t, i}$ be the number of times outcome $e_i$ has occurred by the end of round $t$. 

\begin{theorem} \label{thm:ellipsoid-tv-new-main-text}
Assume that $\{p, p'\} = \{\bar{p}_{t-1}, \bar{p}_t\}$, with both $p$ and $p'$ being in the relative interior of $\Delta_K$. Let $d(p, p')$ be the TV distance between the uniform distribution over $E(p)$ and the uniform distribution over $E(p')$. 
If 
$\sigma \leq \frac{1}{2}$, \, 
$t-1 \geq \frac{54 K^{3/2}}{\sigma}$, \, 
and $c_{t-1,i_t} \geq \frac{57 K}{\sigma}$, 
then
\begin{align*}
\textstyle 
d(p, p') 
= O\rbr{
          \frac{K^{3/2}}{\sigma (t-1)}
          + \frac{K}{\sigma c_{t-1,i_t}}
  } .
\end{align*}
\end{theorem}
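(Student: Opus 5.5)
We bound the TV distance between two uniform distributions on convex bodies $A=E(p)$ and $B=E(p')$ with the standard identity
\[
d(p,p') = 1 - \frac{\mathrm{vol}(A\cap B)}{\max\{\mathrm{vol}(A),\mathrm{vol}(B)\}},
\]
where volumes are $(K-1)$-dimensional, taken in the affine hull of $\Delta_K$. So it suffices to exhibit a large common subset of both ellipsoids. Our plan is to find a shrunken copy of one ellipsoid inside the other, $E_\rho \subseteq A\cap B$. A natural candidate is $\{x\in\Delta_K : \sum_i (x_i/p_i-1)^2\le (\sigma(1-\rho))^2\}$. Then $\mathrm{vol}(E_\rho)/\mathrm{vol}(A) = (1-\rho)^{K-1}$. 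We also need to control the volume ratio $\mathrm{vol}(B)/\mathrm{vol}(A)$, which equals a ratio of determinants: $E(p)$ is the image of the fixed set $\{s:\|s\|\le\sigma,\langle s,p\rangle=0\}$ under $\diag(p)$. Together these give $d\le 1-(1-\rho)^{K-1}\min(1,\mathrm{vol}(A)/\mathrm{vol}(B)) \lesssim K\rho + |\log(\mathrm{vol}B/\mathrm{vol}A)|$.

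\textbf{Perturbation between consecutive averages.} Write $p'=\bar p_t$ and $p=\bar p_{t-1}$ (the other assignment is symmetric). With $i=i_t$, we have $p'_j = p_j\,(t-1)/t$ for $j\ne i$ and $p'_i = p_i(t-1)/t + 1/t$. Hence $p'_j/p_j = 1-1/t$ for $j\neq i$, while $p'_i/p_i = 1-1/t + 1/(t\,p_i) = 1-1/t+1/c_{t-1,i}$, using $t\,p_i\approx c_{t-1,i}$.

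For $x\in E_\rho$ we must verify $\sum_j (x_j/p'_j-1)^2\le\sigma^2$. Write $x_j/p'_j - 1 = (x_j/p_j)(p_j/p'_j) - 1 = u_j(1+\delta_j) + \delta_j$, where $u_j = x_j/p_j-1$ and $\delta_j = p_j/p'_j-1$. The $\ell_2$ norm of $u$ is at most $\sigma(1-\rho)$. For the shift, $\|\delta\|_2 \lesssim \sqrt{K}/t + 1/c_{t-1,i}$, and $\|\delta\|_\infty$ is similarly small. So
\[
\|u(1+\delta)+\delta\| \le \sigma(1-\rho)(1+\|\delta\|_\infty) + \|\delta\|_2,
\]
and it suffices to take $\rho \approx \|\delta\|_\infty + \|\delta\|_2/\sigma$, i.e. $\rho \approx \sqrt K/(\sigma t) + 1/(\sigma c_{t-1,i})$. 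Multiplying by $K$ gives exactly the target $K^{3/2}/(\sigma(t-1)) + K/(\sigma c_{t-1,i_t})$.

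The hypotheses $t-1\ge 54K^{3/2}/\sigma$ and $c\ge 57K/\sigma$ ensure $K\rho\le$ a small constant, so that $(1-\rho)^{K-1}\ge 1-K\rho$ is valid and all relative changes are at most $1/2$. This is where $\sigma\le 1/2$ enters.

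\textbf{Volume ratio (main obstacle).} The volume of $E(p)$ is proportional to $\prod_j p_j$ times a correction factor from restricting to the hyperplane $\langle s,p\rangle=0$ and mapping it onto the simplex's affine hull. We would compute it as $\sigma^{K-1}\prod_j p_j \cdot \|p\|_2^{-1}\cdot \sqrt K$, up to a $K$-only constant, by a Gram-determinant / Cauchy–Binet calculation. The ratio $\prod_j p'_j/p_j$ has logarithm $-(K-1)/t + \log(1+1/c)+O(\cdot)$, and $\|p\|/\|p'\|$ changes by a factor $1+O(1/t)$. This would give a log-volume ratio of order $K/t + 1/c$, which is dominated by the main terms since $\sigma\le 1$. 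Getting this determinant formula right, so that the hyperplane normal changing with $p$ does not introduce extra terms, is the step we expect to be hardest. If the exact formula proves messy, the fallback is to avoid it: we would symmetrically show $E_{\rho}(p')\subseteq E(p)$ as well, which sandwiches the volumes and yields the ratio bound with the same $\rho$.
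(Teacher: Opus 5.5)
Your proposal is correct and reaches the same rate by a genuinely different route.

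\textbf{The paper's route.} It bounds $2d$ by the relative volume difference plus the relative volume of $E(p)\symdiff E(p')$ (\pref{lem:tv-unif}). It then traps the symmetric difference in the shell $\{\sigma^2-Q_{\sup}\le Q_p\le\sigma^2+Q_{\sup}\}$. This forces an additive bound on $Q_{\sup}=\sup|Q_{p'}-Q_p|$ over $E(p)\cup E(p')$ (\pref{lem:Qsup-new}), which itself needs an a priori bound on $\|s\|_2$ for $x\in E(p')$. The hypotheses on $t-1$ and $c_{t-1,i_t}$ are used there to keep $KQ_{\sup}/\sigma^2\le 0.67$, so that the shell factor $e^{KQ_{\sup}/(2\sigma^2)}$ stays bounded.

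\textbf{Your route.} You use the exact identity $d=1-V(A\cap B)/\max\{V(A),V(B)\}$ with an inner shrunken ellipsoid. Working with the unsquared norm, $\|u\circ(1+\delta)+\delta\|_2\le\|u\|_2(1+\|\delta\|_\infty)+\|\delta\|_2$, gives a multiplicative radius comparison in one line. With $\|\delta\|_\infty\le 1/c_{t-1,i_t}$ and $\|\delta\|_2\le\sqrt{K}/(t-1)+1/c_{t-1,i_t}$ from \pref{lem:rho-min-max}, this replaces all of the $Q_{\sup}$ bookkeeping.

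\textbf{Your fallback is the cleanest version.} Because \pref{lem:rho-min-max} bounds the coordinate ratios in both directions, the same $\rho$ also gives $E_\rho(p')\subseteq E(p)$. Hence $d\le 1-(1-\rho)^{2(K-1)}\le 2(K-1)\rho$, with no volume formula at all. Bernoulli's inequality holds for every $\rho\in[0,1]$, and $d\le 1$ trivially, so this yields $d\le\min\{1,2(K-1)\rho\}$ using only $\sigma\le 1$. The lower-bound hypotheses on $t-1$ and $c_{t-1,i_t}$ are therefore not needed on your route. They are harmless, since those rounds are discarded in \pref{lem:ellipsoid-stability-new} anyway.

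\textbf{Two small remarks on your main (non-fallback) path.}

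First, your volume formula $V(p)\propto\sigma^{K-1}\prod_j p_j\cdot\sqrt{K}/\|p\|_2$ is correct, because $E(p)-p=\sigma\,\diag(p)\fB_2^K\cap\ones^\perp$ is a \emph{section} of an ellipsoid. The formula $\prod_i p_i\,(\frac{1}{K}\sum_i p_i^{-2})^{1/2}$ in \pref{lem:newest-volume} is instead the volume of the \emph{projection} onto $\ones^\perp$. It agrees with yours only at uniform $p$; for $K=2$ and $p=(0.1,0.9)$ the two differ by a factor $41/9$. This does not affect the theorem, since only $V(p')/V(p)$ enters and both expressions change by a factor $e^{\pm O(K/c_{t-1,i_t})}$.

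Second, $\|p'\|_2/\|p\|_2$ changes by $1+O(1/c_{t-1,i_t})$, not $1+O(1/t)$, because coordinate $i_t$ moves by a relative amount up to $1/c_{t-1,i_t}$. This is still dominated by $K/(\sigma c_{t-1,i_t})$, so your conclusion stands.
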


Our proof sketch of \pref{thm:ellipsoid-tv-new-main-text} is somewhat long, so we first sketch a proof of  of \pref{lem:ellipsoid-stability-new}.

\begin{proof}(of \pref{lem:ellipsoid-stability-new})
Our starting point is \pref{lem:tv-stability}. We will use \pref{thm:ellipsoid-tv-new-main-text} to bound the individual total variation terms $\|F_{t+1} - F_t\|_{\mathrm{TV}}$ for most rounds $t$. To apply \pref{thm:ellipsoid-tv-new-main-text}, we need to ensure that in most rounds, 
both $t$ and $c_{t-1,i_t}$ are suitably large. 

There are at most $O\rbr{\frac{K^{3/2}}{\sigma}}$ rounds where the condition 
$t-1 \geq \frac{54 K^{3/2}}{\sigma}$ 
fails to hold, and there are most $O\rbr{\frac{K^2}{\sigma}}$ rounds where the condition 
$c_{t-1,i_t} \geq \frac{57 K}{\sigma}$ 
fails to hold; 
for all such rounds, we trivially bound $\|F_{t+1} - F_t\|_{\mathrm{TV}}$ by $1$,
while for the remaining rounds, denoted by $\calT$, we apply \pref{thm:ellipsoid-tv-new-main-text} to bound $\|F_{t+1} - F_t\|_{\mathrm{TV}}$.\footnote{Note that $\bar{p}_{t-1}$ and $\bar{p}_t$ might not be in the relative interior of $\Delta_K$; however, they must be in the relative interior of $\Delta_{k}$ for some $k \leq K$.
Therefore, we can still apply \pref{thm:ellipsoid-tv-new-main-text} in the space of $\Delta_{k}$ where the conditions 
$t-1 \geq \frac{54 k^{3/2}}{\sigma}$ 
and 
$c_{t-1,i_t} \geq \frac{57 k}{\sigma}$ 
hold and the conclusion 
$d(p, p')
 = \O\rbr{
       \frac{k^{3/2}}{\sigma t} 
       + \frac{k}{\sigma c_{t-1,i_t}}
   } 
 = \O\rbr{
       \frac{K^{3/2}}{\sigma t} 
       + \frac{K}{\sigma c_{t-1,i_t}}
   }$
is 
the same.}
Together, this leads to $\E\sbr{\sum_{t=1}^T \ell(P_t, y_t) - \ell(P_{t+1}, y_t)}$ being at most 
$2(T- |\calT|) + \E\sbr{\sum_{t \in \calT} \ell(P_t, y_t) - \ell(P_{t+1}, y_t)}$, which is
\begin{align*}
O\rbr{ \frac{K^2}{\sigma} + \frac{K^{3/2}}{\sigma} \sum_{t \in \calT} \frac{1}{t} + \frac{K}{\sigma} \sum_{t \in \calT} \frac{1}{c_{t-1,i_t}} } 
= O\rbr{ \frac{K^2 + K^{3/2} \log T + K^2 \log \frac{T}{K}}{\sigma} }
= \otil\rbr{ \frac{K^2}{\sigma} } 
.
\end{align*}
\end{proof}

We now sketch a proof of \pref{thm:ellipsoid-tv-new-main-text}.
\begin{proof}(Sketch, of \pref{thm:ellipsoid-tv-new-main-text})
Let $V(A)$ be the $(K-1)$-dimensional volume of a set $A$. We abuse notation and let $V(p)$ be shorthand for $V(E(p))$ (likewise for $p'$). Assuming $V(p) \leq V(p')$ without loss of generality. 
First, we show that twice the total variation distance $d(p, p')$ is at most
\begin{align*}    
\textstyle 
2 d(p, p') 
\leq \frac{|V(p') - V(p)|}{\max\{V(p), V(p')\}} 
     + \frac{V(E(p) \symdiff E(p'))}{V(p)} ,
\end{align*}
i.e., the sum of the relative volume difference and the relative volume of the symmetric difference.

The relative volume difference is the easier term to control. We can show that 
\begin{align*}
V(p) 
= C_{K-1} \cdot \sigma^{K-1}  
  \left( \prod_{i=1}^K p_i \right) 
  \left( \frac{1}{K} \sum_{i=1}^K \frac{1}{p_i^2} \right)^{1/2} ,
\end{align*}
for $C_{K-1}$ the volume of the unit $\ell_2$ ball in $\R^{K-1}$. 
For $i \in [K]$, let $p'_i = p_i (1 + \delta_i)$. Also, let $i_t$ satisfy $y_t = e_{i_t}$. 
We can 
show that $\max_{i \neq i_t} |\delta_i| \leq \frac{1}{t-1} \leq \frac{1}{c_{t-1,i_t}}$ and $|\delta_{i_t}| \leq \frac{1}{c_{t-1,i_t}}$. 
Thus, 
$p$ and $p'$ are entry-wise close in a relative sense, 
allowing us to show the relative volume difference is 
$O(\frac{K}{c_{t-1,i_t}})$.

Next, we explain how we control the relative volume of the symmetric difference of $E(p)$ and $E(p')$. Define $Q_p(x) = \sum_{i=1}^K (x_i/p_i - 1)^2$ and $Q_{\sup} = \sup_{x \in E(p) \cup E(p')} |Q_{p'}(x) - Q_p(x)|$. Observe that $E(p) = \{x \in \Delta_K \colon 0 \leq Q_p(x) \leq \sigma^2 \}$. It is possible to show that a scaled up version of $E(p)$ contains $E(p')$ while a scaled down version of $E(p)$ is contained in 
$E(p) \cap E(p')$. Therefore, the symmetric difference is contained in the ``shell'' formed by the boundaries between the scaled up version and scaled down version of $E(p)$. In particular, we use the shell
\begin{align*}
S(p, Q_{\sup}) = \{x \in \Delta_K \colon \sigma^2 - Q_{\sup} \leq Q_p(x) \leq \sigma^2 + Q_{\sup} \} .
\end{align*}
The relative volume of the symmetric difference therefore satisfies
\begin{align*}
\textstyle 
\frac{V(E(p) \symdiff E(p'))}{V(p)} 
\leq \frac{V(S(p, Q_{\sup}))}{V(p)} 
= \left( 1 + \frac{Q_{\sup}}{\sigma^2} \right)^{(K-1)/2} 
  - \left( 1 - \frac{Q_{\sup}}{\sigma^2} \right)^{(K-1)/2} .
\end{align*}
Finally, using the convexity of $b \mapsto b^{K-1}$, we show the above to be at most $e^{\frac{K Q_{\sup}}{2 \sigma^2}} \cdot \frac{2 K Q_{\sup}}{\sigma^2}$. By carefully controlling $Q_{\sup}$ using that all $i \neq i_t$ satisfy 
$|\delta_i| \leq \frac{1}{t-1}$ 
while $i_t$ satisfies 
$|\delta_{i_t}| \leq \frac{1}{c_{t-1,i_t}}$, 
we are able to show that $\frac{K Q_{\sup}}{2 \sigma^2} \leq 1$ (which controls the exponential term) and 
$Q_{\sup} = O\rbr{ \frac{\sqrt{K} \sigma}{t} + \frac{\sigma}{c_{t-1,i_t}} }$. 
Basic algebra then shows that the relative volume of the symmetric difference is 
$O\rbr{ \frac{K^{3/2}}{\sigma t} + \frac{K}{\sigma c_{t-1,i_t}} }$, 
which dominates 
our bound on 
the relative volume difference and matches the stated bound.
\end{proof} 

\section{Discussion and future work}
\label{sec:discussion}

\paragraph{Optimal dependence on $K$.} 
 In light of prior results from \cite{luo2024optimal}, for general, bounded losses we do not obtain the optimal dependence on $K$. We initially tried to use FTPL with Gaussian perturbations, but due to issues in extending the loss and dealing with projections, we ultimately adopted self-concordant noise. If one ignores the issues with perturbed probability vectors being outside the simplex (or put differently, if one allows improper learning), then $O(\sqrt{T K})$ regret for general losses is indeed achievable using Gaussian noise; see \pref{app:gaussian} for details. 

\paragraph{Simultaneous optimality.}
While our work establishes promising results, our initial motivating question is still wide open:
does there exist an online forecasting algorithm which achieves the optimal regret rate (up to constant or logarithmic factors in $T$) for \textit{every} proper loss?
Even setting aside such simultaneous optimality, even the question of characterizing which proper losses have which optimal regret rates is still open, as far as we know.

\acks{We thank Bobby Kleinberg and Bo Waggoner for helpful discussions. HL is supported by NSF award IIS-1943607. NM was supported by the NSERC Discovery Grant RGPIN-2025-05257.}

\bibliography{refs}

\appendix

\section{Additional Related Work} \label{app:more-related-work}

\paragraph{Prediction with Expert Advice} 
In the game of prediction with expert advice, in each round, each of a collection of experts provides the learning algorithm with advice (an action in some action space). For a finite number $N$ of experts, 
\citet[Corollary 2]{chernov2009prediction} showed that a single algorithm can simultaneously get $O(\log(N))$ regret against all experts, where for each expert, the regret can be measured using a potentially different, proper, \emph{mixable} \citep{vovk1998game} loss function (zero-one loss and, more generally, V-shaped losses are not mixable). In contrast to their setting, in our work one can view each element of the simplex as an expert (so, in general, we have infinitely many experts, and these experts are constant experts\footnote{A constant expert is an expert whose advice does not change from round to round.}). In a slightly later paper with binary outcomes --- and still with finitely many experts --- \citet[Theorem 12]{chernov2010prediction} show that it is possible to get $O(1)$ regret against each expert under squared loss and $\otil(\sqrt{T})$ expected regret against each expert under zero-one loss, where in both cases there is a logarithmic dependence on $N$; however, as we now explain, this result is not a simultaneous regret result due to an important nuance. The reason their algorithm does not fit the simultaneous regret paradigm we consider here is that, when predicting $p = 1/2$ for squared loss, their algorithm is allowed submit a different (randomized in a way that the algorithm wishes) prediction for 0-1 loss; we do not have this freedom in our paradigm. 
Both these papers of Chernov and Vovk use the technique of defensive forecasting. Although there are large differences in the setting --- for one, the bounds of Chernov and Vovk would become infinite in our setting --- it is interesting that prior work over 15 years ago had considered notions close to simultaneous regret.

\paragraph{Self-Concordant Barriers}
Self-concordant barriers play a central role in convex optimization such as interior-point methods~\citep{nesterov1994interior}.
\citet{abernethy2009competing} showed that they can be used as a regularizer in the standard Follow-the-Regularized-Leader (FTRL) framework to handle the exploration issue in adversarial linear bandits.
In particular, their algorithm explores the surface of a Dikin ellipsoid centered at the decision point output by FTRL. 
Our self-concordant noise of \pref{alg:multiclass} is closely related to this idea --- it can be shown that our prediction $P_t$ is sampled from a scaled-down version of a Dikin ellipsoid centered at $\pbar_{t-1}$ (that is, $E(\pbar_{t-1})$ defined in \pref{sec:stability}).

\section{Omitted proofs from \pref{sec:warm-up}}

We prove \pref{lem:squared-loss-decomp}.

\begin{proof}
  We have
\begin{align*}
 \E [ (P_t-y_t)^2 ] &= \E [ (P_t-\mu_t+\mu_t-y_t)^2 ] \\
 &=  (\mu_t-y_t)^2 + \E [ (P_t-\mu_t)^2] + \E [ (P_t-\mu_t)(\mu_t-y_t)] \\
 &= (\mu_t-y_t)^2 + \sigma^2_t,
\end{align*}
and thus
  \begin{align*}
    \Reg_{\ellsq}
    &= \sum_{t=1}^T \E[(P_t-y_t)^2] - \sum_{t=1}^T (\bar p_T - y_t)^2
    \\
    &= \sum_{t=1}^T (\mu_t-y_t)^2 + \sigma^2_t - \sum_{t=1}^T (\bar p_T - y_t)^2
    \\
    &= \Reg_{\ellsq}(\{\mu_t\}_t) + \sum_{t=1}^T \sigma^2_t~.
  \end{align*}
\end{proof}

\section[Remaining proof for $\beta$-smooth proper losses]{Remaining proof for $\boldsymbol{\beta}$-smooth proper losses}
\label{app:beta-smooth}

The following lemma was used in the proof of \pref{prop:multiclass_smooth}.

\begin{lemma} \label{lem:subgaussian-sphere}
It holds that
\begin{align*}
\E_{S_t}\sbr{\|S_t\|_\infty^2} 
\leq \frac{8 \sigma^2 \log K}{K-1} .
\end{align*}
\end{lemma}
\begin{proof}
Without loss of generality, we prove the claim for $\sigma = 1$. 
We write $S$ instead of $S_t$.

First, we define $\tilde{S} = \frac{S}{\|S\|}$, which can only be larger coordinate-wise than $S$. Note that $\tilde{S}$ is distributed according to the uniform distribution on a great circle of the sphere $\mathbb{S}^K_2$. Therefore, an equivalent way to generate $\tilde{S}$ is to first draw a random variable $X$ from the uniform distribution on $\mathbb{S}^{K-1}_2 \times \{0\} \subset \R^K$ and to then, for a suitable rotation matrix $R$ in the special orthogonal group $\mathrm{SO}(K)$, set $\tilde{S} = R X$. With this construction, it follows that
\begin{align*}
\E_{S}\sbr{\|S\|_\infty^2} 
\leq \E_{X}\sbr{\|R X\|_\infty^2} 
= \E_{X}\sbr{ \max_{j \in [K]} (R X)_j^2 } .
\end{align*}

From Theorem 3.4.5 of \cite{vershynin2025high}, $X$ is $\frac{1}{K-1}$-subgaussian, meaning that for all unit vectors $v \in \R^K$, it holds that
\begin{align*}
\Pr \left( \langle v, X \rangle > u \right) \leq 2 \exp \left(-\frac{u^2 (K-1)}{2} \right) .
\end{align*}
Since $R$ is orthogonal, considering $\langle v, R X \rangle = \langle R^T v, X \rangle$, it follows that $R X$ also $\frac{1}{K-1}$-subgaussian. In particular, for all $j \in [K]$, the random variable $(R X)_j = \langle e_j, R X \rangle$ is $\frac{1}{K-1}$-subgaussian. 

Let $W = R X$. 
Therefore,
\begin{align*}
\Pr \left( \max_{j \in [K]} W_j^2 \geq u \right) 
&\leq K \max_{j \in [K]} \Pr \left( W_j^2 \geq u \right) \\
&= K \max_{j \in [K]} \Pr \left( |W_j| \geq \sqrt{u} \right) \\
&\leq 2 K \exp \left( -\frac{u (K-1)}{2} \right) .
\end{align*}

Hence, for any $\alpha \geq 0$,
\begin{align*}
\E \left[ \max_{j \in [K]} W_j^2 \right] 
&= \int_0^\infty \Pr \left( \max_{j \in [K]} W_j^2 \geq u \right) du \\
&= \int_0^\alpha \Pr \left( \max_{j \in [K]} W_j^2 \geq u \right) du 
  + \int_\alpha^\infty \Pr \left( \max_{j \in [K]} W_j^2 \geq u \right) du \\
&\leq \alpha
      + 2 K \int_\alpha^\infty \exp \left( -\frac{u (K-1)}{2} \right) du \\
&\leq \alpha
      + \left[ -\frac{4 K}{K-1} \exp \left( -\frac{u (K-1)}{2} \right) \right|^\infty_\alpha \\
&= \alpha
   + \frac{4 K}{K-1} \exp \left( -\frac{\alpha (K-1)}{2} \right) .
\end{align*}
Setting $\alpha = \frac{2 \log K}{K-1}$ gives the bound
\begin{align*}
\frac{2 \log K}{K-1} + \frac{4}{K-1} 
\end{align*}
which, for $K \geq 2$, is at most $\frac{8 \log K}{K-1}$.
\end{proof}

\section{Analysis of stability term} \label{app:stability-new}

We begin by stating the following generic stability result that was used at the start of \pref{sec:stability}.

\begin{lemma} \label{lem:tv-stability}
For all $t \in \{1, \ldots, T+1\}$, let $F_t$ be the law of the random variable $P_t$. Then for any loss function taking values in $[-1, 1]$,
\begin{align*}
\textstyle 
\E\sbr{\sum_{t=1}^T \ell(P_t, y_t) - \ell(P_{t+1}, y_t)}
\leq 2\sum_{t=1}^T  \|F_{t+1} - F_t\|_{\mathrm{TV}} ,
\end{align*}
where $\|\cdot\|_{\mathrm{TV}}$ is the total variation distance.
\end{lemma}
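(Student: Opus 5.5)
The plan is to treat each round separately and reduce the claim to the elementary fact that, for a bounded function, its expectation under two distributions differs by at most its range times their total variation distance. Under the oblivious adversary, $y_1,\ldots,y_T$ are fixed in advance. So for each $t$, the pair $(P_t, y_t)$ has $y_t$ deterministic and $P_t \sim F_t$. Likewise $P_{t+1} \sim F_{t+1}$, because $P_{t+1}$ depends only on $y_1,\ldots,y_t$ and the fresh noise $S_{t+1}$. By linearity of expectation, the left-hand side equals
\begin{align*}
\sum_{t=1}^T \Bigl( \E_{X\sim F_t}\bigl[\ell(X, y_t)\bigr] - \E_{X\sim F_{t+1}}\bigl[\ell(X, y_t)\bigr] \Bigr).
\end{align*}
The joint coupling between $P_t$ and $P_{t+1}$ plays no role, since only the marginals enter.

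For fixed $t$, set $f(x) = \ell(x, y_t)$, a measurable function on $\Delta_K$ with values in $[-1,1]$. If needed, measurability in the first argument is assumed, as it is implicit in the regret being well defined. Write $\mu = F_t$ and $\nu = F_{t+1}$, and take the Jordan decomposition $\mu - \nu = (\mu-\nu)^+ - (\mu-\nu)^-$. The two parts have equal mass, and that mass is $\|\mu-\nu\|_{\mathrm{TV}}$ under the convention $\|\mu-\nu\|_{\mathrm{TV}} = \sup_A |\mu(A)-\nu(A)|$. Then
\begin{align*}
\int f\, d(\mu-\nu) \le \sup f \cdot (\mu-\nu)^+(\Delta_K) - \inf f \cdot (\mu-\nu)^-(\Delta_K) \le (1-(-1))\,\|\mu-\nu\|_{\mathrm{TV}} = 2\|F_t - F_{t+1}\|_{\mathrm{TV}}.
\end{align*}
An alternative route uses the maximal coupling of $F_t$ and $F_{t+1}$. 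It gives $\Pr[X\neq X'] = \|F_t-F_{t+1}\|_{\mathrm{TV}}$, and on the event $X\neq X'$ the loss difference is at most $2$. Summing this per-round bound over $t=1,\ldots,T$ gives the claim.

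There is no real obstacle here. The one point needing care is the TV normalization. If the paper instead defines the TV distance as the full $L^1$ norm $\int|d\mu-d\nu|$, then the constant $2$ is still valid and in fact loose by a factor of two. The other point is the use of obliviousness, which is what ensures $y_t$ is independent of $P_t$ and $P_{t+1}$ so that each expectation depends only on the marginal law. Footnote~\ref{fn:adaptive_adversary} already addresses the adaptive case.
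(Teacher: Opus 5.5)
Your proof is correct and is essentially the paper's argument. The paper writes the per-round difference as $\int \ell(p,y_t)\,(f_t(p)-f_{t+1}(p))\,dp$ and uses $|\ell|\le 1$ to bound it by $\int |f_t-f_{t+1}| = 2\|F_{t+1}-F_t\|_{\mathrm{TV}}$, which is the same bound your Jordan-decomposition argument gives, except that your version does not need Lebesgue densities (the paper assumes them tacitly; one could instead take densities with respect to $F_t+F_{t+1}$).
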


\begin{proof}
Let $f_t$ and $f_{t+1}$ be the density functions of $F_t$ and $F_{t+1}$ respectively.
Then 
\begin{align*}
\E\sbr{\ell(P_t, y_t) - \ell(P_{t+1}, y_t)} & =\int_{\Delta_K} \ell(p,y_t)(f_t(p)-f_{t+1}(p))dp \\
&\leq \int_{\Delta_K} |f_t(p)-f_{t+1}(p)|dp = 2\|F_{t+1} - F_t\|_{\mathrm{TV}}.
\end{align*}
Summing over $t$ finishes the proof.
\end{proof}

\begin{proof}(of \pref{thm:ellipsoid-tv-new-main-text})
Throughout the proof, we adopt the notation $\rho_{\min} = \frac{1}{t-1}$ and $\rho_{\max} = \frac{1}{c_{t-1,i_t}}$.

First, for $i \in [K]$, let $p'_i = p_i (1 + \delta_i)$. Also, let $i_t$ satisfy $y_t = e_{i_t}$. 
\pref{lem:rho-min-max} implies for all $i \neq i_t$ that $|\delta_i| \leq \rho_{\min}$, and the same lemma implies that $|\delta_{i_t}| \leq \rho_{\max}$. 
Observe that $\rho_{\min} \leq \rho_{\max}$.

Let $V(A)$ be the $(K-1)$-dimensional volume of a set $A$, and $V(p)$ be a shorthand of $V(E(p))$. 
As shown in \pref{lem:tv-unif}, the TV distance can be bounded as:
\begin{align*}    
2 d(p, p') 
\leq \frac{|V(p') - V(p)|}{\max\{V(p), V(p')\}} 
     + \frac{V(E(p) \symdiff E(p'))}{\min\{V(p), V(p')\}} .
\end{align*}

\paragraph{Step 1: Bounding the Relative Volume Difference}

As we show in \pref{lem:newest-volume}, the volume of $E(p)$ is given by
\begin{align*}
V(p) 
= C_{K-1} \cdot \sigma^{K-1}  
  \left( \prod_{i=1}^K p_i \right) 
  \left( \frac{1}{K} \sum_{i=1}^K \frac{1}{p_i^2} \right)^{1/2} ,
\end{align*}
where $C_{K-1}$ is the volume of the unit $(K-1)$-ball. 
We analyze the logarithmic difference:
\begin{align*}
\log \frac{V(p')}{V(p)} 
= \sum_{i=1}^K \log(1+\delta_i) 
  + \frac{1}{2} 
    \log\left(
        \frac{\sum_{i=1}^K \frac{1}{(p_i')^2}}
             {\sum_{i=1}^K \frac{1}{p_i^2}} \right).
\end{align*}

Since $| \delta_i | \leq \rho_{\max} < 1/2$, we have $| \log(1 + \delta_i)| \leq 2 \rho_{\max}$. 
Thus, $\left| \sum_{i=1}^K \log(1 + \delta_i) \right| \leq 2 K \rho_{\max}$.
Also,
\begin{align*}
\frac{1}{(p'_i)^2} 
= \frac{1}{(p_i (1 + \delta_i))^2} 
\leq \frac{1}{(1 - \rho_{\max})^2} \frac{1}{p_i^2}
\end{align*}
and hence 
\begin{align*}
\frac{1}{2} 
\log\left(
    \frac{\sum_{i=1}^K \frac{1}{(p_i')^2}}
         {\sum_{i=1}^K \frac{1}{p_i^2}} \right)
\leq \log \frac{1}{1 - \rho_{\max}}
= \log \left(1 + \frac{\rho_{\max}}{1 - \rho_{\max}} \right)
\leq \frac{\rho}{1 - \rho_{\max}}
\leq 2 \rho_{\max} .
\end{align*}
Therefore, $\left| \log \frac{V(p')}{V(p)} \right| \leq 2 K \rho_{\max} + 2 \rho_{\max} \leq 3 K \rho_{\max} \leq 1$. 
The relative volume difference is thus bounded as
\begin{align*}
\frac{|V(p') - V(p)|}{V(p)} 
&= \left| e^{\log(V(p')/V(p))} - 1 \right| \\
&\leq \left| e^{3 K \rho_{\max}} - 1 \right| \\
&\leq 6 K \rho_{\max} .
\end{align*}

\paragraph{Step 2: Bounding the Volume of the Symmetric Difference} 

Assume $V(p) \leq V(p')$ without loss of generality. 
Let $Q_p(x) = \sum_{i=1}^K (x_i/p_i - 1)^2$
and $Q_{\sup} = \sup_{x \in E(p) \cup E(p')} \left| Q_{p'}(x) - Q_p(x) \right|$.
By \pref{lem:Qsup-new}, we have
\begin{align}
Q_{\sup} 
&\leq 16 K \rho_{\min}^2 
      + 36 \sqrt{K} \rho_{\min} \sigma 
      + 38 \rho_{\max} \sigma \nonumber \\
&\leq \frac{1.34 \sigma^2}{K} \label{eqn:Qsup-bound-simpler} \\
&\leq \sigma^2 \nonumber ,
\end{align}
where the second inequality follows because 
$\rho_{\min} \leq \frac{\sigma}{54 K^{3/2}}$ and $\rho_{\max} \leq \frac{\sigma}{57 K}$. 
Now, we claim that the symmetric difference $E(p) \symdiff E(p')$ is contained in the shell
\begin{align*}
S(p, Q_{\sup}) 
= \left\{
        x \in \Delta_K : \sigma^2 - Q_{\sup} \leq Q_p(x) \leq \sigma^2 + Q_{\sup} 
    \right\}.
\end{align*}
Indeed, if $x \in E(p) \setminus E(p')$, then 
$Q_p(x) \leq \sigma^2$ 
and 
$Q_p(x) \geq Q_{p'}(x) - Q_{\sup} > \sigma^2 - Q_{\sup}$.
Similarly, if $x \in E(p') \setminus E(p)$, then 
$Q_p(x) > \sigma^2$ 
and $Q_p(x) \leq Q_{p'} + Q_{\sup} \leq \sigma^2 + Q_{\sup}$.

This implies:
\[
\frac{V(E(p) \symdiff E(p'))}{\min\{V(p), V(p')\}} \leq \frac{V(S(p, Q_{\sup}))}{V(p)}.
\]
To proceed, we note that the relative volume of the shell is:
\begin{align}
\frac{V(S(p, Q_{\sup}))}{V(p)} 
&= \left(1 + \frac{Q_{\sup}}{\sigma^2}\right)^{(K-1)/2} - \left(1 - \frac{Q_{\sup}}{\sigma^2}\right)^{(K-1)/2}.\notag
\end{align}
Using the the fact that $a^{K-1} - b^{K-1} \leq (K-1)a^{K-2}(a-b)$ due to convexity,
we plug in $a = \sqrt{1 + \frac{Q_{\sup}}{\sigma^2}}$ and $b = \sqrt{1-\frac{Q_{\sup}}{\sigma^2}}$ to arrive at
\begin{align*}
\frac{V(S(p, Q_{\sup}))}{V(p)} 
&\leq (K-1)\left(1 + \frac{Q_{\sup}}{\sigma^2}\right)^{\frac{K}{2}-1}\left(\sqrt{1 + \frac{Q_{\sup}}{\sigma^2}} - \sqrt{1 - \frac{Q_{\sup}}{\sigma^2}}\right) \\
&\leq K e^{\frac{K Q_{\sup}}{2\sigma^2}}\cdot \frac{\frac{2Q_{\sup}}{\sigma^2}}{\sqrt{1 + \frac{Q_{\sup}}{\sigma^2}}+\sqrt{1 - \frac{Q_{\sup}}{\sigma^2}}} \\
&\leq e^{\frac{K Q_{\sup}}{2\sigma^2}}\cdot \frac{2 K Q_{\sup}}{\sigma^2}.
\end{align*}

Recall from \eqref{eqn:Qsup-bound-simpler} that $Q_{\sup} \leq \frac{1.34 \sigma^2}{K}$, and so $\frac{K Q_{\sup}}{2 \sigma^2} \leq 0.67$, making $e^{\frac{K Q_{\sup}}{2 \sigma^2}} = O(1)$.

We then have
\begin{align*}
\frac{V(E(p) \symdiff E(p'))}{\min\{V(p), V(p')\}} 
\leq \frac{V(S(p, Q_{\sup}))}{V(p)} 
&= O\rbr{
      \frac{K^2 \rho_{\min}^2}{\sigma^2} 
      + \frac{K^{3/2} \rho_{\min}}{\sigma} 
      + \frac{K \rho_{\max}}{\sigma}
  } \\
&= O\rbr{
      \frac{K^{3/2} \rho_{\min}}{\sigma} 
      + \frac{K \rho_{\max}}{\sigma} 
  } ,
\end{align*}
where the last line follows from our assumption $\rho_{\min} \leq \frac{\sigma}{54 K^{3/2}}$.

\paragraph{Step 3: Conclusion}

Combining the bounds, we have:
\begin{align*}
d(p, p') 
&= 
O\rbr{
    K \rho_{\max}
    + \frac{K^{3/2} \rho_{\min}}{\sigma} 
    + \frac{K \rho_{\max}}{\sigma}
} \\
&= 
O\rbr{ 
    \frac{K^{3/2} \rho_{\min}}{\sigma} 
    + \frac{K \rho_{\max}}{\sigma}
} ,
\end{align*}
which completes the proof.
\end{proof}

\section{Regret of BTPL (Proof of \pref{lem:BTPL_regret})}
\label{app:btpl}

We first present the following useful lemma.

\begin{lemma}\label{lem:BTPL-direct}
For any proper loss $\ell \in \calL$ and for any sequence $Z_1, \ldots, Z_{T+1} \in \R^K$ satisfying $P_t\triangleq\bar{p}_{t-1} + Z_t \in \Delta_K$, it holds that
\begin{align*}
\sum_{t=1}^T \bigl( \loss(P_{t+1}, y_t) - \loss(\pbar_{T}, y_t) \bigr)
\leq 2 \sum_{t=1}^T \|t Z_{t+1} - (t-1) Z_{t}\|_1 .
\end{align*}
\end{lemma}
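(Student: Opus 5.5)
The plan is to reduce BTPL on the true outcomes to Be-The-Leader (BTL) on a sequence of fictitious ``perturbed outcomes''. Properness makes the leader on that sequence exactly $P_{t+1}$.

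First, extend the loss affinely in its second argument. Write $L(p) = (\loss(p,e_1),\ldots,\loss(p,e_K)) \in [-1,1]^K$, and define $\loss(p,y) := \inner{y, L(p)}$ for arbitrary $y \in \fR^K$. This agrees with $\loss$ on the basis vectors. By construction it is linear in $y$, and $|\loss(p,y)-\loss(p,y')| \le \|y-y'\|_1$ for every $p\in\Delta_K$. Now define $y'_t = y_t + tZ_{t+1} - (t-1)Z_t$. Telescoping, and noting that the coefficient of $Z_1$ is zero, gives
\[
\textstyle\sum_{s=1}^t y'_s = t\pbar_t + tZ_{t+1} = tP_{t+1}.
\]
Hence, for every $p\in\Delta_K$,
\[
\textstyle\sum_{s=1}^t \loss(p,y'_s) = t\,\inner{P_{t+1},L(p)} = t\,\E_{Y\sim P_{t+1}}[\loss(p,Y)].
\]
Since $P_{t+1}\in\Delta_K$ by hypothesis, properness says this expression is minimized over $\Delta_K$ at $p=P_{t+1}$. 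So $P_{t+1}$ is the leader on $y'_1,\ldots,y'_t$. This is the key point: the perturbed outcomes may lie outside the simplex, but linearity in $y$ together with $P_{t+1}\in\Delta_K$ is all that properness needs.

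Next, apply the standard BTL lemma (\citealt[Lemma~3.1]{cesabianchi2006prediction}). We prove by induction on $n$ that
\[
\textstyle\sum_{t=1}^n \loss(P_{t+1},y'_t)\le \sum_{t=1}^n \loss(P_{n+1},y'_t).
\]
The inductive step adds $\loss(P_{n+1},y'_n)$ to both sides of the hypothesis for $n-1$. It then uses that $P_n$ minimizes $\sum_{t<n}\loss(\cdot,y'_t)$, so replacing $P_n$ by $P_{n+1}$ on the right-hand side can only increase it. Combining with the leader property at $n=T$ gives
\[
\textstyle\sum_{t=1}^T \loss(P_{t+1},y'_t)\le\sum_{t=1}^T\loss(\pbar_T,y'_t).
\]

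Finally, translate back to the true outcomes. Decompose
\begin{align*}
\textstyle\sum_t\bigl(\loss(P_{t+1},y_t)-\loss(\pbar_T,y_t)\bigr)
\le{}& \textstyle\sum_t\bigl(\loss(P_{t+1},y_t)-\loss(P_{t+1},y'_t)\bigr)\\
&+\textstyle\sum_t\bigl(\loss(\pbar_T,y'_t)-\loss(\pbar_T,y_t)\bigr),
\end{align*}
where the BTL inequality removes the middle terms. Each summand has the form $\inner{y_t-y'_t,L(\cdot)}$ (up to sign), so it is at most $\|tZ_{t+1}-(t-1)Z_t\|_1$ in absolute value. Summing the two parts gives the claimed factor $2$.

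I expect no serious obstacle. The only delicate step is the extension to non-simplex $y'$, which works because the leader is always evaluated through the inner product $\inner{P_{t+1},L(p)}$ with $P_{t+1}$ a genuine distribution.
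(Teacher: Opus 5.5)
Your proposal is correct and follows essentially the same route as the paper: affinely extend $\loss$ in its second argument, view BTPL as Be-The-Leader on the perturbed outcomes $y_t + tZ_{t+1}-(t-1)Z_t$ (whose partial sums are $tP_{t+1}$, so properness makes $P_{t+1}$ the leader), and then pay $\|tZ_{t+1}-(t-1)Z_t\|_1$ twice when translating back to the true outcomes. The differences are cosmetic: you prove the BTL lemma by induction instead of citing it, and you bound the two correction terms separately using $\|L(\cdot)\|_\infty\le 1$, whereas the paper bounds them jointly via $\|\lossvec_{\pbar_T}-\lossvec_{P_{t+1}}\|_\infty\le 2$.
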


\begin{proof} 
Let $\lossvec_p \in \R^K$ be the vector satisfying $\langle \lossvec_p, e_j \rangle = \loss(p, e_j)$ for $j \in [K]$. 
Observe that
\begin{align*}
P_{t+1} \in 
\argmin_{p \in \Delta_K} \left\langle \lossvec_p, \bar{p}_t + Z_{t+1} \right\rangle 
&= \argmin_{p \in \Delta_K} 
          \left\langle \lossvec_p, \sum_{s=1}^t y_s + t  Z_{t+1} \right\rangle \\
&= \argmin_{p \in \Delta_K}
          \sum_{s=1}^t \bigl\langle
                                    \lossvec_p,
                                     y_s + s Z_{s+1} - (s-1) Z_s
                                \bigr\rangle \\
&= \argmin_{p \in \Delta_K}
          \sum_{s=1}^t \bigl\langle \lossvec_p, y_s + \nu_s - \nu_{s-1} \bigr\rangle ,
\end{align*}
where we introduce the notation $\nu_s = s Z_{s+1}$.

Therefore, by the standard Be-the-Leader lemma (see e.g.,~\citealt[Lemma~3.1]{cesabianchi2006prediction}), we have
\begin{align*}
\sum_{t=1}^T \bigl\langle \lossvec_{P_{t+1}}, y_t + \nu_t - \nu_{t-1} \bigr\rangle
- \sum_{t=1}^T \bigl\langle \lossvec_{\bar{p}_T}, y_t + \nu_t - \nu_{t-1} \bigr\rangle 
\leq 0  ,
\end{align*}
or equivalently,
\begin{align*}
\sum_{t=1}^T \ell(P_{t+1}, y_t) - \sum_{t=1}^T \ell(\bar{p}_T, y_t) 
&\leq \sum_{t=1}^T \langle  \lossvec_{\bar{p}_T}-\lossvec_{P_{t+1}}, \nu_t - \nu_{t-1} \rangle \\
&\leq \sum_{t=1}^T \| \lossvec_{\bar{p}_T}-\lossvec_{P_{t+1}} \|_\infty \, \| \nu_t - \nu_{t-1} \|_1 \\
&\leq 2 \sum_{t=1}^T \| \nu_t - \nu_{t-1} \|_1 ,
\end{align*}
where the last step is because the range of $\ell$ is $[-1, 1]$.
\end{proof}

Before we use this lemma to prove \pref{lem:BTPL_regret}, we need to introduce the following preliminaries.
\paragraph{Local rotations.}
Consider the unit sphere in $\R^K$, viewed as a $(K-1)$-dimensional Riemannian manifold equipped with the standard Riemannian metric; any geodesic is an arc of a great circle. For any unit vectors $p, q \in \R_+^K$, there is a unique geodesic from $p$ to $q$. Let $\rot{p}{q}$ be the element of the special orthogonal group $\mathrm{SO}(K)$ 
that is the unique rotation matrix that sends $p$ to $q$ by traveling along this geodesic. In particular, $\rot{p}{q}$ 
satisfies
\begin{enumerate}
\item $\rot{p}{q} p = q$;
\item $\rot{p}{q}$ rotates in the span of $p$ and $q$.
\end{enumerate}
We then extend the definition of $R$ to any $p, q \in \R_+^K$ via $R(p, q) = R\rbr{\frac{p}{\|p\|_2}, \frac{q}{\|q\|_2}}$.

\paragraph{Incrementally updated self-concordant noise.}
The original sampling process for self-concordant noise involved the following two steps. First, sample $S_t$ by drawing it uniformly at random from the intersection of the $K$-dimensional $\ell_2$ ball with radius $\sigma$ and the subspace $\left\{ s \in \R^K \colon \langle s, \bar{p}_{t-1}\rangle = 0 \right\}$. Next, set
$
Z_t = \mathrm{diag}(\bar{p}_{t-1}) S_t.
$
Now, consider changing the generation of the $S_t$ sequence to the following (while keeping the same formula for $Z_t$): 
first, samples $S_1$ in the same way; then for each $t$, let $S_{t+1} = \rot{\bar{p}_{t-1}}{\bar{p}_{t}} S_t$.
It is clear that the marginal distribution of $Z_{t}$ remains the same,
which means it is enough to analyze this different procedure (since we assume an oblivious adversary). \\

\begin{proof}(of \pref{lem:BTPL_regret})
From \pref{lem:BTPL-direct},  it suffices to bound
\begin{align*}
\sum_{t=1}^T \| t Z_{t+1} - (t - 1) Z_{t} \|_1 .
\end{align*}
Observe that
\begin{align}
\| t Z_{t+1} - (t - 1) Z_{t} \|_1  
&= \left\| t Z_{t+1} - (t - 1) Z_{t+1} + (t - 1) Z_{t+1} - (t - 1) Z_{t} \right\|_1 \nonumber \\
&\leq \| Z_{t+1} \|_1 + (t - 1) \| Z_{t+1} - Z_{t} \|_1 . \label{eqn:nu_t-diff}
\end{align}

The first term is bounded as
\begin{align}
\| Z_{t+1} \|_1 
= \| \mathrm{diag}(\bar{p}_t) S_{t+1} \|_1 
= \sum_{j=1}^K (\bar{p}_t)_j \cdot |S_{t+1,j}| 
\leq \|S_{t+1} \|_\infty 
\leq \sigma . \label{eqn:easy-term}
\end{align}

Next, let us bound $\| Z_{t+1} - Z_{t} \|_1$ in \eqref{eqn:nu_t-diff}. 
Letting $D_t = \mathrm{diag}(\bar{p}_t)$, we have 
\begin{align}
\left\| Z_{t+1} - Z_{t} \right\|_1 
&= \left\| D_t S_{t+1} - D_{t-1} S_{t} \right\|_1 \nonumber \\
&= \left\| D_t S_{t+1} - D_{t-1} S_{t+1} + D_{t-1} S_{t+1} - D_{t-1} S_{t} \right\|_1 \nonumber \\
&\leq \left\| (D_t - D_{t-1}) S_{t+1} \right\|_1 + \left\| D_{t-1} (S_{t+1} - S_{t}) \right\|_1 . \label{eqn:first-and-second-term}
\end{align}
We control the first and second terms in turn. 
For the first term,
$\| \bar{p}_t - \bar{p}_{t-1} \|_\infty \leq \frac{1}{t}$ implies that
\begin{align}
\left\| (D_t - D_{t-1}) S_{t+1} \right\| 
\leq \frac{1}{t} \| S_{t+1} \|_1 
\leq \frac{\sqrt{K}\sigma}{t} . \label{eqn:first-term-bound}
\end{align}

The second term in \eqref{eqn:first-and-second-term} requires more work.
Observe that 
\begin{align}
\left\| D_{t-1} (S_{t+1} - S_{t}) \right\|_1 
&= {\textstyle \E_{j \sim \bar{p}_{t-1}} } \left[ \left| (S_{t+1} - S_{t})_j \right| \right] \nonumber \\
&\leq \left\| S_{t+1} - S_{t} \right\|_\infty \nonumber \\
&\leq \left\| S_{t+1} - S_{t} \right\|_2 \label{eqn:early-step}
\end{align}
Now, for any $t \geq 1$, recall that we take $S_{t+1} = \rot{\bar{p}_{t-1}}{\bar{p}_t} S_{t}$. 
To avoid notation clutter, let $\|\cdot\|$ denote the $\ell_2$-norm $\|\cdot\|_2$. 
Then
\begin{align}
\begin{aligned}
\left\| S_{t+1} - S_{t}\right\| 
&= \left\| \rot{\bar{p}_{t-1}}{\bar{p}_t} S_{t} - S_{t} \right\| \\
&\leq \sigma \left\| \rot{\bar{p}_{t-1}}{\bar{p}_t} \frac{S_{t}}{\|S_{t}\|} - \frac{S_{t}}{\|S_{t}\|} \right\| .
\end{aligned} \label{eqn:II-starting point}
\end{align}

Now, observe that since $\rot{\bar{p}_{t-1}}{\bar{p}_t}$ is a rotation matrix, we can swap $\frac{S_{t}}{\|S_{t}\|}$ with any unit vector without changing the value of the expression. In particular, we replace it with $\frac{\bar{p}_{t-1}}{\|\bar{p}_{t-1}\|}$, giving

\begin{align}
& \left\| \rot{\bar{p}_{t-1}}{\bar{p}_t} \frac{\bar{p}_{t-1}}{\|\bar{p}_{t-1}\|} - \frac{\bar{p}_{t-1}}{\|\bar{p}_{t-1}\|} \right\| \nonumber \\
&= \left\| \frac{\bar{p}_t}{\|\bar{p}_t\|} - \frac{\bar{p}_{t-1}}{\|\bar{p}_{t-1}\|} \right\| \nonumber \\
&= \left\| \frac{\bar{p}_t}{\|\bar{p}_t\|} 
           - \frac{\bar{p}_t}{\|\bar{p}_{t-1}\|}
           + \frac{\bar{p}_t}{\|\bar{p}_{t-1}\|}
           - \frac{\bar{p}_{t-1}}{\|\bar{p}_{t-1}\|} 
   \right\| \nonumber \\
&\leq \left\| \frac{\bar{p}_t}{\|\bar{p}_t\|} 
              - \frac{\bar{p}_t}{\|\bar{p}_{t-1}\|}
      \right\|
      + \left\| \frac{\bar{p}_t}{\|\bar{p}_{t-1}\|}
                - \frac{\bar{p}_{t-1}}{\|\bar{p}_{t-1}\|} 
        \right\| . \label{eqn:II-first-term-second-term}
\end{align}
We now bound the two terms in \eqref{eqn:II-first-term-second-term}. 

The first term in \eqref{eqn:II-first-term-second-term} is
\begin{align}
\left\| \frac{\bar{p}_t}{\|\bar{p}_t\|} 
              - \frac{\bar{p}_t}{\|\bar{p}_{t-1}\|}
\right\|
= \left| 1 - \frac{\|\bar{p}_t\|}{\|\bar{p}_{t-1}\|} \right| . \label{eqn:II-first-term-cleaner}
\end{align}
As shown by \pref{lem:II-first-term-new} (stated and proved after this proof),
\begin{align*}
\left| 1 - \frac{\|\bar{p}_t\|}{\|\bar{p}_{t-1}\|} \right| 
\leq \frac{2 \sqrt{K}}{t} .
\end{align*}

The second term in \eqref{eqn:II-first-term-second-term} is
\begin{align}
\frac{1}{\|\bar{p}_{t-1}\|} \cdot \|\bar{p}_t - \bar{p}_{t-1}\| 
\leq \sqrt{K} \cdot \frac{1}{t} . \label{eqn:II-second-term-bound}
\end{align}

Put together, this shows that \eqref{eqn:early-step} is of order $O(\sqrt{K}\sigma/t)$.
Combining everything, we have shown
\[
\sum_{t=1}^T \bigl( \loss(P_{t+1}, y_t) - \loss(\pbar_{T}, y_t) \bigr)
= O\rbr{T\sigma + T\sqrt{K}\sigma}= O\rbr{T\sqrt{K}\sigma},
\]
completing the proof.
\end{proof}

\begin{lemma} \label{lem:II-first-term-new}
It holds that
\begin{align*}
\left| 1 - \frac{\|\bar{p}_t\|}{\|\bar{p}_{t-1}\|} \right| 
\leq \frac{2 \sqrt{K}}{t} .
\end{align*}
\end{lemma}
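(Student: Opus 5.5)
We bound the relative change in Euclidean norm by the triangle inequality together with a lower bound on $\|\bar{p}_{t-1}\|$. By the reverse triangle inequality,
\begin{align*}
\left| 1 - \frac{\|\bar{p}_t\|}{\|\bar{p}_{t-1}\|} \right|
= \frac{\bigl|\|\bar{p}_{t-1}\| - \|\bar{p}_t\|\bigr|}{\|\bar{p}_{t-1}\|}
\leq \frac{\|\bar{p}_t - \bar{p}_{t-1}\|}{\|\bar{p}_{t-1}\|} .
\end{align*}
This reduces the claim to two estimates: an upper bound on $\|\bar{p}_t - \bar{p}_{t-1}\|_2$ and a lower bound on $\|\bar{p}_{t-1}\|_2$.

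For the numerator, we use the recursion $\bar{p}_t = \bar{p}_{t-1} + \frac{1}{t}(y_t - \bar{p}_{t-1})$. Both $y_t$ and $\bar{p}_{t-1}$ lie in $\Delta_K$, so $\|y_t - \bar{p}_{t-1}\|_2 \leq \|y_t - \bar{p}_{t-1}\|_1 \leq 2$. Hence $\|\bar{p}_t - \bar{p}_{t-1}\|_2 \leq \frac{2}{t}$. This also covers $t=1$, where $\bar{p}_0$ is uniform.

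For the denominator, any $p \in \Delta_K$ satisfies $1 = \|p\|_1 \leq \sqrt{K}\|p\|_2$ by Cauchy--Schwarz, so $\|\bar{p}_{t-1}\|_2 \geq 1/\sqrt{K}$. Combining the two estimates gives $\frac{2/t}{1/\sqrt{K}} = \frac{2\sqrt{K}}{t}$, which is the claimed bound.

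There is no real obstacle here. The only care needed is in the numerator: the coordinatewise bound $\|\bar{p}_t-\bar{p}_{t-1}\|_\infty \leq 1/t$ used earlier would give $\sqrt{K}/t$ in $\ell_2$ and a final bound of $K/t$. The sharper estimate comes from bounding $\|y_t - \bar{p}_{t-1}\|_2$ through the $\ell_1$ distance between two simplex points, which gives the dimension-free $2/t$.
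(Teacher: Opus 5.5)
Your proof is correct and is essentially the paper's own argument: the reverse triangle inequality, the recursion $\bar{p}_t = \bar{p}_{t-1} + (y_t - \bar{p}_{t-1})/t$ with $\|y_t - \bar{p}_{t-1}\|_2 \leq 2$, and the lower bound $\|\bar{p}_{t-1}\|_2 \geq 1/\sqrt{K}$. Your explicit check of the $t=1$ case with uniform $\bar{p}_0$ is a small addition that the paper leaves implicit.
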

\begin{proof}
Observe that
\begin{align*}
\bar{p}_t 
= \frac{(t - 1) \bar{p}_{t-1} + y_t}{t} 
= \bar{p}_{t-1} + \frac{y_t - \bar{p}_{t-1}}{t} .
\end{align*}
Therefore,
\begin{align*}
\left| 1 - \frac{\|\bar{p}_t\|}{\|\bar{p}_{t-1}\|} \right| 
= \left| \frac{\|\bar{p}_{t-1}\| - \|\bar{p}_t\|}{\|\bar{p}_{t-1}\|} \right| 
\leq \frac{\left\| \frac{y_t - \bar{p}_{t-1}}{t} \right\|}{\|\bar{p}_{t-1}\|}
\leq \frac{2}{t} \frac{1}{\|\bar{p}_{t-1}\|} 
\leq \frac{2\sqrt{K}}{t} .
\end{align*}

\end{proof}

\section{Technical lemmas} 
\label{app:technical-lemmas}

\begin{lemma} \label{lem:tv-unif}
Let $A$ and $B$ be sets with respective volumes $V(A)$ and $V(B)$, and $A \symdiff B$ be their symmetric difference with volume $V(A \symdiff B)$.
Let $P_A$ be the uniform distribution over $A$ and $P_B$ be the uniform distribution over $B$. 
Then 
\begin{align*}    
2 \|P_A - P_B\|_{\mathrm{TV}} 
\leq \frac{|V(A) - V(B)|}{\min\{V(A), V(B)\}} + \frac{V(A \symdiff B)}{\min\{V(A), V(B)\}}.
\end{align*}
\end{lemma}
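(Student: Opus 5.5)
We will bound the total variation distance directly from its integral form, using the densities of the two uniform distributions. Let $f_A = \mathbf{1}_A / V(A)$ and $f_B = \mathbf{1}_B / V(B)$. Without loss of generality assume $V(A) \le V(B)$, so that $\min\{V(A),V(B)\} = V(A)$. Then
\begin{align*}
2\|P_A - P_B\|_{\mathrm{TV}} = \int |f_A - f_B| ,
\end{align*}
and we split the domain of integration into three pieces: $A \cap B$, $A \setminus B$, and $B \setminus A$.

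On $A \cap B$ the integrand is the constant $\frac{1}{V(A)} - \frac{1}{V(B)} = \frac{V(B)-V(A)}{V(A)V(B)}$. Integrating it over $A\cap B$, whose volume is at most $V(B)$, gives a contribution of at most $\frac{|V(A)-V(B)|}{V(A)}$. This matches the first term of the claimed bound, and is in fact slightly better than needed.

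On $A \setminus B$ the integrand is $\frac{1}{V(A)}$, and on $B \setminus A$ it is $\frac{1}{V(B)} \le \frac{1}{V(A)}$. Together these two pieces contribute at most $\frac{V(A\setminus B) + V(B\setminus A)}{V(A)} = \frac{V(A \symdiff B)}{\min\{V(A),V(B)\}}$. Adding the three contributions gives the claim.

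The argument involves no real obstacle. The only point needing care is the choice of normalization: using the smaller volume in every denominator lets the symmetric-difference piece be bounded uniformly, while the $A \cap B$ piece is controlled by $V(A\cap B) \le \max\{V(A),V(B)\}$. That same observation also gives the stronger $\max$ denominator on the first term, which is the form used in the main text.
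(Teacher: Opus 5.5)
Your proof is correct and follows the same three-piece split ($A\cap B$, $A\setminus B$, $B\setminus A$) as the paper's, which gives the stated bound with $\min\{V(A),V(B)\}$ in both denominators. One correction to your closing remark: the bound $V(A\cap B)\le\max\{V(A),V(B)\}$ yields only the $\min$ denominator you derived, so that term is not ``slightly better than needed''. The stronger $\max$ denominator on the first term, which the paper's proof obtains, comes instead from $V(A\cap B)\le\min\{V(A),V(B)\}$, i.e., from integrating over the smaller set.
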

\begin{proof}
Without loss of generality, assume that $V(A) \geq V(B)$. Then
\begin{align*}
2 \|P_A - P_B\|_{\mathrm{TV}} 
= \int_{A \cap B} 
      \left( \frac{1}{V(B)} - \frac{1}{V(A)} \right) dx 
  + \frac{V(A \setminus B)}{V(A)}
  + \frac{V(B \setminus A)}{V(B)}. 
\end{align*}
Clearly,
\begin{align*}
\frac{V(A \setminus B)}{V(A)}
+ \frac{V(B \setminus A)}{V(B)} 
\leq \frac{V(A \symdiff B)}{\min\{V(A), V(B)\}} ,
\end{align*}
and
\begin{align*}
\int_{A \cap B} 
    \left( \frac{1}{V(B)} - \frac{1}{V(A)} \right) dx 
&\leq \int_B \left( \frac{1}{V(B)} - \frac{1}{V(A)} \right) dx \\
&= 1 - \frac{V(B)}{V(A)} \\
&= \frac{V(A) - V(B)}{V(A)} \\
&= \frac{|V(A) - V(B)|}{\max\{V(A), V(B)\}} .
\end{align*}
Combining the two bounds above finishes the proof.
\end{proof}

\begin{lemma}\label{lem:newest-volume}
The volume of $E(p)$ is given by
\begin{align*}
V(p) 
= C_{K-1} \cdot \sigma^{K-1}  
  \left( \prod_{i=1}^K p_i \right) 
  \left( \frac{1}{K} \sum_{i=1}^K \frac{1}{p_i^2} \right)^{1/2} ,
\end{align*}
where $C_{K-1}$ is the volume of the unit $(K-1)$-dimensional $\ell_2$ ball.
\end{lemma}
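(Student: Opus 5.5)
The plan is to write $E(p)$ as the image of a flat $(K-1)$-dimensional Euclidean ball under an explicit linear map, and then compute that map's $(K-1)$-dimensional Jacobian with a Gram-determinant identity.

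\textbf{Step 1: parametrization.} Substitute $s_i = x_i/p_i - 1$, i.e.\ $x = p + \diag(p)s$ with $D := \diag(p)$ invertible since $p$ is in the relative interior. The constraint $\sum_i x_i = 1$ becomes $\langle p, s\rangle = 0$, and the defining inequality becomes $\|s\|_2 \le \sigma$. Positivity $x_i = p_i(1+s_i) \ge 0$ is automatic because $|s_i| \le \sigma \le 1$. Hence
\[
E(p) = p + D\bigl(\sigma\fB_2^K \cap H\bigr), \qquad H = \{s : \langle s, p\rangle = 0\}.
\]
This matches the sampling description of \pref{alg:multiclass}. Since $\sigma\fB_2^K \cap H$ is a $(K-1)$-ball of volume $C_{K-1}\sigma^{K-1}$, we get $V(p) = C_{K-1}\sigma^{K-1}\cdot J$, where $J$ is the factor by which $D|_H$ scales $(K-1)$-volume. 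Note that $D$ maps $H$ onto $\ones^\perp$, because $\langle \ones, Ds\rangle = \langle p, s\rangle$.

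\textbf{Step 2: the Jacobian.} Let $B \in \R^{K\times(K-1)}$ have orthonormal columns spanning the relevant hyperplane. Then $J = \det(B^\top D^2 B)^{1/2}$. Complete $B$ to an orthogonal matrix $[B\; n]$, with $n$ the unit normal, and apply the Schur-complement / adjugate identity
\[
\det(B^\top M B) = \det(M)\, n^\top M^{-1} n, \qquad M = D^2.
\]
This gives $J = \bigl(\prod_i p_i\bigr)\bigl(n^\top D^{-2} n\bigr)^{1/2}$. With $n = \ones/\sqrt{K}$, we have $n^\top D^{-2} n = \frac1K\sum_i p_i^{-2}$, which is exactly the stated factor. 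So the proof reduces to justifying that identity, a short block-determinant computation, and choosing the normal.

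\textbf{Main obstacle.} The delicate point is precisely which normal enters Step 2. The ball lives in $H = p^\perp$, while its image lives in $\ones^\perp$. The stated factor $\bigl(\frac1K\sum_i p_i^{-2}\bigr)^{1/2}$ comes out when one computes the volume distortion of $D^{-1}$ (or $D$) relative to the target face $\ones^\perp$ with normal $\ones/\sqrt K$. I would therefore carefully write the change of variables in the direction $x \mapsto s = D^{-1}(x-p)$, as a map from $\ones^\perp$ into $\R^K$, and track which hyperplane's normal appears.

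Before finalizing, I would sanity-check the result at $K=2$ and at the uniform $p$. There the competing expression $\sqrt{K}/\|p\|_2$, obtained by using $n = p/\|p\|$, must agree with $\bigl(\frac1K\sum_i p_i^{-2}\bigr)^{1/2}$; both equal $K$ when $p$ is uniform. I flag this step as the one most likely to need correction. For the downstream use in \pref{thm:ellipsoid-tv-new-main-text}, only the multiplicative stability of $V(p)$ under $p_i \mapsto p_i(1+\delta_i)$ matters, and either form of the factor gives the same $O(K\rho_{\max})$ bound.
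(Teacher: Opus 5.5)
Your Step 1 is right: $E(p)-p=D(\sigma\fB_2^K\cap p^\perp)$, so $J$ is the $(K-1)$-volume factor of $D$ restricted to $H=p^\perp$. The Gram formula $J=\det(B^\top D^2B)^{1/2}$ therefore needs $B$ to span the \emph{domain} $p^\perp$. The identity $\det(B^\top MB)=\det(M)\,n^\top M^{-1}n$ then forces $n=p/\|p\|_2$. This gives $n^\top D^{-2}n=K/\|p\|_2^2$ and $J=\bigl(\prod_i p_i\bigr)\sqrt K/\|p\|_2$.

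Taking $n=\ones/\sqrt K$ instead computes $\det(U^\top D^2U)^{1/2}$ for $U$ spanning $\ones^\perp$. That is the volume factor of $D$ on the wrong hyperplane; equivalently, it measures the orthogonal \emph{projection} of $D\fB_2^K$ onto $\ones^\perp$. But $E(p)-p$ is the \emph{section} $D\fB_2^K\cap\ones^\perp$, scaled by $\sigma$. Going through $D^{-1}$ on $\ones^\perp$, as you suggest, gives $J^{-1}=\bigl(\prod_ip_i\bigr)^{-1}\bigl(\frac1K\sum_ip_i^2\bigr)^{1/2}$, which again yields $\sqrt K/\|p\|_2$, not the stated factor.

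Your sanity check also fails: the two expressions do not agree in general. By Cauchy--Schwarz, $\bigl(\sum_ip_i^2\bigr)\bigl(\sum_ip_i^{-2}\bigr)\ge K^2$, so $\bigl(\frac1K\sum_ip_i^{-2}\bigr)^{1/2}\ge\sqrt K/\|p\|_2$, with equality only at uniform $p$.

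So the gap is not a missing justification: the statement as written is false, and no choice of normal will prove it. Take $K=2$ and $p=(a,1-a)$. Then $E(p)$ is a segment of length $2\sqrt2\,\sigma a(1-a)/\|p\|_2$. This must vanish as $a\to0$, since $|x_1-a|\le\sigma a$ on $E(p)$. The stated formula instead equals $\sqrt2\,\sigma\|p\|_2\to\sqrt2\,\sigma$. The correct volume is
\[
V(p)=C_{K-1}\,\sigma^{K-1}\Bigl(\prod_{i=1}^Kp_i\Bigr)\frac{\sqrt K}{\|p\|_2}.
\]
The paper's own proof makes exactly this slip: it identifies $E(p)-p$ with the projection $UU^\top D\fB_2^K$ and computes $\det(U^\top D^2U)^{1/2}$.

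As you anticipated, the error is harmless downstream. In Step 1 of the proof of Theorem~\ref{thm:ellipsoid-tv-new-main-text}, the second term of $\log(V(p')/V(p))$ becomes $\log(\|p\|_2/\|p'\|_2)$. Since $\|p'\|_2/\|p\|_2\in[1-\rho_{\max},1+\rho_{\max}]$, this gives the same $2\rho_{\max}$ bound. The right fix is to finish your Step 2 with $n=p/\|p\|_2$ and state the corrected volume, rather than forcing agreement with the stated formula.
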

\begin{proof}
We compute the volume of $E(p)$ in the case of $\sigma = 1$. The volume for general $\sigma$ is obtained by multiplying by $\sigma^{K-1}$.

In order to apply the standard formula for an ellipsoid, we need to put $E(p)$ into a $(K-1)$-dimensional parameterization.
To do so, we start with the unit $\ell_2$ ball in $\R^K$, denoted as $\fB_2^K$ in $\R^K$. Next, we linearly transform the ball using a matrix $D = \diag(p_1, \ldots, p_K)$, to create the ellipsoid $D \fB_2^K$ in $\R^K$. In preparation for the final step, define the hyperplane $H = \{ x \in \R^K \colon \langle \ones, x \rangle = 0 \}$. Let $U \in \R^{K \times (K-1)}$ be an orthonormal basis of $H$. Therefore, $U^T U = I$ and $U^T \ones = 0$. We now do the final step by projecting the ellipsoid $D \fB_2^K$ onto the hyperplane $H$,
leading to $UU^T D \fB_2^K \subseteq \R^{K}$,
which is clearly a shifted version of $E(p)$ (so that its center is at the origin).
As mentioned, what we need is the $(K-1)$-dimensional parameterization of this ellipsoid, which is $U^T D \fB_2^K \subseteq \R^{K-1}$.
Equivalently, this is the $(k-1)$-dimensional ellipsoid $\{y: y^\top(U^\top Q U)^{-1}y \leq 1\}$ where $Q = D^2$.
We now proceed to compute the volume of this ellipsoid using the standard formula $C_{K-1} \det(U^T Q U)^{1/2}$. The rest of the proof shows how to compute the determinant.

Let $w = \frac{1}{\sqrt{K}} \ones$, and define an orthonormal matrix $V = (U; w) \in \R^{K\times K}$. It will be useful to work with the matrix $M = V^T Q V$ and its inverse $M^{-1} = V^T Q^{-1} V$, the latter formula holding because $V$ is orthonormal. Before continuing, we express $M$ and $M^{-1}$ in block form as
\begin{align*}
M 
= V^T Q V 
= 
\begin{pmatrix}
U^T Q U & U^T Q w \\
w^T Q U & w^T Q w
\end{pmatrix}
\end{align*}
and 
\begin{align*}
M^{-1}
= V^T Q^{-1} V
= 
\begin{pmatrix}
U^T Q^{-1} U & U^T Q^{-1} w \\
w^T Q^{-1} U & w^T Q^{-1} w
\end{pmatrix}
\end{align*}

Next, we use a formula for expressing the inverse of $M$ in terms of its cofactor matrix $\mathrm{cof}(M)$ and determinant:
\begin{align*}
M^{-1} = \frac{\mathrm{cof}(M)^T}{\det(M)} .
\end{align*}
Now, on the one hand, $(M^{-1})_{K,K} = w^T Q^{-1} w$, while on the other hand, $(\mathrm{cof}(M)^T)_{K,K}$ is equal to the determinant of the minor of $M$ that has the last row and column of $M$ removed, i.e., $\det(U^T Q U)$. Therefore,
\begin{align*}
w^T Q^{-1} w 
= \frac{\det(U^T Q U)}{\det(V^T Q V)} 
= \frac{\det(U^T Q U)}{\det(Q)} ,
\end{align*}
where the second equality follows because $V$ is an orthonormal basis. 
Rearranging, we have $\det(U^T Q U) = \det(Q) w^T Q^{-1} w$.

Finally,
\begin{align*}
\det(U^T Q U)^{1/2} 
= \det(Q)^{1/2} (w^T Q^{-1} w)^{1/2} 
= \left( \prod_{i=1}^K p_i \right) 
  \left( \frac{1}{K} \sum_{i=1}^K \frac{1}{p_i^2} \right)^{1/2} .
\end{align*}
\end{proof}

\begin{lemma} \label{lem:rho-min-max}
Assume that $\{p, p'\} = \{\bar{p}_{t-1}, \bar{p}_t\}$, with both $p$ and $p'$ being in the relative interior of $\Delta_K$. 
Then for all $i \neq i_t$,
\begin{align*}
\max \left\{ 
         \left| \frac{p_i}{p'_i} - 1 \right| ,  
         \left| \frac{p'_i}{p_i} - 1 \right|
     \right\}
\leq \frac{1}{t-1}
\end{align*}
and
\begin{align*}
\max \left\{ 
         \left| \frac{p_{i_t}}{p'_{i_t}} - 1 \right| ,  
         \left| \frac{p'_{i_t}}{p_{i_t}} - 1 \right|
     \right\}
\leq \frac{1}{c_{t-1,i_t}} .
\end{align*}
\end{lemma}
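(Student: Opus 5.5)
We will work directly from the update rule $\bar{p}_t = \frac{(t-1)\bar{p}_{t-1} + y_t}{t}$ with $y_t = e_{i_t}$. Coordinatewise this gives the exact relations $\bar{p}_{t,i} = \frac{t-1}{t}\bar{p}_{t-1,i}$ for $i \neq i_t$, and $\bar{p}_{t,i_t} = \frac{(t-1)\bar{p}_{t-1,i_t} + 1}{t}$. In terms of counts, $\bar{p}_{t-1,i} = c_{t-1,i}/(t-1)$ and $\bar{p}_{t,i} = c_{t,i}/t$, where $c_{t,i} = c_{t-1,i}$ for $i\neq i_t$ and $c_{t,i_t} = c_{t-1,i_t}+1$. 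Since $\{p,p'\}=\{\bar p_{t-1},\bar p_t\}$ and the claimed bound is symmetric in $p,p'$ (it is the maximum of both ratios minus one), it suffices to bound $|\bar p_{t,i}/\bar p_{t-1,i} - 1|$ and $|\bar p_{t-1,i}/\bar p_{t,i} - 1|$ for each $i$.

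For $i \neq i_t$, the ratio $\bar p_{t,i}/\bar p_{t-1,i}$ equals $\frac{t-1}{t}$. Relative interiority ensures $c_{t-1,i}\ge 1$, so these quantities are positive. This gives $|\frac{t-1}{t} - 1| = \frac{1}{t} \le \frac{1}{t-1}$ and $|\frac{t}{t-1} - 1| = \frac{1}{t-1}$, which establishes the first claim.

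For $i = i_t$, the ratio is $\frac{(c_{t-1,i_t}+1)/t}{c_{t-1,i_t}/(t-1)} = \frac{(c+1)(t-1)}{ct}$ with $c = c_{t-1,i_t}\ge 1$. Since $c \le t-1$, we have $c+1\le t$, so this ratio lies in $[\frac{t-1}{t}\cdot 1, \frac{c+1}{c}]$; indeed $\frac{(c+1)(t-1)}{ct} \ge 1$ because $(c+1)(t-1) - ct = t-1-c \ge 0$. The ratio minus one is therefore $\frac{t-1-c}{ct} \le \frac{1}{c}$. The reciprocal ratio minus one is $1 - \frac{ct}{(c+1)(t-1)} = \frac{t-1-c}{(c+1)(t-1)} \le \frac{1}{c+1} \le \frac{1}{c}$ in absolute value. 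This gives the second claim.

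No step is a real obstacle. The only care points are using relative interiority to guarantee that the counts are positive, so the ratios are defined, and noting $c_{t-1,i_t}\le t-1$, which gives the sign of the $i_t$ difference.
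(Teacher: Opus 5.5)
Your proposal is correct and follows essentially the same route as the paper: a direct coordinatewise computation from $\bar p_t = \frac{(t-1)\bar p_{t-1}+y_t}{t}$, splitting into $i\neq i_t$ (ratios $\frac{t-1}{t}$ and $\frac{t}{t-1}$) and $i=i_t$ (using $c_{t-1,i_t}\le t-1$ and $c_{t,i_t}=c_{t-1,i_t}+1$). Your exact expressions $\frac{t-1-c}{ct}$ and $\frac{t-1-c}{(c+1)(t-1)}$ repackage the paper's bounds $\frac{t-1}{t\,c_{t-1,i}}$ and $\frac{1}{c_{t,i}}$ in terms of counts, leading to the same conclusion.
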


\begin{proof}
Let $i \neq i_t$. Then
\begin{align*}
\bar{p}_{t,i} 
= \frac{(t - 1) \bar{p}_{t-1,i}}{t} .
\end{align*}
Hence,
\begin{align*}
\left| \frac{\bar{p}_{t,i}}{\bar{p}_{t-1,i}} - 1 \right|
= \left| \frac{t - 1}{t} - 1 \right| 
= \frac{1}{t} ,
\end{align*}
and
\begin{align*}
\left| \frac{\bar{p}_{t-1,i}}{\bar{p}_{t,i}} - 1 \right|
= \left| \frac{t}{t-1} - 1 \right| 
= \frac{1}{t-1} .
\end{align*}

Next, let $i = i_t$. 
Then
\begin{align*}
\bar{p}_{t,i} 
= \frac{(t - 1) \bar{p}_{t-1,i} + 1}{t} .
\end{align*}
It follows that
\begin{align*}
\left| \frac{\bar{p}_{t,i}}{\bar{p}_{t-1,i}} - 1 \right|
= \left| \frac{t - 1 + \frac{1}{\bar{p}_{t-1,i}}}{t} - 1 \right| 
= \left| \frac{\frac{1}{\bar{p}_{t-1,i}} - 1}{t} \right| 
= \left| \frac{\frac{t-1}{c_{t-1,i}} - 1}{t} \right| 
\leq \frac{t-1}{t c_{t-1,i}}
\leq \frac{1}{c_{t-1,i}} .
\end{align*}
Also, observe that
\begin{align*}
\bar{p}_{t-1,i} = \frac{t \bar{p}_{t,i} - 1}{t-1} ,
\end{align*}
so
\begin{align*}
\left| \frac{\bar{p}_{t-1,i}}{\bar{p}_{t,i}} - 1 \right|
= \left| \frac{t - \frac{1}{\bar{p}_{t,i}}}{t-1} - 1 \right| 
= \left| \frac{1 - \frac{1}{\bar{p}_{t,i}}}{t-1} \right| 
= \frac{\frac{t}{c_{t,i}} - 1}{t-1} 
= \frac{t - c_{t,i}}{c_{t,i} (t-1)} 
\leq \frac{1}{c_{t,i}} 
\leq \frac{1}{c_{t-1,i}} 
\end{align*}
\end{proof}

\begin{lemma} \label{lem:Qsup-new}
Assume that $\{p, p'\} = \{\bar{p}_{t-1}, \bar{p}_t\}$, with both $p$ and $p'$ being in the relative interior of $\Delta_K$. Define $\rho_{\min} = \frac{1}{t-1}$ and $\rho_{\max} = \frac{1}{c_{t-1,i_t}}$. Assume $\rho_{\min} \leq 1/2$ and  $\rho_{\max} \leq \sigma \leq 1/2$. 
Define $Q_p(x) = \sum_{i=1}^K (x_i/p_i - 1)^2$.
We have
\begin{align*}
Q_{\sup} \triangleq \sup_{x \in E(p) \cup E(p')} \left| Q_{p'}(x) - Q_p(x) \right| 
\leq 16 K \rho_{\min}^2 + 36 \sqrt{K} \rho_{\min} \sigma
+ 38 \rho_{\max} \sigma .
\end{align*}
\end{lemma}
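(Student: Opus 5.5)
The plan is a direct coordinatewise computation, exploiting that only one coordinate of $p$ moves by a large relative amount. By \pref{lem:rho-min-max}, the relative-change bounds hold in both directions between $p$ and $p'$: $|p'_i/p_i-1|$ and $|p_i/p'_i-1|$ are both at most $\rho_{\min}$ for $i\neq i_t$, and both at most $\rho_{\max}$ for $i=i_t$. The statement is therefore symmetric in $(p,p')$. It suffices to bound $|Q_{p'}(x)-Q_p(x)|$ for $x\in E(p)$; the case $x\in E(p')$ follows by swapping the roles of $p$ and $p'$.

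Fix $x\in E(p)$ and write $p'_i=p_i(1+\delta_i)$ and $u_i=x_i/p_i-1$, so that $\sum_i u_i^2\le\sigma^2$. Then $x_i/p'_i-1=(u_i-\delta_i)/(1+\delta_i)$. Setting $\gamma_i=(1+\delta_i)^{-2}-1$, we get
\begin{align*}
Q_{p'}(x)-Q_p(x)=\sum_{i=1}^K\Bigl[\gamma_i u_i^2+(1+\gamma_i)\bigl(\delta_i^2-2u_i\delta_i\bigr)\Bigr].
\end{align*}
Since $|\delta_i|\le 1/2$, we have $|\gamma_i|\le c|\delta_i|$ and $1+\gamma_i\le 4$ for an absolute constant $c$ (e.g.\ $c=5$). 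So it remains to bound three sums, each split into the index $i_t$ and the indices $i\neq i_t$.

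\textbf{Quadratic term.} We have $\sum_i|\gamma_i|u_i^2\le c\rho_{\min}\sum_{i\neq i_t}u_i^2+c\rho_{\max}u_{i_t}^2\le c(\rho_{\min}+\rho_{\max})\sigma^2$. Since $\sigma\le 1\le\sqrt K$, this is absorbed into $O(\sqrt K\rho_{\min}\sigma+\rho_{\max}\sigma)$.

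\textbf{Cross term.} This is where the $\sqrt K$ enters. By Cauchy--Schwarz on the small coordinates,
\begin{align*}
\sum_i|u_i\delta_i|\le\rho_{\min}\sum_{i\neq i_t}|u_i|+\rho_{\max}|u_{i_t}|\le\rho_{\min}\sqrt K\sigma+\rho_{\max}\sigma .
\end{align*}

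\textbf{Pure $\delta$ term.} We have $\sum_i\delta_i^2\le K\rho_{\min}^2+\rho_{\max}^2\le K\rho_{\min}^2+\rho_{\max}\sigma$, using the hypothesis $\rho_{\max}\le\sigma$.

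Collecting the three bounds with the constants from $|\gamma_i|\le c|\delta_i|$ and $1+\gamma_i\le4$ yields a bound of the form $16K\rho_{\min}^2+36\sqrt K\rho_{\min}\sigma+38\rho_{\max}\sigma$. I expect the exact constants to come out of somewhat looser bookkeeping. For example, if one instead handles $x\in E(p')$ directly in the $u$-variables, one uses $\sum_i u_i^2\le 2(1+\rho_{\max})^2\sigma^2+2\sum_i\delta_i^2$, which inflates the constants. Either way the structure is the same.

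The main obstacle is not any single inequality but keeping the large-change coordinate $i_t$ separate throughout. If one bounded all $|\delta_i|$ uniformly by $\rho_{\max}$, the cross term would become $\sqrt K\rho_{\max}\sigma$, which is too large for \pref{thm:ellipsoid-tv-new-main-text}. The essential point is that the $\sqrt K$ factor from Cauchy--Schwarz only ever multiplies $\rho_{\min}$, while $\rho_{\max}$ is paired with the single coordinate bound $|u_{i_t}|\le\sigma$.
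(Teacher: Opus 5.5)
Your proof is correct. It differs from the paper's at one point: how points $x\in E(p')$ are handled.

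The paper fixes coordinates $s_i=x_i/p_i-1$ relative to $p$ for every $x\in E(p)\cup E(p')$. It uses the same identity you derive: your $\gamma_i u_i^2+(1+\gamma_i)(\delta_i^2-2u_i\delta_i)$ is algebraically its $\bigl(-2s_i\delta_i(1+s_i)+\delta_i^2(1-s_i^2)\bigr)/(1+\delta_i)^2$. It also splits off $i_t$ and applies Cauchy--Schwarz to the other coordinates, just as you do. For $x\in E(p')$, however, it must first prove the auxiliary \pref{lem:s-norm-E-p'-new}, $\|s\|_2^2\le 11\sigma^2+2K\rho_{\min}^2$. Substituting this larger radius creates cross terms such as $\sqrt K\rho_{\max}\rho_{\min}$ and $K\rho_{\max}\rho_{\min}^2$, which are then absorbed via $\rho_{\max}\le\sigma$. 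This is largely why the paper's constants are $16,36,38$.

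You instead observe that the hypothesis $\{p,p'\}=\{\bar p_{t-1},\bar p_t\}$, together with the two-sided ratio bounds of \pref{lem:rho-min-max}, makes the statement symmetric in $(p,p')$. So you only ever need $\|u\|_2\le\sigma$. This removes the auxiliary lemma and gives smaller constants. With $|\gamma_i|\le 6|\delta_i|$, $1+\gamma_i\le 4$ and $\sigma\le 1/2$, your three sums total at most $4K\rho_{\min}^2+11\sqrt K\rho_{\min}\sigma+15\rho_{\max}\sigma$, which implies the stated bound. Your estimate $1+\gamma_i\le4$ is also the correct uniform one under $\rho_{\max}\le 1/2$. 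The paper's step $(1+\delta_i)^2\ge(1-\rho_{\max})^2>1/2$ tacitly needs $\rho_{\max}<1-1/\sqrt2$, which is harmless in the application, where $\rho_{\max}\le\sigma/(57K)$. The paper's route is one uniform computation in fixed coordinates; yours is shorter and tighter.

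One small correction: $|\gamma_i|/|\delta_i|=(2+\delta_i)/(1+\delta_i)^2$ equals $6$ at $\delta_i=-1/2$. So $c=5$ fails at the edge of the range $|\delta_i|\le 1/2$; take $c=6$, which changes nothing above.
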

\begin{proof}
Let $s_i = \frac{x_i}{p_i} - 1$ for $i \in [K]$, so that $Q_p(x) = \sum_{i=1}^K s_i^2$. 
We analyze the difference $Q_{p'}(x) - Q_p(x)$ for any $x \in E(p) \cup E(p')$:
\begin{align*} 
Q_{p'}(x) - Q_p(x) 
&= \sum_{i=1}^K \left( \frac{x_i}{p'_i} - 1 \right)^2 - s_i^2 \\
&= \sum_{i=1}^K \left( \frac{1 + s_i}{1 + \delta_i} - 1 \right)^2 - s_i^2 \\
&= \sum_{i=1}^K \frac{(s_i - \delta_i)^2 - s_i^2(1 + \delta_i)^2}{(1 + \delta_i)^2} \\
&= \sum_{i=1}^K \frac{-2 s_i \delta_i(1 + s_i) + \delta_i^2(1 - s_i^2)}{(1 + \delta_i)^2} .
\end{align*}

Next, let $i_t$ satisfy $y_t = e_{i_t}$. \pref{lem:rho-min-max} implies for all $i \neq i_t$ that $|\delta_i| \leq \frac{1}{t-1} = \rho_{\min}$, and the same lemma implies that $|\delta_{i_t}| \leq \frac{1}{c_{t-1,i_t}} = \rho_{\max}$.

Since for all $i \in [K]$, we have $(1 + \delta_i)^2 \geq (1 - \rho_{\max})^2 > 1/2$, 
it holds that
\begin{align}
&\frac{1}{2} \cdot | Q_{p'}(x) - Q_p(x) | \nonumber \\
&\leq \sum_{i=1}^K \left( 2 |s_i| \cdot |\delta_i| \cdot (1 + |s_i|) + \delta_i^2 \right) \nonumber \\ 
&= \sum_{i=1}^K \left( 2 |\delta_i| \cdot (|s_i| + s_i^2) + \delta_i^2 \right) \nonumber \\
&= \sum_{i \neq i_t} \left( 2 |\delta_i| \cdot (|s_i| + s_i^2) + \delta_i^2 \right) 
+ 2 |\delta_{i_t}| \cdot (|s_{i_t}| + s_{i_t}^2) + \delta_{i_t}^2 \nonumber \\
&\leq
\sum_{i \neq i_t} \left( 2 \rho_{\min} \cdot (|s_i| + s_i^2) + \rho_{\min}^2 \right) 
+ 2 \rho_{\max} \cdot (|s_{i_t}| + s_{i_t}^2) + \rho_{\max}^2 \nonumber \\
&\leq 2 \rho_{\min} \left( \|s\|_1 + \|s\|_2^2 \right) + K \rho_{\min}^2
      + 2 \rho_{\max} \left( \|s\|_2 + \|s\|_2^2 \right) + \rho_{\max}^2 \nonumber \\
&\leq 2 \sqrt{K} \rho_{\min} \cdot \|s\|_2 + 2 \rho_{\min} \cdot \|s\|_2^2 + K \rho_{\min}^2
      + 2 \rho_{\max} \left( \|s\|_2 + \|s\|_2^2 \right) + \rho_{\max}^2 . \label{eqn:almost-done-Qsup-new}
\end{align}

We now turn to bound $\|s\|_2$. First, if $x \in E(p)$, then clearly $\|s\|_2 \leq \sigma$ by definition of $E(p)$. On the other hand, if $x \in E(p')$, then from \pref{lem:s-norm-E-p'-new}, we have
\begin{align*}
\|s\|_2 \leq \sqrt{11 \sigma^2 + 2 K \rho_{\min}^2} .
\end{align*}
To conclude, $\|s\|_2 \leq \sqrt{11 \sigma^2 + 2 K \rho_{\min}^2}$ always holds. 
Using this fact, we bound the terms in \eqref{eqn:almost-done-Qsup-new}, starting with the terms involving $\rho_{\min}$. We have
\begin{align*}
&2 \sqrt{K} \rho_{\min} \cdot \|s\|_2 + 2 \rho_{\min} \cdot \|s\|_2^2 + K \rho_{\min}^2 \\
&\leq 2 \sqrt{K} \rho_{\min} \left( \sqrt{11} \sigma + \sqrt{2 K} \rho_{\min} \right) + 2 \rho_{\min} \left( 11 \sigma^2 + 2 K \rho_{\min}^2 \right) + K \rho_{\min}^2 \\
&\leq 4 K \rho_{\min}^2 + 2 \sqrt{11} \sqrt{K} \rho_{\min} \sigma + 22 \rho_{\min} \sigma^2 + 4 K \rho_{\min}^3 \\
&\leq 6 K \rho_{\min}^2 + 2 \sqrt{11} \sqrt{K} \rho_{\min} \sigma + 22 \rho_{\min} \sigma^2  && (\rho_{\min} \leq 1/2) \\
&\leq 6 K \rho_{\min}^2 + 2 \sqrt{11} \sqrt{K} \rho_{\min} \sigma + 11 \rho_{\min} \sigma  && (\sigma \leq 1/2) \\
&\leq 6 K \rho_{\min}^2 + 15 \sqrt{K} \rho_{\min} \sigma  && (K \geq 2) .
\end{align*}

Next, we bound the terms in \eqref{eqn:almost-done-Qsup-new} involving $\rho_{\max}$ as
\begin{align*}
&2 \rho_{\max} \left( \|s\|_2 + \|s\|_2^2 \right) + \rho_{\max}^2 \\
&\leq 2 \sqrt{11} \rho_{\max} \sigma + 2 \sqrt{2} \sqrt{K} \rho_{\max}  \rho_{\min} + 22 \rho_{\max} \sigma^2 + 4 K \rho_{\max}  \rho_{\min}^2 + \rho_{\max}^2 \\
&\leq \rho_{\max} \sigma \left( 2 \sqrt{11} + 11 + 1 \right) + 2 \sqrt{2} \sqrt{K} \rho_{\max} \rho_{\min} + 4 K \rho_{\max} \rho_{\min}^2  && (\rho_{\max} \leq \sigma \leq 1/2) \\
&\leq 19 \rho_{\max} \sigma + 3 \sqrt{K} \rho_{\min} \sigma + 2 K \rho_{\min}^2  && (\rho_{\max} \leq 1/2) .
\end{align*}
Adding the two results gives
\begin{align*}
&6 K \rho_{\min}^2 + 15 \sqrt{K} \rho_{\min} \sigma
+ 19 \rho_{\max} \sigma + 3 \sqrt{K} \rho_{\min} \sigma + 2 K \rho_{\min}^2 \\
&\leq 8 K \rho_{\min}^2 + 18 \sqrt{K} \rho_{\min} \sigma
+ 19 \rho_{\max} \sigma .
\end{align*}
Hence, it holds that
\begin{align*}
| Q_{p'}(x) - Q_p(x) | 
\leq 16 K \rho_{\min}^2 
     + 36 \sqrt{K} \rho_{\min} \sigma
     + 38 \rho_{\max} \sigma ,
\end{align*}
as desired.
\end{proof}

\begin{lemma} \label{lem:s-norm-E-p'-new}
Assume that $\{p, p'\} = \{\bar{p}_{t-1}, \bar{p}_t\}$. Define $\rho_{\min} = \frac{1}{t-1}$ and $\rho_{\max} = \frac{1}{c_{t-1,i_t}}$. 
Let $x \in E(p')$. Let $s_i = \frac{x_i}{p_i} - 1$ for $i \in [K]$. 
Assume that $\rho_{\max} \leq \sigma \leq 1/2$. 
Then
\begin{align*}
\|s\|_2^2 \leq 11 \sigma^2 + 2 K \rho_{\min}^2 .
\end{align*}
\end{lemma}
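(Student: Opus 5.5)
The plan is to express $s$ in terms of the coordinates of $x$ relative to $p'$, where membership in $E(p')$ is directly usable. Write $x_i/p_i - 1 = (x_i/p'_i)(p'_i/p_i) - 1$. Let $u_i = x_i/p'_i - 1$, so that $\|u\|_2 \le \sigma$ because $x \in E(p')$. With $p'_i = p_i(1+\delta_i)$ this gives
\[
s_i = (1+u_i)(1+\delta_i) - 1 = u_i + \delta_i + u_i\delta_i .
\]
The remaining work is to bound $\|s\|_2^2$ coordinate by coordinate, treating $i \neq i_t$ and $i = i_t$ separately as in \pref{lem:Qsup-new}.

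The key steps are as follows.

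First, invoke \pref{lem:rho-min-max}. It gives $|\delta_i| \le \rho_{\min}$ for $i \ne i_t$ and $|\delta_{i_t}| \le \rho_{\max}$. Both bounds hold regardless of which of $p, p'$ is $\bar p_{t-1}$, since the lemma controls both ratios $p'_i/p_i$ and $p_i/p'_i$.

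Second, for $i \ne i_t$, write $s_i = u_i(1+\delta_i) + \delta_i$. Using $(a+b)^2 \le 2a^2+2b^2$ we get $s_i^2 \le 2(1+\rho_{\min})^2 u_i^2 + 2\rho_{\min}^2$. Summing over the at most $K$ such coordinates yields a contribution of at most $2(1+\rho_{\min})^2 \sum_{i\ne i_t} u_i^2 + 2K\rho_{\min}^2$. This is exactly the source of the $2K\rho_{\min}^2$ term in the statement.

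Third, for $i = i_t$, bound the cross terms by absorbing $\delta_{i_t}$ into $\sigma$, using the hypothesis $\rho_{\max} \le \sigma$. Indeed $|s_{i_t}| \le |u_{i_t}|(1+\rho_{\max}) + \rho_{\max} \le \tfrac32 |u_{i_t}| + \sigma$. Hence $s_{i_t}^2 \le 2\cdot \tfrac94 u_{i_t}^2 + 2\sigma^2$.

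Finally, combine the two pieces. The constant multiplying $\|u\|_2^2 \le \sigma^2$ is at most $\max\{2(1+\rho_{\min})^2, 9/2\}$.

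The one point needing care is that the statement assumes no bound on $\rho_{\min}$. However, $\rho_{\min} = \frac{1}{t-1} \le \frac{1}{c_{t-1,i_t}} = \rho_{\max} \le \sigma \le 1/2$, so $2(1+\rho_{\min})^2 \le 9/2$. The total is then at most $\tfrac92\sigma^2 + 2\sigma^2 + 2K\rho_{\min}^2 \le 11\sigma^2 + 2K\rho_{\min}^2$, which is the claimed bound with room to spare.

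The main obstacle is essentially bookkeeping: making sure the $i_t$ coordinate is handled without a factor of $K$, and that the $\rho_{\min}$ term keeps the coefficient $2K$ rather than picking up extra constants. If the simple split ends up loose somewhere, I would instead use the exact expansion $s_i^2 = (1+\delta_i)^2u_i^2 + 2\delta_i(1+\delta_i)u_i + \delta_i^2$. I would then apply Young's inequality to the cross term only for $i = i_t$, using $\rho_{\max} \le \sigma$.
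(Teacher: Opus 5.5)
Your proof is correct and is essentially the paper's argument: the same rewriting $s_i = (1+\delta_i)u_i + \delta_i$ with $u_i = x_i/p'_i - 1$, the same $(a+b)^2 \le 2a^2 + 2b^2$ split into the $i_t$ coordinate versus the rest, and the same appeal to \pref{lem:rho-min-max}. The differences are only bookkeeping. You use $\|u\|_2^2 \le \sigma^2$ once for all coordinates and $\rho_{\min} \le \rho_{\max} \le 1/2$ to bound $(1+\rho_{\min})^2$, whereas the paper bounds the two groups by $\sigma^2$ separately and expands $(1+\rho_{\min})^2 \le 2 + 2\rho_{\min}^2$; this is why you reach $\tfrac{13}{2}\sigma^2$ rather than $11\sigma^2$.
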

\begin{proof}
For all $i \in [K]$, define $\delta_i$ such that $p'_i = p_i (1+\delta_i)$. Observe that
\begin{align*}
\left| \frac{x_i}{p_i} - 1 \right| 
&= \left| \frac{x_i}{p'_i} \cdot \frac{p'_i}{p_i} - 1 \right| \\
&= \left| \frac{x_i}{p'_i} \left( \frac{p'_i}{p_i} - 1 \right) + \frac{x_i}{p'_i} - 1 \right| \\
&= \left| \frac{x_i}{p'_i} \cdot \delta_i + \frac{x_i}{p'_i} - 1 \right| \\
&\leq |\delta_i| \cdot \left| \frac{x_i}{p'_i} - 1 + 1 \right| 
      + \left| \frac{x_i}{p'_i} - 1 \right| \\
&\leq \left( |\delta_i| + 1 \right) \cdot \left| \frac{x_i}{p'_i} - 1 \right| 
      + |\delta_i| .
\end{align*}

Next, let $i_t$ satisfy $y_t = e_{i_t}$. \pref{lem:rho-min-max} implies for all $i \neq i_t$ that $|\delta_i| \leq \frac{1}{t-1} = \rho_{\min}$, and the same lemma implies that $|\delta_{i_t}| \leq \frac{1}{c_{t-1,i_t}} = \rho_{\max}$. 
Consequently, for $x \in E(p')$, 
\begin{align*}
\|s\|_2^2 
&\leq 2 \sum_{i \neq i_t} 
          (\rho_{\min} + 1)^2 \left( \frac{x_i}{p'_i} - 1 \right)^2 
      + 2 (K - 1) \rho_{\min}^2 \\
&\quad 
      + 2 (\rho_{\max} + 1)^2 \left( \frac{x_{i_t}}{p'_{i_t}} - 1 \right)^2 
      + 2 \rho_{\max}^2 \\
&\leq 2 (\rho_{\min} + 1)^2 \sigma^2 + 2 (K - 1) \rho_{\min}^2
      + 2 (\rho_{\max} + 1)^2 \sigma^2 + 2 \rho_{\max}^2 \\
&\leq 4 \rho_{\min}^2 \sigma^2 + 4 \sigma^2 + 2 (K - 1) \rho_{\min}^2
      + 4 \rho_{\max}^2 \sigma^2 + 4 \sigma^2 + 2 \rho_{\max}^2 \\
&\leq 10 \sigma^2 + 4 \rho_{\min}^2 \sigma^2 + 2 (K - 1) \rho_{\min}^2 + 4 \rho_{\max}^2 \sigma^2 \\
&\leq 11 \sigma^2 + 4 \rho_{\min}^2 \sigma^2 + 2 (K - 1) \rho_{\min}^2 && (\rho_{\max} \leq \sigma \leq 1/2) \\
&\leq 11 \sigma^2 + 2 K \rho_{\min}^2 && (\sigma \leq 1/2) .
\end{align*}
\end{proof}

The following lemma will be used for our results in Appendix~\ref{app:gaussian} (on the performance of Follow-The-Perturbed-Leader with Gaussian noise). 

\begin{lemma}\label{lem:gaussian_tv}
Let $F = \mathcal{N}(\mu, \sigma^2 I_d)$ and $F' = \mathcal{N}(\mu', \sigma^2 I_d)$ be two $d$-dimensional symmetric Gaussians with equal variance $\sigma^2$ and means such that $\|\mu - \mu'\|_2 = \delta \le \Delta$. Then the Total Variation (TV) distance between $F$ and $F'$ is bounded by $TV(F, F') = O(\Delta / \sigma)$.
\end{lemma}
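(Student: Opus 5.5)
The plan is to reduce to a one-dimensional statement by translation and rotation invariance. Let $u = (\mu' - \mu)/\delta$ be the unit vector along the mean difference (if $\delta = 0$ the TV distance is zero and there is nothing to prove). Write $\mu' = \mu + \delta u$. Choose an orthonormal basis whose first vector is $u$. In these coordinates, both $F$ and $F'$ factor as a product of independent coordinate marginals, all with variance $\sigma^2$. The marginals in the $d-1$ directions orthogonal to $u$ coincide. The marginals along $u$ are $\mathcal{N}(0,\sigma^2)$ and $\mathcal{N}(\delta,\sigma^2)$ after translation. Since TV is invariant under the joint measurable bijection given by translation plus orthogonal change of coordinates, and since product measures sharing identical factors have TV equal to that of the differing factors, it suffices to bound the TV distance between $\mathcal{N}(0,\sigma^2)$ and $\mathcal{N}(\delta,\sigma^2)$ on $\R$.

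For the one-dimensional bound I would compute it exactly. The two densities cross at $\delta/2$, so
\[
TV = \Pr\big(|N| \le \delta/(2\sigma)\big) \quad \text{for } N \sim \mathcal{N}(0,1).
\]
This is at most $\frac{2}{\sqrt{2\pi}}\cdot\frac{\delta}{2\sigma} = \frac{\delta}{\sqrt{2\pi}\,\sigma}$, because the standard normal density is bounded by $1/\sqrt{2\pi}$. Since $\delta \le \Delta$, this gives $TV(F,F') \le \Delta/(\sqrt{2\pi}\sigma) = O(\Delta/\sigma)$, with no dependence on $d$.

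As a cross-check, or as an alternative route avoiding the product-measure step, Pinsker's inequality works directly. The KL divergence is $\mathrm{KL}(F\|F') = \delta^2/(2\sigma^2)$, so $TV \le \sqrt{\mathrm{KL}/2} = \delta/(2\sigma)$.

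There is no real obstacle here. The only point needing care is justifying that the orthogonal directions contribute nothing. With the Pinsker route this is automatic, since the KL formula for isotropic Gaussians is dimension-free. I would therefore present the Pinsker argument as the main proof and mention the exact computation as a remark.
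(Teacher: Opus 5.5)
Your proposal is correct. Its first route is exactly the paper's proof. The paper translates and rotates so that $\mu'-\mu$ lies along $e_1$, and factors both Gaussians into one-dimensional marginals so that only $\mathcal{N}(0,\sigma^2)$ versus $\mathcal{N}(\delta,\sigma^2)$ matters. It then uses the crossing point $\delta/2$ to write the TV distance as $2\Phi(\delta/(2\sigma))-1$, and bounds this through the maximum $1/\sqrt{2\pi}$ of the standard normal density (phrased via the mean value theorem), giving $\delta/(\sqrt{2\pi}\,\sigma)$.

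The Pinsker route you say you would lead with is a genuinely different argument. It replaces the geometric reduction and the exact one-dimensional computation with the dimension-free identity $\mathrm{KL}(F\,\|\,F') = \|\mu-\mu'\|_2^2/(2\sigma^2)$. This gains that the orthogonal directions, the product-measure step and the crossing-point analysis never need to be discussed. It costs a slightly worse constant ($1/2$ instead of $1/\sqrt{2\pi}\approx 0.40$) and relies on Pinsker's inequality and the Gaussian KL formula as black boxes. The paper's route, by contrast, is self-contained and computes the TV distance exactly. Since the lemma is only used to get $\|F_{t+1}-F_t\|_{\mathrm{TV}} = O(1/(\sigma t))$ in \pref{lem:stability}, either constant suffices.
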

\begin{proof}
Note that the total variation distance between two probability measures is invariant under isometries (translations and rotations) of the underlying space. We therefore first translate the coordinate system by $-\mu$. This maps our distributions to:
\begin{align*}
    F &\to \mathcal{N}(0, \sigma^2 I_d) \\
    F' &\to \mathcal{N}(\mu' - \mu, \sigma^2 I_d)
\end{align*}

Next, we apply an orthogonal transformation (a rotation) $R$ to the coordinate system such that the vector $\mu' - \mu$ is aligned with the first standard basis vector. That is, $R(\mu' - \mu) = (\delta, 0, \ldots, 0)^T$, where $\delta = \|\mu' - \mu\|_2 \le \Delta$. Because the covariance matrix $\sigma^2 I_d$ is a multiple of the identity, the Gaussian distribution is rotationally invariant. Thus, applying $R$ yields:
\begin{align*}
    F &\to \mathcal{N}(0, \sigma^2 I_d) \\
    F' &\to \mathcal{N}((\delta, 0, \ldots, 0)^T, \sigma^2 I_d)
\end{align*}

Because the covariance matrix is diagonal, these multivariate distributions can be factored into a product of independent 1-dimensional marginals:
\begin{align*}
    F &= \mathcal{N}(0, \sigma^2) \otimes \mathcal{N}(0, \sigma^2)^{\otimes d-1} \\
    F' &= \mathcal{N}(\delta, \sigma^2) \otimes \mathcal{N}(0, \sigma^2)^{\otimes d-1}
\end{align*}

The TV distance between two product distributions that are identical in all but one component is simply the TV distance of that differing component. Therefore, we have successfully reduced the problem to one dimension:
\begin{equation}
    TV(F, F') = TV\Big(\mathcal{N}(0, \sigma^2), \mathcal{N}(\delta, \sigma^2)\Big)
\end{equation}

Assume without loss of generality that $\delta > 0$ (if $\delta = 0$, the distance is trivially 0). Let $p(x)$ and $q(x)$ be the probability density functions of $\mathcal{N}(0, \sigma^2)$ and $\mathcal{N}(\delta, \sigma^2)$ respectively:
\begin{align*}
    p(x) &= \frac{1}{\sqrt{2\pi}\sigma} \exp\left(-\frac{x^2}{2\sigma^2}\right) \\
    q(x) &= \frac{1}{\sqrt{2\pi}\sigma} \exp\left(-\frac{(x-\delta)^2}{2\sigma^2}\right)
\end{align*}

The TV distance for continuous distributions is defined as:
\begin{equation}
    TV(p, q) = \frac{1}{2} \int_{-\infty}^\infty |p(x) - q(x)| \, dx
\end{equation}

To evaluate this integral without absolute values, we find the point where $p(x) = q(x)$. Because both distributions have the same variance, their densities cross exactly once, at the midpoint of their means: $x = \delta / 2$.
For $x < \delta / 2$, $x$ is closer to $0$ than to $\delta$, meaning $p(x) > q(x)$.
For $x > \delta / 2$, $p(x) < q(x)$. We can therefore rewrite the TV distance as:
\begin{equation}
    TV(p, q) = \int_{-\infty}^{\delta/2} \big( p(x) - q(x) \big) \, dx
\end{equation}

Let $\Phi(z)$ represent the cumulative distribution function (CDF) of the standard normal distribution $\mathcal{N}(0,1)$. Evaluating the integrals gives:
\begin{align*}
    \int_{-\infty}^{\delta/2} p(x) \, dx &= \Phi\left(\frac{\delta/2}{\sigma}\right) = \Phi\left(\frac{\delta}{2\sigma}\right) \\
    \int_{-\infty}^{\delta/2} q(x) \, dx &= \Phi\left(\frac{\delta/2 - \delta}{\sigma}\right) = \Phi\left(-\frac{\delta}{2\sigma}\right)
\end{align*}

Since the standard normal distribution is symmetric, $\Phi(-z) = 1 - \Phi(z)$. Thus:
\begin{align}
    TV(p, q) &= \Phi\left(\frac{\delta}{2\sigma}\right) - \left[1 - \Phi\left(\frac{\delta}{2\sigma}\right)\right] \nonumber \\
             &= 2\Phi\left(\frac{\delta}{2\sigma}\right) - 1 \nonumber \\
             &= 2 \left( \Phi\left(\frac{\delta}{2\sigma}\right) - \frac{1}{2} \right) \nonumber \\
             &= 2 \left( \Phi\left(\frac{\delta}{2\sigma}\right) - \Phi(0) \right)
\end{align}

By the Mean Value Theorem, there exists some $c \in (0, \frac{\delta}{2\sigma})$ such that:
\begin{equation*}
    \Phi\left(\frac{\delta}{2\sigma}\right) - \Phi(0) = \frac{\delta}{2\sigma} \cdot \phi(c)
\end{equation*}
where $\phi$ is the standard normal PDF. Because $\phi(z)$ reaches its absolute maximum of $\frac{1}{\sqrt{2\pi}}$ at $z=0$, we have $\phi(c) < \frac{1}{\sqrt{2\pi}}$. 

Plugging this upper bound back into our TV distance equation yields:
\begin{equation}
    TV(p, q) < 2 \left( \frac{\delta}{2\sigma} \cdot \frac{1}{\sqrt{2\pi}} \right) = \frac{\delta}{\sqrt{2\pi}\sigma}
\end{equation}

Finally, since we are given that $\delta \le \Delta$, we conclude:
\begin{equation}
    TV(F, F') \le \frac{\Delta}{\sqrt{2\pi}\sigma} = O\left(\frac{\Delta}{\sigma}\right)
\end{equation}
This completes the proof.
\end{proof}

\section{Follow The Perturbed Leader with Gaussian Noise}\label{app:gaussian}

Our use of self-concordant noise is essential for ensuring that the predictions made by the learner are valid probability distributions. However, many loss functions are still perfectly well defined for ``improper'' distributions $P_t$ lying outside the simplex (e.g., that have negative probabilities or probabilities that do not sum to one). For example, the squared loss $\ellsq(p, y) = \|p-y\|^2$ is well-defined for all $p \in \mathbb{R}^K$ (not just $p \in \Delta_K$). In this appendix, we show that if we allow the learners to make improper predictions, it is possible to replace the self-concordant noise in Algorithm~\ref{alg:multiclass} with simple Gaussian noise and obtain a slightly better dependence on $K$. 

To do so, we must work with extended variants of our previous classes of losses. We define an \emph{extended loss} $\ell$ to be any function $\ell: \mathbb{R}^K \times \{e_1, e_2, \dots, e_K\} \rightarrow \mathbb{R}$, and an \emph{extended proper loss} to be any $\ell$ satisfying $p \in \argmin_{p' \in \mathbb{R}^K}\E_{Y \sim p}[\ell(p', Y)]$ for all $p \in \Delta_K$. We will say an extended proper loss is \emph{bounded} if $|\ell(p, y)| \leq 1$ for all $y$ and for all $p \in \mathbb{R}^{K}$. Similarly, we say an extended proper loss is \textit{$\beta$-smooth} if it satisfies the $\beta$-smooth relation for all $p, q \in \mathbb{R}^K$. Let $\bar{\mathcal{L}}$ denote the set of all bounded extended proper losses, and let $\bar{\mathcal{S}}_{\beta}$ denote the set of all $\beta$-smooth proper losses.

\begin{algorithm}[ht]
   \caption{Multiclass U-Calibration With Gaussian Noise}
   \label{alg:multiclass_gaussian}
   {\bfseries Initialize:} $\sigma > 0$, $\pbar_{0}$ is the uniform distribution.
   
   \For{$t=1, \ldots, T$}{

   Sample $Z_t \sim \mathcal{N}(0, \sigma^2 I_K)$.

   Predict $P_t = \pbar_{t-1} + Z_t$.

   Observe label $y_t$ and update the empirical average $\pbar_{t} = \frac{1}{t}\sum_{\tau=1}^{t} y_\tau$.

   }
\end{algorithm}

In the following theorem, we show that Follow-The-Perturbed-Leader with Gaussian noise (Algorithm~\ref{alg:multiclass_gaussian}) obtains $O(\beta\log T)$ regret for $\beta$-smooth extended proper losses and $O(\sqrt{KT \log T})$ regret for bounded extended proper losses.

\begin{theorem}\label{thm:multiclass_gaussian}
Against an oblivious adversary, the Follow-The-Perturbed-Leader algorithm with variance $\sigma^2$ Gaussian noise (Algorithm~\ref{alg:multiclass_gaussian}) guarantees the following worst-case regret bounds:

\begin{enumerate}
    \item For any $\beta$-smooth extended proper loss $\ell \in \bar{\mathcal{S}}_{\beta}$, $\Reg_{\ell} = O(\beta K\sigma^2T + \beta\log T)$.
    \item For any bounded extended proper loss $\ell \in \bar{\mathcal{L}}$, $\Reg_{\ell} = O(\sigma^{-1}\log T + \sigma KT)$.
\end{enumerate}

\noindent
Choosing $\sigma = \sqrt{(\log T)/(KT)}$, we obtain the guarantees:

\begin{enumerate}
    \item For any $\beta$-smooth extended proper loss $\ell \in \bar{\mathcal{S}}_{\beta}$, $\Reg_{\ell} = O(\beta\log T)$.
    \item For any bounded extended proper loss $\ell \in \bar{\mathcal{L}}$, $\Reg_{\ell} = O(\sqrt{KT \log T})$.
\end{enumerate}

\end{theorem}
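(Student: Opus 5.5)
The plan mirrors the analysis of \pref{alg:multiclass}: bound the smooth case by comparing to FTL, and bound the general case by a stability/BTPL decomposition. Gaussian noise is isotropic and independent of $\pbar_{t-1}$, which makes both halves simpler.

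\emph{Smooth losses.} Write $\E[\ell(P_t,y_t)-\ell(\pbar_{t-1},y_t)] \le \E\inner{\nabla_p\ell(\pbar_{t-1},y_t),Z_t} + \frac{\beta}{2}\E\|Z_t\|_2^2 = \frac{\beta}{2}K\sigma^2$, using smoothness on all of $\R^K$ and $\E Z_t=0$. Summing over $t$ gives the extra $O(\beta K\sigma^2T)$. For FTL itself I would invoke the $O(\beta\log T)$ bound of \citet{luo2024optimal}, as in \pref{prop:multiclass_smooth}; FTL only plays points of $\Delta_K$, where the extended loss restricts to an $O(\beta)$-Lipschitz proper loss.

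\emph{Bounded losses.} I would decompose the regret as (FTPL $-$ BTPL) $+$ (BTPL regret).

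\textbf{Stability term.} By \pref{lem:tv-stability}, which is valid for any loss bounded by $1$, this term is at most $2\sum_t \mathrm{TV}(F_t,F_{t+1})$. Here $F_t=\mathcal N(\pbar_{t-1},\sigma^2 I)$ and $F_{t+1}=\mathcal N(\pbar_t,\sigma^2 I)$, so \pref{lem:gaussian_tv} gives $\mathrm{TV} = O(\|\pbar_t-\pbar_{t-1}\|_2/\sigma) = O(1/(t\sigma))$. Summing yields $O(\sigma^{-1}\log T)$.

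\textbf{BTPL term.} I would redo \pref{lem:BTPL-direct} over $\R^K$. Extended properness gives $P_{t+1} = \pbar_t + Z_{t+1} \in \argmin_{p\in\R^K}\inner{\lossvec_p, \pbar_t+Z_{t+1}}$, provided the loss's affine extension in the second argument is used. This needs care: properness is only stated for $\pbar_t+Z_{t+1}$ inside $\Delta_K$. This is the step I expect to be the main obstacle. My first attempt is the following. Since BTL compares against the leader, it suffices that $P_{t+1}$ is at least as good as any fixed comparator, \emph{in particular} $\pbar_T$, on the perturbed cumulative vector. If that argmin property fails outside the simplex, I would instead couple a shared noise sequence and apply the leader argument only with the comparator $\pbar_T\in\Delta_K$.

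Granting the BTL step, the bound is $2\sum_t\|tZ_{t+1}-(t-1)Z_t\|_1$. Because the Gaussian noise does not depend on $\pbar$, I can couple $Z_t\equiv Z$ to a single draw; marginals are unchanged and the adversary is oblivious. The sum then telescopes to $\sum_t\|Z\|_1$, and $\E\|Z\|_1 = O(K\sigma)$, giving $O(\sigma KT)$.

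Adding the two pieces gives $O(\sigma^{-1}\log T+\sigma KT)$. Substituting $\sigma=\sqrt{\log T/(KT)}$ gives $O(\sqrt{KT\log T})$ for bounded losses. For smooth losses the same choice gives $\beta K\sigma^2T=\beta\log T$, so the regret is $O(\beta\log T)$.
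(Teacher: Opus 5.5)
Your route matches the paper's. Part 1 is \pref{prop:multiclass_smooth_gaussian}: the FTL bound plus $\tfrac{\beta}{2}\E\|Z_t\|_2^2=\tfrac{\beta}{2}K\sigma^2$. The stability term is \pref{lem:stability}, via \pref{lem:tv-stability} and \pref{lem:gaussian_tv}. The BTPL term is \pref{lem:BTPL-direct-extended}, via \pref{lem:BTPL-direct} with one shared draw $n$, so each $\|tZ_{t+1}-(t-1)Z_t\|_1=\|n\|_1$. Those pieces are fine.

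The problem is the step you flagged and then granted, and your fallback does not repair it. The Be-the-Leader induction needs, at every round, that the leader $P_{t+1}$ is no worse than the \emph{next} leader $P_{t+2}$ on the perturbed cumulative vector $t(\pbar_t+Z_{t+1})$. Comparing only against the fixed point $\pbar_T$ does not telescope. Even that single comparison, $\inner{\lossvec_{P_{t+1}}-\lossvec_{\pbar_T},\, q}\le 0$ with $q=\pbar_t+Z_{t+1}$, is not implied. Extended properness only gives $\inner{\lossvec_{p},p}\le\inner{\lossvec_{p'},p}$ for $p\in\Delta_K$ and $p'\in\R^K$. But $q\notin\Delta_K$ almost surely, since $q$ has a Lebesgue density on $\R^K$ and $\Delta_K$ lies in a hyperplane. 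Coupling the noise differently cannot help, because the failure is pointwise in each realization.

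In fact the gap cannot be closed under the stated definition of $\bar{\mathcal{L}}$, and the paper's proof has the same hole. \pref{lem:BTPL-direct-extended} invokes \pref{lem:BTPL-direct}, whose hypothesis $P_t\in\Delta_K$ fails and whose first step uses properness at $P_{t+1}$.

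Here is a concrete counterexample. Let $\ell(q,y)=\tfrac12\|q-y\|_2^2$ for $q\in\Delta_K$ and $\ell(q,y)=1$ for $q\notin\Delta_K$.
This loss takes values in $[0,1]$. It is extended proper: for $p\in\Delta_K$, every off-simplex report has expected loss $1\ge\E_{Y\sim p}\ell(p,Y)$.
Yet \pref{alg:multiclass_gaussian} plays off the simplex almost surely. With $y_t\equiv e_1$ its regret is exactly $T$, contradicting part 2 for $\sigma=\sqrt{(\log T)/(KT)}$.

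Part 2 becomes provable only under a stronger assumption, for example $q\in\argmin_{p\in\R^K}\inner{\lossvec_p,q}$ for every $q\in\R^K$. This holds, for instance, when the downstream agent best-responds to whatever vector it receives. Under it, the argmin identity in \pref{lem:BTPL-direct} holds verbatim over $\R^K$. The rest of your argument, which coincides with the paper's, then goes through unchanged.
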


Establishing the first part of the theorem is a straightforward modification of Proposition~\ref{prop:multiclass_smooth} from our existing analysis for self-concordant noise.

\begin{proposition}\label{prop:multiclass_smooth_gaussian}
For any $\beta>0$ and any extended loss function $\ell\in \bar{\calS}_\beta$, \pref{alg:multiclass_gaussian} achieves 
$\Reg_\ell = O(\beta\log T + \beta K \sigma^2 T)$.
\end{proposition}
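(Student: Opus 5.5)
We mirror the proof of \pref{prop:multiclass_smooth} and compare \pref{alg:multiclass_gaussian} to Follow-The-Leader. Write the regret as two pieces. The first is the regret of FTL, which predicts $\pbar_{t-1}$ at time $t$, measured on the same loss. The second is the excess $\E\sbr{\sum_{t=1}^T \ell(P_t,y_t)-\ell(\pbar_{t-1},y_t)}$. The comparator $\inf_{p^\star}$ can be taken over $\Delta_K$ as in the definition of $\Reg_\ell$. By properness it is attained at $\pbar_T$, since $\sum_t \ell(p,y_t) = T\,\E_{Y\sim\pbar_T}[\ell(p,Y)]$ is minimized at $p=\pbar_T$ even over $\R^K$.

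\textbf{FTL term.} Restrict $\ell\in\bar\calS_\beta$ to $\Delta_K\times\{e_1,\dots,e_K\}$. This restriction is a proper loss, $\beta$-smooth on $\Delta_K$, and bounded there.

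1. I would first deduce Lipschitzness. As in the earlier proof, smoothness on a domain of $O(1)$ diameter together with boundedness gives $O(\beta)$-Lipschitzness on $\Delta_K$. If needed, I would strengthen this with the standard fact that a bounded $\beta$-smooth function has gradient norm $O(\beta+1)$ on the bounded set.
2. Then I would invoke the result of \citet{luo2024optimal} cited in \pref{prop:multiclass_smooth}. It gives $O(\beta\log T)$ regret for FTL, and FTL's predictions all lie in $\Delta_K$, so the restricted loss is the right one to use.

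\textbf{Excess term.} Here smoothness is needed on all of $\R^K$, which is exactly why the extended class $\bar\calS_\beta$ was defined: $P_t=\pbar_{t-1}+Z_t$ may leave the simplex.

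1. Apply the smoothness inequality at the base point $\pbar_{t-1}$:
\[
\ell(P_t,y_t)-\ell(\pbar_{t-1},y_t)\le \inner{\nabla_p\ell(\pbar_{t-1},y_t),Z_t}+\tfrac{\beta}{2}\|Z_t\|_2^2 .
\]
2. Take expectation over $Z_t$. The noise $Z_t$ is independent of $y_t$ and of the past under an oblivious adversary, and it has mean zero, so the linear term vanishes.
3. The quadratic term gives $\tfrac{\beta}{2}\E\|Z_t\|_2^2=\tfrac{\beta}{2}K\sigma^2$.
4. Summing over $t$ yields $O(\beta K\sigma^2T)$.

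Adding the two terms gives $\Reg_\ell=O(\beta\log T+\beta K\sigma^2T)$.

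\textbf{Main obstacle.} The only delicate point is the gradient at $\pbar_{t-1}$. It must exist, which follows from differentiability implicit in the smoothness definition. It must also be finite, and this holds even at boundary points of the simplex because the loss is smooth on all of $\R^K$. Everything else is routine, since no Hessian or relative-smoothness argument is required.
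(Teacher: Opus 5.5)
Your proposal is correct and follows the paper's proof essentially verbatim. Like the paper, you bound the FTL regret by $O(\beta\log T)$ via \citet{luo2024optimal}, apply the $\beta$-smoothness inequality (valid on all of $\R^K$ for the extended class) at $\pbar_{t-1}$, drop the linear term using $\E[Z_t]=0$, and sum $\tfrac{\beta}{2}\E\|Z_t\|_2^2=\tfrac{\beta}{2}K\sigma^2$ over $t$; your remarks on the comparator and on why smoothness is needed off the simplex simply make explicit what the paper leaves implicit.
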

\begin{proof}
As before, according to~\citet{luo2024optimal}, the regret of the FTL strategy (that is, predict $\pbar_{t-1}$ at time $t$) is $O(\beta\log T)$. 

It thus remains to analyze the difference between the loss of \pref{alg:multiclass_gaussian} and that of FTL, i.e., $\E\sbr{\sum_{t=1}^T \ell(P_t, y_t) - \ell(\pbar_{t-1}, y_t)}$.
To do so, we plug in the definition of $P_t=\pbar_{t-1}+Z_t$ and use the smoothness property:
\begin{align*}
\E_{Z_t}\sbr{\ell(P_t, y_t) - \ell(\pbar_{t-1}, y_t)}
&\leq \E_{Z_t}\sbr{\inner{\nabla\ell_p(\pbar_{t-1}, y_t), Z_t}} + \frac{1}{2}\beta \E_{Z_t}\sbr{\|Z_t\|_2^2} \\
&=  \frac{1}{2}\beta \E_{Z_t}\sbr{\|Z_t\|_2^2} && (\E_{Z_t}\sbr{Z_t} = 0) \\
&= \beta K \sigma^2 / 2 .
\end{align*}

\noindent
Summing over $t$ finishes the proof.
\end{proof}

As before, to show the second part of the theorem, we will first relate the regret of FTPL to BTPL (establishing stability), and then bound the regret of BTPL. To bound the gap between the regret of FTPL and BTPL, it is enough to note that the TV distance between the actions taken by FTPL and BTPL is small, and so there exists a coupling between their actions with low disagreement. 

\begin{lemma}[Stability]\label{lem:stability}
Fix an oblivious adversary and any extended proper loss $\ell \in \bar{\mathcal{L}}$. For each $t\in[T]$, let $P_t \sim \mathcal{N}(\bar{p}_{t-1}, \sigma^2 I_K)$ and $P_{t+1} \sim \mathcal{N}(\bar{p}_{t}, \sigma^2 I_K)$. Then
\begin{align*}
\E\sbr{\sum_{t=1}^T \ell(P_t, y_t) - \ell(P_{t+1}, y_t)} = O(\sigma^{-1}\log T).
\end{align*}
\end{lemma}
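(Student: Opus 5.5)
The plan is to reduce the stability term to a sum of total variation distances and then use the closed-form Gaussian TV bound. First, exactly as in \pref{lem:tv-stability}, I will write $F_t$ and $F_{t+1}$ for the laws $\mathcal{N}(\bar p_{t-1},\sigma^2 I_K)$ and $\mathcal{N}(\bar p_t,\sigma^2 I_K)$. Since $\ell$ is an extended loss bounded by $1$ on all of $\R^K$, the same argument applies verbatim with integration over $\R^K$ instead of $\Delta_K$:
\[
\E\sbr{\ell(P_t,y_t)-\ell(P_{t+1},y_t)} = \int_{\R^K}\ell(p,y_t)\bigl(f_t(p)-f_{t+1}(p)\bigr)dp \le 2\|F_{t+1}-F_t\|_{\mathrm{TV}}.
\]
The oblivious adversary is what lets us treat $y_t$ as fixed, so that only the marginal laws matter.

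Second, I will apply \pref{lem:gaussian_tv} with $\mu=\bar p_{t-1}$ and $\mu'=\bar p_t$. This requires bounding the mean shift. Since $\bar p_t-\bar p_{t-1} = (y_t-\bar p_{t-1})/t$, we have
\[
\|\bar p_t-\bar p_{t-1}\|_2 \le \tfrac{1}{t}\|y_t-\bar p_{t-1}\|_2 \le \tfrac{\sqrt2}{t},
\]
because $y_t$ and $\bar p_{t-1}$ both lie in the simplex, which has $\ell_2$-diameter $\sqrt 2$. This holds for every $t\ge1$, including $t=1$, where $\bar p_0$ is uniform. \pref{lem:gaussian_tv} therefore gives $\|F_{t+1}-F_t\|_{\mathrm{TV}} = O(1/(\sigma t))$, with no dependence on $K$. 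The dimension-freeness comes from the rotation invariance of isotropic Gaussians.

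Third, summing over $t$ gives $\sum_{t=1}^T O(1/(\sigma t)) = O(\sigma^{-1}\log T)$, which is the claim.

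I do not expect a real obstacle. The only point needing care is the first step: \pref{lem:tv-stability} was stated for densities on $\Delta_K$, so I must note that its proof uses only boundedness of $\ell$ by $1$ over the support of $F_t$ and $F_{t+1}$, which is all of $\R^K$ here. That is exactly why the boundedness assumption on extended proper losses is stated over all $p\in\R^K$.
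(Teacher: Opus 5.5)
Your proposal is correct and follows the paper's proof essentially step for step. You use the TV-stability reduction of \pref{lem:tv-stability}, which, as you rightly note, carries over verbatim to $\R^K$ because extended losses are bounded on all of $\R^K$; then you feed the mean-shift bound $\|\bar p_t-\bar p_{t-1}\|_2=O(1/t)$ into \pref{lem:gaussian_tv} and sum a harmonic series (your constant $\sqrt{2}/t$ is slightly sharper than the paper's $2/t$, but this is immaterial).
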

\begin{proof}
Fix an adversarially chosen sequence of outcomes $y_1, y_2, \dots, y_T$.
Let $F_t$ and $F_{t+1}$ denote the laws of $P_t$ and $P_{t+1}$ respectively. By the same total-variation stability argument as in Lemma~\ref{lem:ellipsoid-stability-new}, we have
\begin{align*}
\E\sbr{\ell(P_t, y_t) - \ell(P_{t+1}, y_t)} \leq 2\cdot \|F_{t+1} - F_t\|_{\mathrm{TV}} .
\end{align*}
Since $\| \bar{p}_{t} - \bar{p}_{t-1} \| \leq 2/t$, Lemma~\ref{lem:gaussian_tv} gives $\|F_{t+1} - F_t\|_{\mathrm{TV}} = O(1/(\sigma t))$. 
Summing over $t \in [T]$ yields $\sum_{t=1}^T \|F_{t+1} - F_t\|_{\mathrm{TV}} = O(\sigma^{-1}\log T)$, and thus establishes the theorem statement.
\end{proof}

We next bound the regret of BTPL.

\begin{lemma}[Regret of BTPL]\label{lem:BTPL-direct-extended}
Against an oblivious adversary, for any extended proper loss $\ell \in \bar{\mathcal{L}}$, the predictions $\{P_{t+1}\}_{t=1}^T$ of Algorithm~\ref{alg:multiclass_gaussian} satisfy
\begin{align*}
\sum_{t=1}^T \bigl( \ell(P_{t+1}, y_t) - \ell(\bar{p}_{T}, y_t) \bigr) = O(\sigma K T).
\end{align*}
\end{lemma}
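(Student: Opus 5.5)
The plan is to reuse the argument of \pref{lem:BTPL-direct}, now for extended proper losses, and then bound the resulting perturbation increments under a suitable coupling of the Gaussian noise.

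\textbf{Step 1 (BTL with perturbed outcomes).} For an extended proper loss, define $\lossvec_p \in \R^K$ with $\langle \lossvec_p, e_j\rangle = \ell(p,e_j)$ for every $p\in\R^K$. The loss $\ell(p,q)$ then extends affinely in its second argument to $\langle \lossvec_p, q\rangle$. Properness over $\R^K$ says that $\bar p_t \in \argmin_{p'\in\R^K}\langle \lossvec_{p'},\bar p_t\rangle$. The Gaussian prediction $P_{t+1}=\bar p_t+Z_{t+1}$ need not lie in $\Delta_K$, so it need not be a minimizer of $\langle \lossvec_{p'}, \bar p_t + Z_{t+1}\rangle$. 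I would therefore not use the argmin characterization of $P_{t+1}$ directly. Instead I would track the extra term explicitly, writing
\[
\sum_{t=1}^T \ell(P_{t+1},y_t)-\ell(\bar p_T,y_t) = \sum_{t=1}^T \bigl[\ell(P_{t+1},y_t)-\ell(\bar p_t,y_t)\bigr] + \sum_{t=1}^T \bigl[\ell(\bar p_t,y_t)-\ell(\bar p_T,y_t)\bigr].
\]
The second sum is $\le 0$ by the Be-the-Leader lemma applied to the unperturbed outcomes, using properness over $\R^K$. It then suffices to bound the first sum.

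\textbf{Step 2 (fallback via the $\nu_t$ telescoping).} The first sum in Step 1 has no obvious deterministic bound, so my main plan is to mimic \pref{lem:BTPL-direct}. I would view $P_{t+1}$ as a leader for the perturbed cumulative outcome $\sum_{s\le t}(y_s+\nu_s-\nu_{s-1})$ with $\nu_s=sZ_{s+1}$. This gives regret at most $2\sum_t\|\nu_t-\nu_{t-1}\|_1$, provided the leader property holds over $\R^K$. The leader property needs the perturbed average $\bar p_t+Z_{t+1}$ to be in $\Delta_K$ for properness to apply. It is not, so this is the main obstacle. To handle it, I would restrict attention to the expected version, which is the only form needed for the theorem. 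I would also couple the noise as $Z_t\equiv Z$ for all $t$, a single Gaussian draw; for an oblivious adversary this does not change the marginals. Then $\nu_t-\nu_{t-1}=Z$ for $t\ge 2$ and $\nu_1 = Z$. If the leader property can be arranged, for example by restricting $\ell$ to act only through its values at points where properness controls it, this gives $\sum_t\|\nu_t-\nu_{t-1}\|_1 = T\|Z\|_1$. Its expectation is $T\sqrt{2/\pi}\,\sigma K = O(\sigma KT)$, matching the claim including the factor $K$ coming from $\E\|Z\|_1$.

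\textbf{Step 3.} If the exact leader property fails outside the simplex, my fallback is to bound each term $\ell(P_{t+1},y_t)-\ell(\bar p_t,y_t)$ using the comparison with the shifted comparator $\bar p_T+Z$. The common-shift coupling makes both the algorithm and the comparator move by the same vector. After summing, the residual difference is controlled by $2T\|Z\|_1$ through boundedness of $\lossvec$ in $\|\cdot\|_\infty$. Taking expectations again yields $O(\sigma K T)$.
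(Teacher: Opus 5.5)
You have located the right obstacle, but the proposal does not get past it, and under the paper's definition of $\bar{\mathcal L}$ it cannot be gotten past. The leader property in your Step 2 is the statement $P_{t+1}\in\argmin_{p\in\R^K}\inner{\lossvec_p,\pbar_t+Z_{t+1}}$, i.e.\ properness at the point $\pbar_t+Z_{t+1}$. Almost surely this point is not even on the affine hull of $\Delta_K$, while membership in $\bar{\mathcal L}$ constrains $\ell$ only through properness at points of $\Delta_K$. Passing to expectations or coupling $Z_t\equiv Z$ does not change this.

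Your Step 3 is not an argument either. Boundedness of $\lossvec_p$ in $\|\cdot\|_\infty$ controls perturbations of the \emph{outcome} argument (via the affine extension, $\inner{\lossvec_p-\lossvec_{p'},v}\le 2\|v\|_1$). By contrast, $\ell(\pbar_t+Z,y)-\ell(\pbar_t,y)$ and $\ell(\pbar_T+Z,y)-\ell(\pbar_T,y)$ are perturbations of the \emph{prediction} argument, on which no regularity is assumed.

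Concretely, let $\ell(p,y)=0$ for $p\in\Delta_K$ and $\ell(p,y)=1$ for $p\in\R^K\setminus\Delta_K$. This loss is bounded and extended proper, since it is nonnegative and vanishes on $\Delta_K$. Yet $Z_{t+1}$ has a density on $\R^K$, so $P_{t+1}\notin\Delta_K$ almost surely. Hence the BTPL sum equals $T$ almost surely, which is not $O(\sigma KT)$ once $\sigma K=o(1)$ (e.g.\ at $\sigma=\sqrt{(\log T)/(KT)}$). So the first sum in your Step 1 cannot be bounded, and the statement as posed is false.

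The paper's own proof is exactly your Step 2 with the common coupling $Z'_t=n$: it invokes \pref{lem:BTPL-direct} and obtains $2T\,\E\|n\|_1=O(\sigma KT)$. But \pref{lem:BTPL-direct} assumes $P_t\in\Delta_K$ precisely so that properness yields the leader property. The paper's argument therefore has the same hole you flagged, and the same loss also defeats part 2 of \pref{thm:multiclass_gaussian}.

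What rescues the argument is to strengthen the class so that the leader property holds by hypothesis. For example, require $q\in\argmin_{p\in\R^K}\inner{\lossvec_p,q}$ for every $q\in\R^K$, not just $q\in\Delta_K$; the spherical score $\ell(p,e_j)=-p_j/\|p\|_2$ (with, say, $\ell(0,\cdot)=0$) is a bounded example. Then Be-the-Leader over the domain $\R^K$ applies to the outcomes $y_s+\nu_s-\nu_{s-1}$, whose partial sums are $tP_{t+1}$. Your coupling then gives the bound $2T\|Z\|_1$, i.e.\ $O(\sigma KT)$ in expectation, as you computed.
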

\begin{proof}
Applying Lemma~\ref{lem:BTPL-direct}, we have that 
\begin{align*}
\sum_{t=1}^T \bigl( \ell(P_{t+1}, y_t) - \ell(\bar{p}_{T}, y_t) \bigr)  \leq 2 \sum_{t=1}^T \|t Z'_{t+1} - (t-1) Z'_{t}\|_1 
\end{align*}
for any sequence of random variables $Z'_{t}$ such that each $Z'_{t}$ has the same distribution as $Z_{t}$ (i.e., any coupling of the $Z_{t}$). We will choose $Z'_{t}$ by sampling a single noise vector $n \sim N(0, \sigma^2 I_K)$ and letting $Z'_{t} = n$ for all $t$. We therefore have
\begin{align*}
\sum_{t=1}^T \|t Z'_{t+1} - (t-1) Z'_{t}\|_1 = T \|n\|_{1}.
\end{align*}

Taking expectation over $n$ and using $\E\sbr{\|n\|_1} = O(K\sigma)$ results in the desired claim.
\end{proof}

We can now combine the above lemmas to prove Theorem~\ref{thm:multiclass_gaussian}.

\begin{proof}[Proof of Theorem~\ref{thm:multiclass_gaussian}]
Part~(1) is exactly Proposition~\ref{prop:multiclass_smooth_gaussian}.

For Part~(2), fix any $\ell \in \bar{\mathcal{L}}$. Let $\{P_t\}_{t=1}^T$ denote the predictions of $\FTPL$ and let $\{P_{t+1}\}_{t=1}^T$ denote the corresponding predictions of $\BTPL$ (as in Lemma~\ref{lem:stability}). Then
\begin{align*}
\E\sbr{\sum_{t=1}^T \bigl( \ell(P_t, y_t) - \ell(\bar p_T, y_t) \bigr)}
&= \E\sbr{\sum_{t=1}^T \bigl( \ell(P_t, y_t) - \ell(P_{t+1}, y_t) \bigr)}
 + \E\sbr{\sum_{t=1}^T \bigl( \ell(P_{t+1}, y_t) - \ell(\bar p_T, y_t) \bigr)} \\
&\le O(\sigma^{-1}\log T) + O(\sigma K T),
\end{align*}
where we applied Lemma~\ref{lem:stability} and Lemma~\ref{lem:BTPL-direct-extended}.
Choosing $\sigma = \sqrt{(\log T)/(K T)}$ yields $\Reg_{\ell} = O(\sqrt{K T\log T})$.
\end{proof}

\end{document}